\documentclass[pdflatex,sn-mathphys-num]{sn-jnl}

\usepackage{graphicx}%
\usepackage{multirow}%
\usepackage{amsmath,amssymb,amsfonts}%
\usepackage{amsthm}%
\usepackage[title]{appendix}%
\usepackage{textcomp}%
\usepackage{manyfoot}%
\usepackage{booktabs}%
\usepackage{algorithm}%
\usepackage{algorithmicx}%
\usepackage{algpseudocode}%
\usepackage{listings}%
\usepackage{hyperref}

\usepackage[table]{xcolor}

\definecolor{TableHead}{HTML}{D9D5C9}
\definecolor{TableStripe}{HTML}{F2EFE8}
\definecolor{TableRule}{HTML}{2B2B2B}

\theoremstyle{thmstyleone}%
\newtheorem{theorem}{Theorem}%
\newtheorem{proposition}[theorem]{Proposition}%

\theoremstyle{thmstyletwo}%

\theoremstyle{thmstylethree}%
\renewcommand{\thefigure}{S\arabic{figure}}
\renewcommand{\thetable}{S\arabic{table}}
\renewcommand{\thesection}{S\arabic{section}}
\renewcommand{\theequation}{S\arabic{equation}}

\newtheorem{lemma}[theorem]{Lemma}%
\newtheorem{corollary}[theorem]{Corollary}%
\begin{document}

\title[Scalar-pathway fidelity in equivariant potentials]{Scalar-pathway fidelity improves physical accuracy in short-range equivariant interatomic potentials}

\author[1]{\fnm{Jia} \sur{Bi}}\email{Jia.Bi@stfc.ac.uk}
\author*[2]{\fnm{Alin Marin} \sur{Elena}}\email{alin-marin.elena@stfc.ac.uk}
\author*[1,3]{\fnm{Samuel} \sur{Pinilla}}\email{samuel.pinilla@diamond.ac.uk}

\affil[1]{\orgdiv{Science and Technology Facilities Council}, \orgaddress{\street{Harwell Campus}, \city{Didcot}, \postcode{OX11 0QX}, \state{Oxfordshire}, \country{United Kingdom}}}

\affil[2]{\orgdiv{Science and Technology Facilities Council}, \orgaddress{\street{Keckwick Lane}, \city{Daresbury}, \postcode{WA4 4AD}, \state{Cheshire}, \country{United Kingdom}}}

\affil[3]{\orgdiv{Diamond Light Source}, \orgaddress{\street{Harwell Science and Innovation Campus}, \city{Didcot}, \postcode{OX11 0QX}, \state{Oxfordshire}, \country{United Kingdom}}}

\abstract{Accurate interatomic potentials enable molecular dynamics of materials, molecules, and interfaces beyond density-functional-theory length and time scales. Equivariant neural network potentials have improved the representation of local geometry. However, their deployable energy surfaces ultimately manifest through invariant scalar channels, whose aggregation and spectral resolution remain comparatively underexamined. Here we use Physics-Aware Neighborhood (PAN) pooling and Physics-Guided Spectral (PGS) mixers as controlled scalar-pathway probes: lightweight, symmetry-preserving modifications that act only on \(\ell=0\) channels while leaving the equivariant tensor backbone unchanged. Using MACE as a high-body-order mechanistic scaffold, PAN adds coordination-sensitive amplitude modulation, whereas PGS augments edge and readout scalar features with radial and tapered spectral bases. Across metallic Ag, covalent Si, a short-range ionic LiF/Li--F subset, and MD17/rMD17 molecules, this scalar-pathway correction reduces MACE force errors by 22--27\% and energy errors by 19--22\%; on systems with stress labels, stress errors decrease by 27--28\%, at approximately 5\% additional inference-FLOPs cost. Directionally consistent gains in Allegro and NequIP further indicate that the correction is portable across distinct short-range equivariant backbones, although effect sizes remain architecture-dependent. These results identify scalar-pathway fidelity as a practical design dimension for short-range equivariant interatomic potentials.}
\keywords{
Equivariant neural networks,
Interatomic potentials,
Physics-guided learning,
Molecular dynamics
}

\maketitle

\section{Introduction}
\label{sec_intro}
Atomic-scale simulations are foundational to materials science, chemistry, and condensed-matter physics because they connect microscopic structure to defect formation, phase transformations, transport, catalysis, and mechanical response. The predictive power of these simulations rests on the fidelity of the potential energy surface (PES), from which interatomic forces, stress, and long-time dynamical behavior are derived. First-principles methods, rooted in density-functional theory (DFT), provide a quantum-mechanical route to this surface but remain costly for routine simulations beyond hundreds of atoms and picosecond timescales~\cite{hohenberg1964inhomogeneous,kohn1965self,perdew1996generalized,blochl1994projector,kresse1996efficient,car1985unified,marx2009ab}. Classical force fields and molecular-dynamics algorithms enable much larger simulations. However, fixed analytic forms can struggle with complex bonding, metallic delocalization, anharmonicity, and changes in local coordination~\cite{allen1987computer,frenkel2002understanding,daw1984embedded,stillinger1985computer,tersoff1988new,brenner2002second,van2001reaxff,plimpton1995fast}.

Machine-learned interatomic potentials (MLIPs) bridge this divide by learning the PES directly from electronic-structure data, bringing near-\textit{ab initio} accuracy to molecular dynamics at a fraction of the cost~\cite{behler2007generalized,bartok2010gaussian,smith2017ani,thompson2015spectral,shapeev2016mtp,zhang2018deep,dral2020quantum,unke2021machine}. This progress has expanded from system-specific potentials to large materials datasets, foundation-style atomistic models and benchmark workflows for discovery and simulation~\cite{chanussot2021open,chen2022m3gnet,deng2023chgnet,merchant2023scaling,batatia2024foundation,yang2024mattersim,neumann2024orb,park2024sevennet,riebesell2025matbench}. A major architectural driver has been geometric deep learning~\cite{bronstein2021geometric}: equivariant networks represent local atomic geometry with features that transform predictably under rotations and reflections, allowing scalar energies and vector forces to be learned while respecting Euclidean symmetry~\cite{thomas2018tensor,anderson2019cormorant,fuchs2020se3,geiger2022e3nn}. In atomistic modeling, this direction includes SchNet~\cite{schutt2017schnet}, DimeNet/DimeNet++~\cite{klicpera2020dimenet,klicpera2020dimenetpp}, GemNet/GemNet-OC~\cite{gasteiger2021gemnet}, PaiNN~\cite{schutt2021equivariant}, NewtonNet~\cite{haghighatlari2022newtonnet}, SpookyNet~\cite{unke2021spookynet}, NequIP~\cite{batzner2022e3}, Allegro~\cite{musaelian2023learning}, EquiformerV2~\cite{liao2024equiformerv2}, and the atomic cluster expansion (ACE) and its neural extensions~\cite{drautz2019atomic,kovacs2023neural}. Among short-range equivariant MLIPs, MACE, a higher-order equivariant message-passing architecture related to neural atomic cluster expansions,~\cite{batatia2022mace,birch2024mace} is particularly notable for capturing high-body-order geometric correlations while retaining analytic energy, force, and stress derivatives needed for stable molecular dynamics. We use MACE as the primary mechanistic scaffold because this architecture imposes a stringent scalar-compression test: high-body-order equivariant tensor features are generated upstream, while invariant scalar channels still mediate the deployment-relevant energy, force, and stress predictions. Allegro and NequIP are then used as architecture-transfer controls rather than leaderboard baselines to test whether the same scalar-channel correction remains beneficial when the surrounding equivariant architecture changes.

These architectures share a common structural feature that has received comparatively little scrutiny. Although equivariant layers propagate rich tensorial representations of local geometry, deployment-relevant predictions are ultimately obtained by projecting this information onto invariant scalar channels. This is a representation-compression problem: if the scalar pathway lacks geometric adaptivity or spectral resolution, information encoded by the equivariant backbone can be weakened before it reaches the supervised energy output, regardless of how expressive the tensor representation is. Analogous scalar readout stages appear in many equivariant models. However, in this work, we restrict the evidence and claims to short-range equivariant MLIPs, where the scalar pathway is directly tied to forces, stress, and molecular-dynamics observables.

In short-range equivariant MLIPs, this scalar pathway takes a concrete, analytically tractable form. Two systemic limitations arise. First, scalar messages are commonly aggregated by an unweighted permutation-invariant sum. The messages being summed are themselves learned distance-, species-, and feature-dependent quantities; the limitation is that the final scalar aggregation lacks an explicit adaptive amplitude gate conditioned on local coordination, density, or geometric distortion. This can contribute to geometric under-resolution in heterogeneous environments such as surfaces, defect-adjacent regions, strained neighborhoods, and disordered phases~\cite{zeni2023stability,fu2022forces,gilmer2017neural,gasteiger2021gemnet,alon2021on}. Second, scalar nonlinearities such as SiLU exhibit a strong low-frequency spectral bias~\cite{rahaman2019spectral,tancik2020fourier,basri2020frequency,sitzmann2020implicit,sirignano2018dnnfail}, suppressing high-frequency structure in the PES and contributing to characteristic physical errors: softened short-range repulsion, red-shifted optical phonon modes, and energy drift under finite-timestep microcanonical integration~\cite{wang2021implicitbias}. In the short-range MACE setting examined here, these limitations lead to four distinct, quantifiable physical signatures that we identify and systematically target.

The scope of this test is deliberately short-range. PAN--PGS is not designed to replace explicit electrostatics, charge equilibration, polarizability, dispersion, or reactive chemistry; instead, it targets the residual short-range error that remains after an equivariant local representation has been built and before that information is expressed through the scalar energy pathway. The intervention is therefore complementary to long-range or reactive MLIP extensions rather than an alternative to them~\cite{thole1981molecular,rappe1991charge,grimme2010consistent,ko2021fourthgen,wang2021dplr,kapil2022nequip_longrange}.

Rather than treating the scalar pathway as a fixed implementation detail, we reframe it as a controllable design axis: the distributed set of scalar operations through which an equivariant representation is aggregated, spectrally transformed, and converted into the total energy and its derivatives. The central question we pose is: \textit{to what extent do residual physical errors in short-range MLIPs arise in this scalar pathway, rather than from the tensor backbone alone?} To probe this, we introduce two lightweight, symmetry-preserving modules that act exclusively on $\ell{=}0$ channels and leave all $O(3)$-equivariant tensor-product operations unchanged. Physics-Aware Neighborhood (PAN) pooling replaces uniform aggregation with amplitude modulation by sigmoid gates derived from invariant pairwise descriptors. Physics-Guided Spectral (PGS) mixers inject physically motivated spectral bases---a Fourier--Bessel expansion on pair distances at the edge level, and a tapered Exponential-of-Semicircle expansion on per-atom invariant latents at readout. PAN and PGS therefore serve as controlled probes of a shared scalar interface: the MACE experiments isolate the mechanism under a fixed high-body-order scaffold, whereas the Allegro and NequIP insertions test architecture-level portability across different scalar-processing interfaces.

Benchmarked across metallic Ag, covalent Si, a short-range ionic LiF/Li--F subset, and MD17/rMD17 molecular datasets~\cite{Christensen2020rmd17, Chmiela2017MD17}, integrating PAN and PGS into MACE yields systematic reductions in energy, force, stress, and dynamical errors at approximately 5\% additional inference-FLOPs overhead, reproducibly across five independent training seeds. The same improvements transfer directionally to two further equivariant backbones, Allegro and NequIP, under matched protocols. We therefore present scalar-pathway fidelity not as an alternative to tensor expressivity, but as an orthogonal design dimension that shapes how much of an equivariant representation is preserved when it becomes an energy landscape.

\section{Results}
\label{sec:results}

\subsection{A scalar compression interface in short-range equivariant MLIPs}
\label{subsec:framework}

\begin{figure*}[!t]
    \centering
    \includegraphics[width=0.98\textwidth]{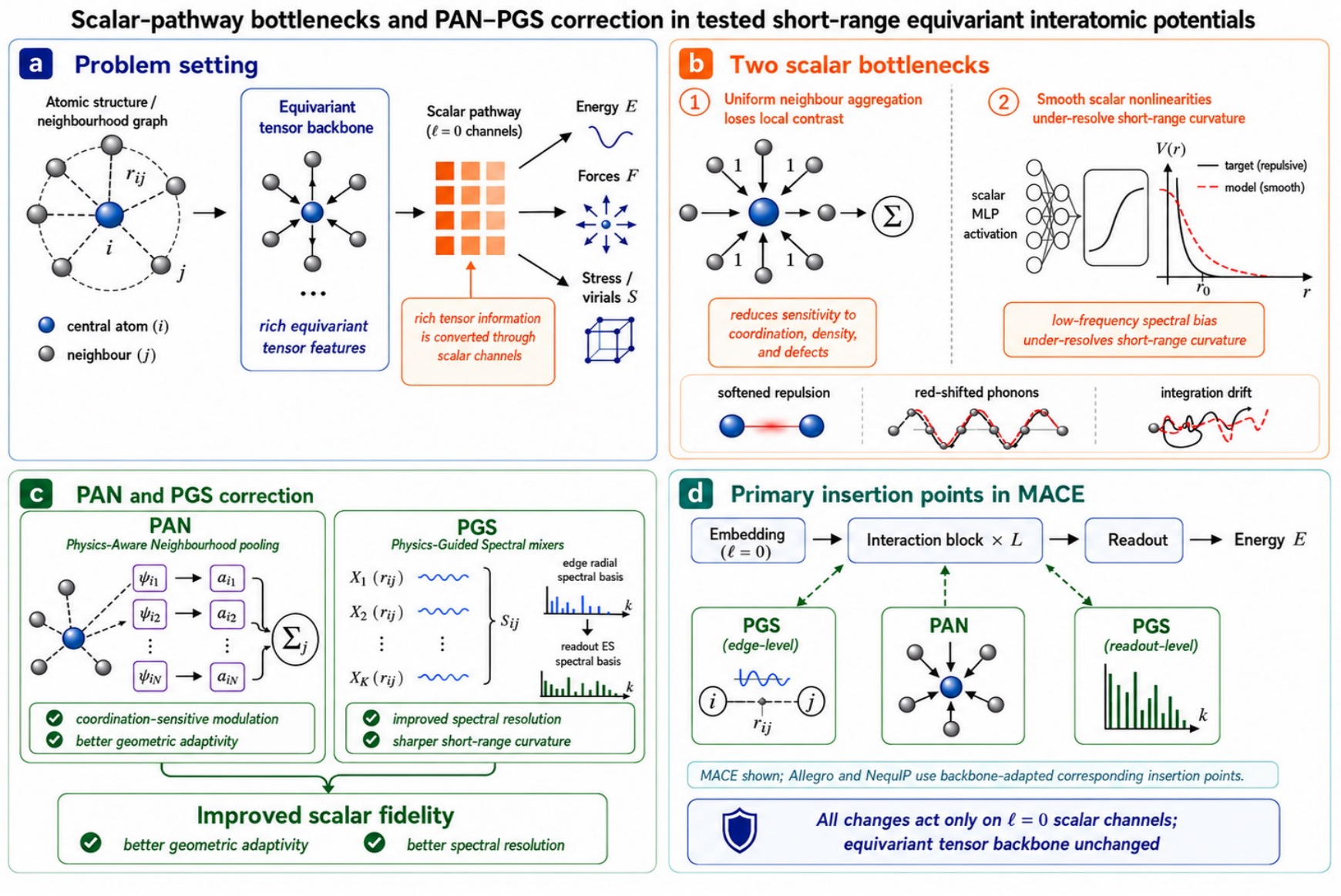}
    \caption{\textbf{Scalar-pathway compression interface and PAN--PGS correction in the tested short-range equivariant setting.}
    \textbf{(a)} Short-range equivariant interatomic potentials build rich tensorial representations, but the final energy and its derivatives are computed from invariant scalar channels.
    \textbf{(b)} Two scalar bottlenecks considered in this work: uniform neighbor aggregation, which weakens sensitivity to coordination, density, and defect structure, and smooth scalar nonlinearities, which under-resolve short-range curvature through low-frequency spectral bias.
    \textbf{(c)} PAN introduces geometry-aware scalar aggregation, whereas PGS enriches scalar spectral resolution.
    \textbf{(d)} PAN and PGS are inserted only on $\ell{=}0$ scalar channels in MACE, leaving the equivariant tensor-product backbone unchanged.}
    \label{fig:pan_pgs_framework}
\end{figure*}

Figure~\ref{fig:pan_pgs_framework} translates the scalar-pathway hypothesis into the controlled experimental design used throughout this work. The primary MACE experiments implement the most stringent version of this test: high-body-order equivariant tensor features are constructed upstream, while the deployed energy, forces, and stress are recovered through invariant \(\ell=0\) scalar channels. PAN and PGS are inserted only at this scalar interface, leaving the equivariant tensor-product backbone, interaction depth, cutoff, training budget, and data splits unchanged.

The two interventions target complementary points in this interface. PAN tests whether scalar aggregation benefits from explicit coordination- and
distortion-sensitive amplitude modulation. PGS tests whether scalar readout benefits from a richer short-range spectral basis at the edge and final invariant readout levels. Because both modules preserve permutation symmetry, \(O(3)\)-equivariance of forces and differentiability of the scalar energy, any measured change can be attributed to how invariant scalar information is aggregated and spectrally transformed rather than to changes in the equivariant backbone itself.

This design yields three linked tests. First, scalar-pathway fidelity should improve energy, force, and stress accuracy without the cost of enlarging the tensor backbone. Second, PAN and PGS should show separable signatures: PAN should contribute most in distorted or under-coordinated environments, whereas PGS should improve short-range repulsive curvature and stiff local modes. Third, the correction should attenuate physical observables controlled by these scalar failures, including softened short-range repulsion, red-shifted optical modes, finite-step NVE drift, and liquid over-approach. The following sections evaluate these predictions in sequence.

\subsection{Scalar-channel corrections: accuracy gains, minimal overhead}
\label{subsec:headline_results}

\begin{figure*}[!t]
    \centering
    \includegraphics[width=0.98\textwidth]{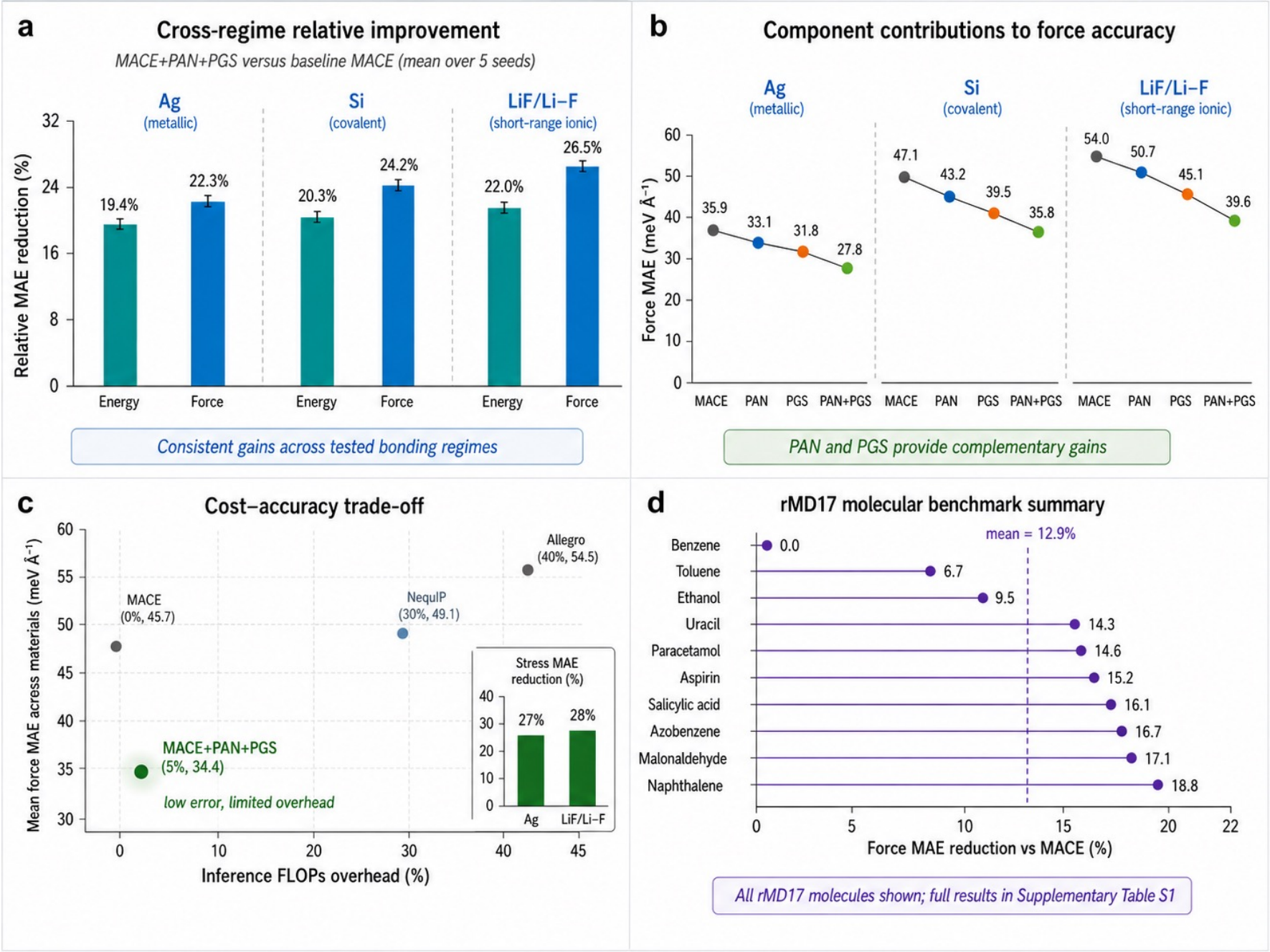}
    \caption{\textbf{PAN--PGS improves short-range accuracy across materials and molecular benchmarks with limited overhead.}
    \textbf{(a)} Relative reductions in energy and force MAE for MACE+PAN+PGS compared with baseline MACE across Ag, Si, and the LiF/Li--F short-range ionic subset. Bars show mean reductions over five training seeds; error bars indicate standard deviation. Absolute baseline force MAE values for the same systems are 35.9, 47.1, and 54.0~meV/\AA, respectively, reduced to 27.9, 35.8, and 39.6~meV/\AA\ after the PAN--PGS correction. \textbf{(b)} Component ablation for force MAE showing the effect of PAN, PGS, and their combination under the same MACE backbone. \textbf{(c)} Cost--accuracy trade-off using mean force MAE across Ag, Si, and LiF/Li--F versus inference FLOPs overhead. The inset shows the stress MAE in meV/\AA$^3$ on datasets with available stress labels (Ag and LiF/Li--F). \textbf{(d)} rMD17 molecular benchmark summary showing force-MAE reduction for MACE+PAN+PGS relative to baseline MACE across all ten rMD17 molecules. All comparisons in this figure are within-backbone, controlled in-house comparisons; external literature baselines are reported separately in Supplementary Table~\ref{tab:literature_ref} and follow different training and reporting protocols.}
    \label{fig:headline_results}
\end{figure*}

If scalar-pathway fidelity is a genuine design axis, it should yield measurable accuracy gains without relying on a larger tensor backbone or a substantially higher evaluation cost. We first asked whether targeted scalar-channel corrections translate into such gains across chemically distinct short-range materials. Across metallic Ag, covalent Si, and the short-range ionic LiF/Li--F subset, MACE+PAN+PGS systematically reduced both energy and force errors relative to baseline MACE (Fig.~\ref{fig:headline_results}(a); Table~\ref{tab:cross_system_mae_md17style}). Force-MAE reductions were 22.3\% for Ag, 24.0\% for Si, and 26.7\% for LiF/Li--F, with corresponding energy-MAE reductions of 19.4\%, 20.3\%, and 22.0\%. All comparisons were conducted under matched cutoffs, training budgets, and dataset splits, so that the scalar-pathway modification is the only intended architectural change within the shared short-range modeling protocol. Capacity expansion of the tensor backbone alone did not reproduce this effect: widened, deepened, and larger-radial-basis MACE controls reduced force MAE by only $4$--$7\%$, compared with $22$--$27\%$ for PAN+PGS at substantially lower parameter and FLOPs overhead ($1.03$--$1.05\times$ versus $1.4$--$1.7\times$ parameters and $1.1$--$1.5\times$ FLOPs; Supplementary Table~\ref{tab:capacity_matched_mace}). For the LiF/Li--F subset, a stricter Materials-Project-identifier-grouped split increased the absolute errors, as expected for the harder evaluation, but preserved the relative MACE+PAN+PGS force-MAE reduction at 26.5\% (Supplementary Table~\ref{tab:lif_grouped_split}), supporting the interpretation that the relative improvement is not a configuration-level split artifact.

\begin{table*}[t]
\caption{\textbf{Architecture-transfer test of scalar-pathway correction across
three short-range equivariant backbones.}
Energy and force MAEs are reported for Allegro, NequIP, and MACE before and after the insertion of architecturally analogous PAN+PGS scalar-pathway modifications. The controlled comparison is the within-backbone change from baseline to PAN+PGS; absolute errors across backbones are not interpreted as parameter-matched leaderboard rankings. Values are mean \(\pm\) standard deviation over five independent training seeds. Energies are in meV/atom, and forces are in meV/\AA. The LiF/Li--F entry corresponds to the short-range ionic subset filtered from MPtrj; grouped-split and computational-cost controls are reported in Supplementary Tables~\ref{tab:lif_grouped_split} and~\ref{tab:cost_comparison}.}
\label{tab:cross_system_mae_md17style}

\setlength{\tabcolsep}{0.8pt}
\renewcommand{\arraystretch}{0.8}
\arrayrulecolor{TableRule}

\resizebox{\textwidth}{!}{%
\begin{tabular}{@{}llcccccc@{}}
\toprule
\rowcolor{TableHead}
\textbf{System} & \textbf{Qty.}
& \textbf{Allegro}
& \shortstack{\textbf{Allegro}\\\textbf{+PAN+PGS}}
& \textbf{NequIP}
& \shortstack{\textbf{NequIP}\\\textbf{+PAN+PGS}}
& \textbf{MACE}
& \shortstack{\textbf{MACE}\\\textbf{+PAN+PGS}} \\
\midrule

\rowcolor{TableStripe}
Ag (metallic) & E
& $1.92 \pm 0.11$ & $1.59 \pm 0.09$
& $1.61 \pm 0.08$ & $1.32 \pm 0.07$
& $1.39 \pm 0.06$ & $\mathbf{1.12 \pm 0.05}$ \\

\rowcolor{TableStripe}
& F
& $41.2 \pm 1.6$  & $33.8 \pm 1.2$
& $37.4 \pm 1.2$  & $30.6 \pm 1.0$
& $35.9 \pm 0.9$  & $\mathbf{27.9 \pm 0.7}$ \\

\addlinespace[2pt]

Si (covalent) & E
& $3.51 \pm 0.17$ & $2.85 \pm 0.13$
& $2.83 \pm 0.13$ & $2.29 \pm 0.10$
& $2.41 \pm 0.10$ & $\mathbf{1.92 \pm 0.08}$ \\

& F
& $58.2 \pm 2.4$  & $47.5 \pm 1.7$
& $51.3 \pm 1.8$  & $41.8 \pm 1.4$
& $47.1 \pm 1.4$  & $\mathbf{35.8 \pm 1.0}$ \\

\addlinespace[2pt]

\rowcolor{TableStripe}
LiF/Li--F     & E
& $4.23 \pm 0.21$ & $3.45 \pm 0.16$
& $3.62 \pm 0.16$ & $2.93 \pm 0.13$
& $3.09 \pm 0.12$ & $\mathbf{2.41 \pm 0.10}$ \\

\rowcolor{TableStripe}
& F
& $64.1 \pm 2.7$  & $52.7 \pm 2.0$
& $58.6 \pm 2.1$  & $47.6 \pm 1.7$
& $54.0 \pm 1.6$  & $\mathbf{39.6 \pm 1.2}$ \\
\bottomrule
\end{tabular}%
}
\arrayrulecolor{black}
\end{table*}

Component ablations isolate the distinct contributions of the two modules (Fig.~\ref{fig:headline_results}(b)). PAN alone reduced force error across all three materials, consistent with improved scalar aggregation in geometrically heterogeneous local environments. PGS alone gave a larger reduction, consistent with enhanced representation of short-range PES curvature. The combined MACE+PAN+PGS achieved the lowest force MAE in every material: 27.9~meV/\AA\ for Ag, 35.8~meV/\AA\ for Si, and 39.6~meV/\AA\ for LiF/Li--F. This consistent ordering supports the interpretation that PAN and PGS act as complementary rather than interchangeable corrections.

We next asked whether the effect was tied to MACE or reflected a portable scalar-channel intervention. We therefore inserted PAN+PGS into Allegro and NequIP at architecturally analogous scalar-processing locations, using the same PAN descriptors, PGS basis sizes, datasets, splits, and five-seed protocol (Table~\ref{tab:cross_system_mae_md17style}). These experiments are not cross-backbone leaderboard comparisons; the controlled signal is the within-backbone change induced by the scalar-pathway correction. PAN+PGS reduced force MAE by 17--19\% in both Allegro and NequIP, compared with 22--27\% in MACE. We interpret this pattern as architecture-dependent portability: the sign of the effect is conserved across three distinct short-range equivariant backbones, whereas the magnitude is largest in MACE, where high-body-order tensor features impose the strongest compression demand on the scalar output pathway.

The accuracy gains come at a limited computational cost. MACE+PAN+PGS occupies the low-error, low-overhead region of the cost--accuracy frontier, adding approximately 5\% to inference FLOPs relative to baseline MACE (Fig.~\ref{fig:headline_results}(c)). On systems with available stress labels, MACE+PAN+PGS reduced stress MAE from $11.8 \pm 0.6$ to $8.6 \pm 0.4$~meV/\AA$^3$ for Ag (27\% reduction) and from $17.4 \pm 0.9$ to $12.6 \pm 0.6$~meV/\AA$^3$ for the LiF/Li--F subset (28\% reduction), where uncertainties are standard deviations over five independent training seeds. The full stress comparison across MACE-family variants is shown in the inset of Fig.~\ref{fig:headline_results}(c) and tabulated in Supplementary Table~\ref{tab:stress_mae}. Stress metrics are not reported for Si, as the public silicon dataset provides only energies and forces; the stress loss weight was accordingly set to zero for this system.

The same pattern extends to molecular benchmarks at shorter length scales. Across all ten rMD17 molecules, MACE+PAN+PGS reduced force MAE by an average of 13.9\% relative to baseline MACE (Fig.~\ref{fig:headline_results}(d)). Substantial gains were observed across several flexible and anharmonic molecules, while benzene showed negligible improvement. This hierarchy is physically interpretable: PAN and PGS deliver the greatest benefit when the scalar pathway must faithfully represent high-curvature, multi-well, or strongly anharmonic local structure--precisely the regimes where uniform aggregation and spectrally limited nonlinearities are most deficient. Where the baseline scalar representation is already adequate, the corrections are correspondingly modest.

\subsection{Geometric and spectral corrections operate via distinct scalar pathways}
\label{subsec:mechanism_results}

\begin{figure*}[!t]
    \centering
    \includegraphics[width=0.98\textwidth]{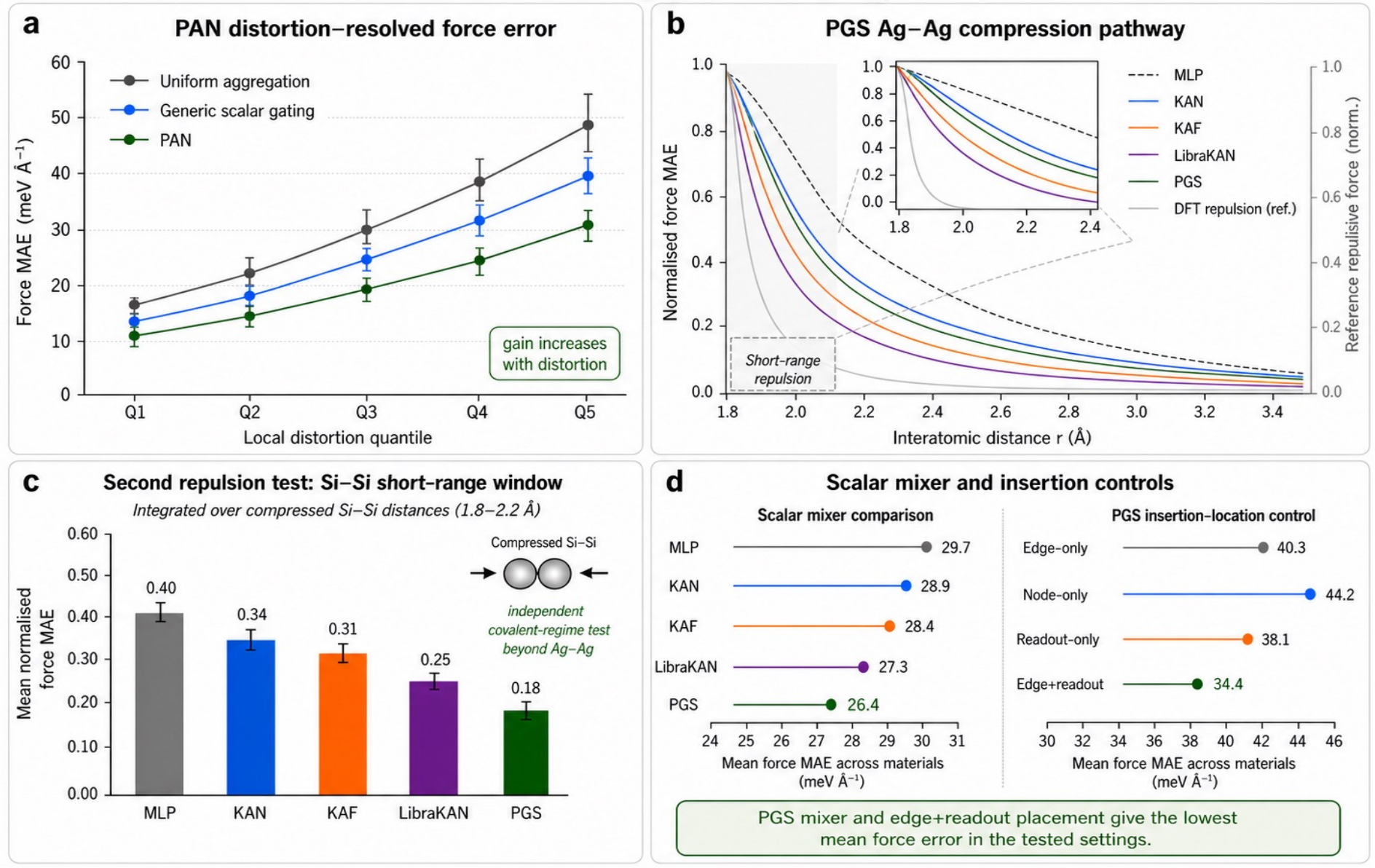}
    \caption{\textbf{PAN and PGS act through separable scalar mechanisms.}
    \textbf{(a)} Distortion-resolved force error for uniform aggregation, a generic scalar-gating control, and PAN. Error bars indicate standard deviation over five seeds. \textbf{(b)} Ag--Ag compression pathway showing normalized force MAE across scalar mixers, together with the reference repulsive-force profile. The shaded region marks the short-range repulsive regime. \textbf{(c)} Independent Si--Si short-range-window test. Values show mean normalized force MAE integrated over compressed Si--Si distances, with error bars indicating standard deviation over five seeds. \textbf{(d)} Scalar mixer and PGS insertion-location controls summarized as mean force MAE across Ag, Si, and the LiF/Li--F short-range ionic subset.}
    \label{fig:mechanism_results}
\end{figure*}

We next asked whether the improvements align with the intended mechanisms. PAN is designed to make scalar aggregation responsive to local geometric distortion. Force error increased from low- to high-distortion quantiles for all aggregation schemes, but PAN remained lowest across the range (Fig.~\ref{fig:mechanism_results}(a)). A generic scalar-gating control outperformed uniform aggregation, indicating that learned neighbor weighting is useful. PAN provided an additional reduction, especially in the most distorted local environments, showing that physically informed local descriptors add information beyond a generic gate. This trend also persisted under a stronger structural-class shift: when Ag surface or defect configurations were excluded from training and used only for testing, MACE+PAN+PGS retained force-MAE reductions of $34.8\%$ and $34.3\%$, respectively (Supplementary Table~\ref{tab:ag_heldout_classes}), supporting that the gain is not limited to random configuration-level interpolation but is most pronounced in the under-coordinated and defect-adjacent environments that motivate PAN.

PGS targets the complementary spectral limitation. Along an Ag--Ag compression pathway, all scalar mixers showed their largest errors in the short-range repulsive regime, where the force changes rapidly with distance (Fig.~\ref{fig:mechanism_results}(b)). PGS had the lowest normalized force error in this region and followed the reference repulsive-force profile more closely than the other tested scalar mixers. This supports the interpretation that spectral enrichment of scalar features improves the representation of steep short-range curvature.

To test whether this behavior is specific to Ag, we repeated the repulsion-focused analysis on compressed Si--Si distances (Fig.~\ref{fig:mechanism_results}(c)). The integrated normalized force error decreased from MLP to KAN, KAF, and LibraKAN, and was lowest for PGS. The Si--Si test, therefore, provides a covalent-regime counterpart to the Ag--Ag compression pathway, supporting the view that PGS improves short-range spectral resolution rather than fitting a material-specific artifact.

The control summaries further separate the effect of the mixer from that of the insertion location. Under matched PAN pooling, PGS gave the lowest mean force error among the tested scalar mixers (Fig.~\ref{fig:mechanism_results}(d), left). In the placement control, edge-only and readout-only PGS each improved over weaker placements, whereas the edge+readout configuration performed best (Fig.~\ref{fig:mechanism_results}(d), right). This result aligns with the architectural roles of the two PGS locations: edge-level PGS refines radial short-range information before tensor-product coupling. At the same time, readout-level PGS enriches the final invariant latent before energy prediction.

The controls in Fig.~\ref{fig:mechanism_results} make three simpler explanations of the observed gain less likely. First, the generic scalar-gating control in panel a indicates that PAN's improvement is not adequately explained by adding any learned gate, because the gain over the generic gate is recovered only when the gate uses physically informed local descriptors. Second, the MLP/KAN/KAF/LibraKAN comparison in panels (b)--(c) indicates that PGS's improvement is not adequately explained by simply increasing scalar nonlinearity capacity, because all mixers were matched in capacity and only the spectral form of the mixer correlated with short-range repulsion fidelity. Third, the placement control in panel (d) indicates that the gain is not adequately explained by inserting additional parameters near the readout, because both edge-only and readout-only placements were tested, and the edge+readout configuration outperformed each in isolation. The combined evidence supports a mechanistic, rather than purely capacity-based, interpretation of the scalar-pathway correction.

\subsection{Scalar-pathway fidelity transfers to physical observables}
\label{subsec:physical_fidelity}

\begin{figure*}[!t]
    \centering
    \includegraphics[width=0.98\textwidth]{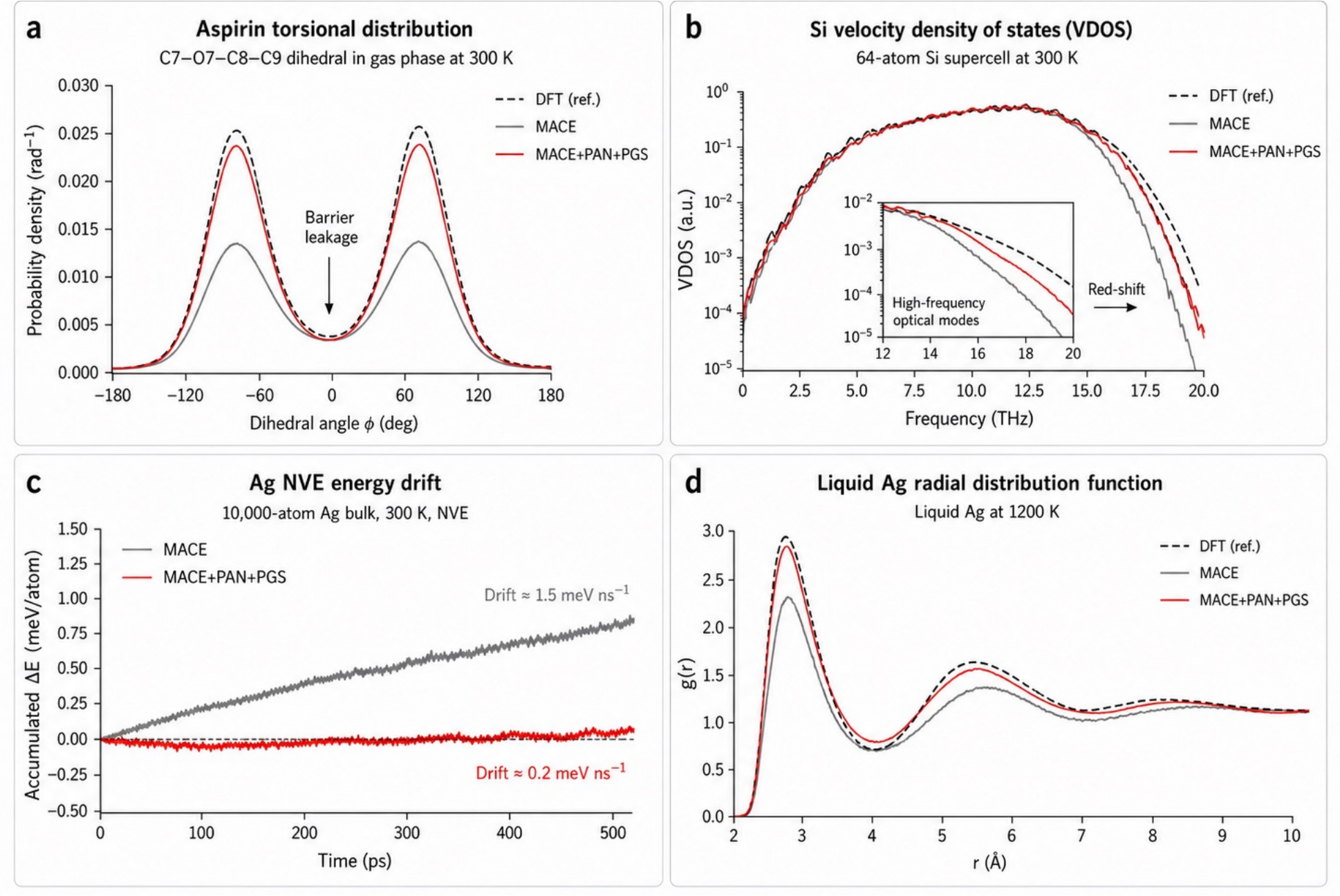}
    \caption{\textbf{Scalar-pathway correction improves observables controlled by stiff local forces.}
    \textbf{(a)} Aspirin C7--O7--C8--C9 torsional distribution at 300~K for DFT, baseline MACE and MACE+PAN+PGS. \textbf{(b)} Vibrational density of states of a 64-atom Si supercell at 300~K. The inset highlights the high-frequency optical-mode region. \textbf{(c)} Total-energy drift during a 500~ps NVE simulation of a 10{,}000-atom Ag bulk system at 300~K. Panel \textbf{(c)} shows a representative seed; aggregate drift across five seeds is reported in the main text. \textbf{(d)} Liquid Ag radial distribution function at 1{,}200~K.}
    \label{fig:physical_fidelity}
\end{figure*}

Static energy and force errors are necessary but insufficient evidence that a scalar-pathway bottleneck carries physical consequences. To close this gap, we selected four observables that correspond directly to the four failure modes predicted by the scalar-bottleneck hypothesis outlined in Section~\ref{subsec:framework}, rather than serving as generic accuracy checks. Each panel in Fig.~\ref{fig:physical_fidelity} maps one specific scalar-pathway deficiency to a measurable dynamical or structural quantity, and reports the distance to the DFT reference as mean $\pm$ standard deviation across the same five independent training seeds used for energy and force evaluation.

\medskip
\noindent\textbf{Spectral bias on anharmonic torsions.}
The first signature is barrier leakage in flexible anharmonic molecules---the dynamical fingerprint of low-frequency spectral bias in the scalar pathway. A spectrally limited scalar mixer can over-smooth the steep PES curvature near torsional barriers, redistributing probability density into low-probability barrier regions. In aspirin at 300~K, baseline MACE produced a markedly flattened C7--O7--C8--C9 torsional distribution with anomalous excess probability near the barrier top (Fig.~\ref{fig:physical_fidelity}(a)). The Kullback--Leibler divergence $D_{\mathrm{KL}}(p_{\mathrm{model}}\|p_{\mathrm{DFT}})$ decreased from $0.31 \pm 0.04$ nats for baseline MACE to $0.09 \pm 0.02$ nats for MACE+PAN+PGS---a roughly threefold reduction concentrated in the barrier region, consistent with improved representation of high-curvature torsional structure in the scalar readout.

\medskip
\noindent\textbf{Spectral bias on stiff vibrational modes.}
The second signature is a systematic red-shift of high-frequency optical phonons, arising when smooth scalar nonlinearities under-resolve the curvature of stiff covalent bonds. In bulk Si, baseline MACE displaced the dominant optical band in the vibrational density of states (VDOS) to anomalously low frequencies (Fig.~\ref{fig:physical_fidelity}(b)). MACE+PAN+PGS reduced the mean absolute peak-position error from $1.42\pm0.12$~THz to $0.31\pm0.07$~THz, and reduced the integrated $L_1$ error over the optical window ($14$--$18$~THz) by a factor of $4.1$---from $0.094\pm0.008$ to $0.023\pm0.004$ in normalized units. This fourfold improvement in VDOS fidelity is consistent with the PGS spectral enrichment, which improves the representation of short-range curvature information that smooth nonlinearities suppress.

\medskip \noindent\textbf{Finite-step energy drift in long-time integration.}
The third signature is secular energy drift under finite-timestep microcanonical integration. We emphasize that forces are obtained by automatic differentiation of the scalar energy (Eq.~\ref{eq:forces_stress}), so the force field is conservative by construction; residual drift therefore reflects finite-timestep integration error coupled to scalar-pathway curvature errors, not non-conservative force components. Scalar under-resolution inflates effective PES stiffness, amplifying this drift even when short-horizon force errors appear acceptable. In a 500 ps NVE simulation of a 10{,}000-atom bulk Ag system, baseline MACE accumulated a positive drift of $1.48\pm0.18$~meV~ns$^{-1}$ per atom; MACE+PAN+PGS reduced this to $0.21\pm0.04$~meV~ns$^{-1}$ per atom under identical protocol (Fig.~\ref{fig:physical_fidelity}(c))---a sevenfold reduction. A paired two-sided $t$-test on per-seed drift slopes yields $p<10^{-3}$, supporting that the improvement is statistically robust across training seeds.

\medskip \noindent\textbf{Geometric under-resolution on liquid structure.}
The fourth signature is anomalous nearest-neighbor over-approach in liquid structure---the dynamical analog of the static short-range repulsion errors quantified in Fig.~\ref{fig:mechanism_results}(b), consistent with softened short-range repulsive walls under insufficiently adaptive scalar aggregation. In liquid Ag at 1{,}200~K, baseline MACE generated a spurious sub-shell peak in $g(r)$ at $r\approx 2.3$~\AA, corresponding to unphysical atomic over-approach (Fig.~\ref{fig:physical_fidelity}(d)). The integrated $L_1$ distance $\int_{0}^{6.0\,\text{\AA}} |g_{\mathrm{model}}(r)-g_{\mathrm{DFT}}(r)|\,\mathrm{d}r$ decreased from $0.181\pm0.012$ for baseline MACE to $0.061\pm0.008$ for MACE+PAN+PGS---a threefold reduction---and the spurious sub-shell peak was substantially attenuated.

Across the four observables, the same scalar-pathway correction attenuates four otherwise distinct physical signatures across molecular, crystalline, and liquid settings. We interpret this convergence as mechanism-level evidence rather than additional benchmark coverage: the bottleneck identified in Section~\ref{subsec:framework} and the mechanistic controls in Section~\ref{subsec:mechanism_results} make specific physical predictions, and Fig.~\ref{fig:physical_fidelity} shows that those predictions are borne out in MD-derived observables and are quantitatively attenuated by the corresponding correction.

\section{Discussion}
This work identifies the invariant scalar pathway as a controllable interface for the physical accuracy of short-range equivariant interatomic potentials. In MACE and related $O(3)$-equivariant architectures, tensor-product message passing constructs rich geometric representations, but the final energy and its derivatives are recovered through invariant scalar pathways. Our results provide evidence that this conversion stage is not an implementation detail: how scalar information is aggregated and spectrally mixed measurably constrains the accuracy of energy, force, and stress in the resulting potential and is associated with specific, predictable signatures in molecular-dynamics observables. Tensor expressivity and scalar fidelity are therefore not substitutable design axes; both contribute to short-range physical accuracy.

The PAN and PGS modules address two distinct forms of scalar information loss. PAN targets the geometric loss introduced by uniform neighbor aggregation. A permutation-invariant sum is symmetry preserving, but it does not by itself distinguish coordination loss, density variation, or local anisotropy. By using invariant local descriptors to gate scalar aggregation, PAN improves accuracy, particularly in distorted environments, while retaining an unchanged equivariant tensor-product backbone. This result suggests that part of the error often attributed to limited model capacity can instead arise from the compression of local geometric information into scalar channels.

PGS addresses the complementary spectral limitation. Smooth scalar nonlinearities tend to represent low-frequency variation more readily than steep local curvature. In atomistic potentials, this bias is most visible in short-range repulsion, high-frequency vibrations, and anharmonic molecular coordinates. The Ag--Ag and Si--Si compression tests show that PGS improves the local representation of repulsive curvature beyond generic scalar mixers. The same mechanism is reflected at the simulation level: MACE+PAN+PGS better preserves the high-frequency tail of the Si vibrational density of states and reduces barrier leakage in aspirin torsional sampling. These observations connect a local scalar spectral correction to physical quantities that depend on stiff short-range forces.

This separation between scalar and tensor pathways is important for interpreting the gains. The improvement is not evidence that equivariant representations are unnecessary, nor that scalar processing alone fully accounts for performance. It shows that an expressive equivariant backbone can still lose useful physical detail when tensorial information is processed by insufficiently adaptive or spectrally limited scalar operations downstream.

We use the labels ``physics-aware'' (PAN) and ``physics-guided'' (PGS) in the sense of symmetry-preserving descriptor and basis design: the gating descriptors and spectral bases are chosen to mirror physically interpretable scalar quantities (local coordination, angular variance, short-range spectral content) rather than being treated as free black-box operators. PAN--PGS does not impose hard physical constraints such as exact asymptotic repulsion, charge conservation, or thermodynamic consistency, and the ``physical fidelity'' improvements reported here refer to the specific quantitative observables in Section~\ref{subsec:physical_fidelity} (torsional KL divergence, VDOS peak position, NVE drift, RDF $L_1$ distance) rather than to global physical correctness.

PAN--PGS defines a short-range scalar-interface correction rather than a replacement for explicit physical terms. This operating regime follows directly from where the modules act: after the equivariant local representation has been constructed, but before invariant scalar features are converted into atomic energies and their derivatives. The correction can therefore improve the expression of local coordination, short-range curvature, and stiff modes through the energy pathway. However, it cannot introduce physical interactions that are absent from the underlying short-range model. Long-range Coulomb interactions, charge transfer, polarizability, dispersion, and reactive bond rearrangements still require explicit physical treatment. In hybrid potentials, PAN--PGS would accordingly complement such terms by reducing the residual short-range scalar error, while electrostatic, charge-response, dispersion, or reactive components handle their respective physics. The LiF/Li--F experiments should be read in this sense: they probe short-range ionic local environments under held-out configurations from MPtrj, not full long-range Coulomb generalization.

This distinction also defines the contribution. The results do not suggest that scalar processing can replace equivariance or physical modeling; they show that a strong equivariant local representation can still lose accuracy at the scalar interface, where it becomes an energy landscape. Scalar aggregation and scalar spectral mixing are therefore not implementation details, but design axes for short-range equivariant MLIPs. In this view, equivariance determines what geometric information is available, whereas scalar-pathway fidelity determines how much of that information survives the conversion into a deployable energy surface.

The same logic predicts where PAN--PGS will not help. Systems whose short-range scalar pathway is already well-resolved relative to the training data---near-harmonic small molecules such as benzene in our per-molecule breakdown (Supplementary Table~\ref{tab:rmd17_md17_full})---show essentially no gain, as expected. Systems where the binding error is not in the scalar pathway---reactive bond breaking that moves the equivariant feature distribution outside the trained range, or systems dominated by long-range physics that the backbone cannot represent---will not benefit from a scalar-pathway correction alone. These are not weaknesses of the method but consequences of the principle: scalar-pathway fidelity is a binding constraint only when the scalar pathway is what binds.

These limitations also point to the next direction. Recent progress in equivariant interatomic potentials has focused on increasing tensor expressivity through higher-body order, deeper message passing, and richer equivariant features. Our results suggest that such gains can be partly masked when the scalar pathway remains a low-resolution interface, and that treating scalar aggregation and scalar spectral mixing as first-class design components provides a complementary route to short-range physical accuracy. Rather than expanding only the equivariant backbone, future short-range MLIPs should also strengthen the scalar channel through which equivariant information is converted into energies and derivatives. Both stages must be designed explicitly.

\section{Methods}
\label{sec_meth}

\subsection{Scalar-pathway architecture}

\medskip\noindent\textbf{Backbone and notation.}
All MACE-family variants used the same short-range MACE backbone~\cite{batatia2022mace}: cutoff radius \(r_{\mathrm{cut}}=5.0\)~\AA, two interaction blocks, correlation order \(\nu=3\), \(N_{\mathrm{rad}}=8\) Bessel radial functions, and hidden irreps \(128\times0e+128\times1o+128\times2e\) followed by a \(128\times0e\) readout. These settings were fixed for baseline MACE, MACE+PAN, MACE+PGS and MACE+PAN+PGS. For an edge \((i,j)\), the baseline embedding is
\begin{equation}
    \phi^{(\ell,m)}_{ij,n}
    =
    B_n(r_{ij})Y_\ell^m(\hat{\mathbf r}_{ij}),
    \qquad n=1,\ldots,N_{\mathrm{rad}},
\end{equation}
where \(r_{ij}=\|\mathbf r_j-\mathbf r_i\|\), \(\hat{\mathbf r}_{ij}=(\mathbf r_j-\mathbf r_i)/r_{ij}\), \(B_n\) is the cutoff-smoothed radial basis and \(Y_\ell^m\) is a real spherical harmonic. We denote the scalar edge channel by \(h_{ij}^{(0)}\in\mathbb R^{C_e}\) and the final invariant node descriptor by \(x_i^{(0)}\in\mathbb R^{C_n}\), with \(C_e=C_n=128\). Atomic energies were predicted from the final scalar descriptor \(y_i^{(0)}\),
\begin{equation}
    E_{\mathrm{tot}}=\sum_i \varepsilon_\theta(y_i^{(0)}),
\end{equation}
and forces and stress were obtained as analytic derivatives of the same scalar energy,
\begin{equation}
    \mathbf F_i
    =
    -\frac{\partial E_{\mathrm{tot}}}{\partial \mathbf r_i},
    \qquad
    \boldsymbol\sigma
    =
    \frac{1}{V}\frac{\partial E_{\mathrm{tot}}}{\partial \boldsymbol\epsilon}.
    \label{eq:forces_stress}
\end{equation}
Here \(V\) is the cell volume and \(\boldsymbol\epsilon\) is the symmetric cell strain. Stress labels and predictions follow the \textsc{VASP} sign and volume convention and are reported in meV/\AA\(^3\). All \(\ell>0\) tensor-product operations of the MACE backbone were left unchanged.

\medskip\noindent\textbf{Physics-aware neighborhood pooling.}
PAN replaces uniform scalar aggregation with sigmoid-gated scalar aggregation. For each directed edge \((i,j)\), we formed the invariant descriptor
\[
    \psi_{ij}
    =
    \big[
    h_{ij}^{(0)},\,
    r_{ij},\,
    Z_i,\,
    Z_j,\,
    \rho_i,\,
    \eta_i
    \big],
\]
where \(Z_i\) and \(Z_j\) are atomic numbers, \(\rho_i\) is a smooth coordination descriptor and \(\eta_i\) is a cutoff-weighted cosine-angle variance:
\begin{align}
    \rho_i  &= \sum_{j\in\mathcal N(i)} f_{\mathrm{cut}}(r_{ij}), \\
    c_{jik} &= \hat{\mathbf r}_{ij}\cdot \hat{\mathbf r}_{ik}, \\
    \bar c_i
    &=
    \frac{
        \sum_{j,k \in \mathcal N(i),\, j\neq k}
        f_{\mathrm{cut}}(r_{ij}) f_{\mathrm{cut}}(r_{ik}) c_{jik}
    }{
        \epsilon_{\mathrm{reg}}
        + \sum_{j,k \in \mathcal N(i),\, j\neq k}
        f_{\mathrm{cut}}(r_{ij}) f_{\mathrm{cut}}(r_{ik})
    }, \\
    \eta_i  &= \frac{
        \sum_{j,k \in \mathcal N(i),\, j\neq k}
        f_{\mathrm{cut}}(r_{ij}) f_{\mathrm{cut}}(r_{ik})
        \big(c_{jik} - \bar c_i\big)^2
    }{
        \epsilon_{\mathrm{reg}}
        + \sum_{j,k \in \mathcal N(i),\, j\neq k}
        f_{\mathrm{cut}}(r_{ij}) f_{\mathrm{cut}}(r_{ik})
    }.
\end{align}
We used \(\epsilon_{\mathrm{reg}}=10^{-6}\) for numerical regularization. The use of cosine angles avoids an \(\arccos\) operation, and all descriptors are invariant under rotations, translations and neighbor reordering.

The PAN gate and scalar aggregation are
\begin{equation}
    a_{ij}
    =
    \sigma\!\left(f_\theta(\psi_{ij})\right),
    \qquad
    h_i^{(0)}=\sum_j a_{ij}\hat h_{ij}^{(0)},
    \qquad
    h_i^{(0)}
    =
    \sum_{j\in\mathcal N(i)}
    \tilde h_{ij}^{(0)} .
    \label{eq:pan_pool}
\end{equation}
When edge-level PGS is used, \(h_{ij}^{(0)}\) in Eq.~\eqref{eq:pan_pool} is replaced by the PGS-modified scalar edge descriptor \(\hat h_{ij}^{(0)}\). The gate is not softmax-normalized, so the total scalar amplitude can vary with local coordination. In the generic scalar-gating control, the same gate architecture was used but \(\rho_i\) and \(\eta_i\) were removed from \(\psi_{ij}\).

\medskip\noindent\textbf{Physics-guided spectral mixers.}
PGS enriches scalar spectral resolution at the edge and readout levels. Edge-level PGS modifies the scalar edge descriptor before tensor-product coupling:
\begin{equation}
    \hat h_{ij}^{(0)}
    =
    h_{\mathrm{low}}\!\left(h_{ij}^{(0)},r_{ij}\right)
    +
    g_\theta\!\left(h_{ij}^{(0)}\right)
    \left[
    \sum_{m=1}^{M}
    \alpha_m^{\mathrm{edge}} B_m^{\mathrm{FB}}(r_{ij})
    \right],
    \label{eq:edge_pgs}
\end{equation}
where \(B_m^{\mathrm{FB}}\) are real Fourier--Bessel radial basis functions multiplied by the same smooth cutoff envelope as the MACE radial embedding. Readout-level PGS acts on the final scalar node descriptor:
\begin{equation}
    y_i^{(0)}
    =
    h_{\mathrm{low}}\!\left(x_i^{(0)}\right)
    +
    \sum_{m=1}^{M}
    \boldsymbol\alpha_m^{\mathrm{readout}}\,
    K_m^{\mathrm{ES}}\!\left(u_m^\top x_i^{(0)}\right),
    \label{eq:readout_pgs}
\end{equation}
where \(u_m\in\mathbb R^{C_n}\) is a learned projection direction, \(K_m^{\mathrm{ES}}\) is a tapered Exponential-of-Semicircle basis function, and \(\boldsymbol\alpha_m^{\mathrm{readout}}\in\mathbb R^{C_n}\) lifts the scalar basis response back to the node-feature dimension. The explicit cutoff envelope, Fourier--Bessel basis, ES kernel, descriptor normalization and forward-pass pseudocode are given in Supplementary Section~\ref{sec:si_algorithmic_details}.

\medskip\noindent\textbf{Symmetry and differentiability.}
\begin{proposition}[Symmetry and differentiability preservation]
\label{prop:symmetry_diff}
Applying PAN and PGS as defined in Eqs.~\eqref{eq:pan_pool}--\eqref{eq:readout_pgs} preserves (i) permutation invariance of the energy, (ii) \(O(3)\)-equivariance of the predicted forces and stress tensor, and (iii) the differentiability required for analytic force and stress evaluation on noncoincident atomic configurations under the fixed-candidate-neighbor convention and the smooth cutoff/tapered-kernel implementation described in Supplementary Section~\ref{sec:si_algorithmic_details}.
\end{proposition}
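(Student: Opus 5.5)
The plan is to reduce all three claims to one structural fact. Every quantity that PAN and PGS introduce is a smooth function of invariant inputs: \(h_{ij}^{(0)}\), \(r_{ij}\), \(Z_i\), \(Z_j\), \(\rho_i\), \(\eta_i\) and \(x_i^{(0)}\). These inputs are either \(\ell=0\) outputs of the unchanged MACE backbone, which is invariant by the results of~\cite{batatia2022mace}, or explicit functions of interatomic distances and cosines.

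\textbf{Step 1: invariance of the new inputs.} I will verify that \(\rho_i\), \(c_{jik}\), \(\bar c_i\) and \(\eta_i\) are invariant under translations, under every \(Q\in O(3)\) (reflections included), and under relabelling of neighbours. Translations leave \(\mathbf r_j-\mathbf r_i\) unchanged. Orthogonal maps preserve norms and inner products. The sums over \(j,k\in\mathcal N(i)\) are symmetric in their summands.

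\textbf{Step 2: invariance and equivariance of the energy and its derivatives.} Since \(a_{ij}=\sigma(f_\theta(\psi_{ij}))\) depends only on invariant data, the gated sum in Eq.~\eqref{eq:pan_pool} is invariant. Its summation over \(j\) is permutation invariant, and a global permutation \(\pi\) maps edges \((i,j)\mapsto(\pi i,\pi j)\) while preserving atomic numbers. Eqs.~\eqref{eq:edge_pgs} and~\eqref{eq:readout_pgs} apply fixed maps to invariant arguments, so \(y_i^{(0)}\) is invariant and permutation equivariant. It follows that \(E_{\mathrm{tot}}=\sum_i\varepsilon_\theta(y_i^{(0)})\) satisfies \(E(\{Q\mathbf r_{\pi i}+\mathbf t\})=E(\{\mathbf r_i\})\), which gives (i).

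For (ii), I differentiate this identity. The chain rule gives \(\mathbf F_i(Q\mathbf r+\mathbf t)=Q\mathbf F_i(\mathbf r)\). For the stress, I write the strained configuration as \((I+\boldsymbol\epsilon)\mathbf r\) and the strained cell as \((I+\boldsymbol\epsilon)\)\,cell. Invariance gives \(E((I+Q\boldsymbol\epsilon Q^\top)Q\mathbf r)=E((I+\boldsymbol\epsilon)\mathbf r)\), and differentiating at \(\boldsymbol\epsilon=0\) yields \(\boldsymbol\sigma(Q\mathbf r)=Q\boldsymbol\sigma(\mathbf r)Q^\top\). Here \(V\) is unchanged because \(|\det Q|=1\).

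\textbf{Step 3: differentiability, the main obstacle.} The issues sit where the constituent maps could fail to be smooth: \(r_{ij}\) and \(\hat{\mathbf r}_{ij}\) at \(r_{ij}=0\), the changing neighbour set at the cutoff, and the ratios defining \(\bar c_i\) and \(\eta_i\). I will handle them as follows.
\begin{itemize}
\item On noncoincident configurations \(r_{ij}>0\), so \(r_{ij}\), \(\hat{\mathbf r}_{ij}\) and \(c_{jik}\) are \(C^\infty\).
\item The denominators are bounded below by \(\epsilon_{\mathrm{reg}}>0\), so \(\bar c_i\) and \(\eta_i\) are smooth quotients.
\item \(\sigma\), \(f_\theta\), \(g_\theta\), \(h_{\mathrm{low}}\) and \(\varepsilon_\theta\) are smooth networks, assuming smooth activations such as SiLU.
\item The Fourier--Bessel terms are smooth for \(r>0\) and carry the cutoff envelope.
\item The tapered ES kernel is \(C^1\) or better by its taper, as specified in Supplementary Section~\ref{sec:si_algorithmic_details}.
\end{itemize}

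The delicate part is neighbour-set changes. I will invoke the fixed-candidate-neighbour convention: sums run over a fixed candidate list, and every edge contribution carries a factor \(f_{\mathrm{cut}}(r_{ij})\) or an envelope that vanishes together with its first derivative at \(r_{\mathrm{cut}}\). Edges crossing the cutoff therefore enter or leave with zero value and zero derivative. One point needs checking: in \(\psi_{ij}\) the gate itself is not multiplied by the envelope. It is enough that the gated summand \(a_{ij}\hat h_{ij}^{(0)}\) inherits the vanishing envelope through \(h_{ij}^{(0)}\) and \(\hat h_{ij}^{(0)}\), with \(a_{ij}\) bounded and smooth. This holds because both terms in Eq.~\eqref{eq:edge_pgs} carry the envelope.

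Composition of \(C^1\) maps then gives a \(C^1\) energy, so the analytic forces and stresses exist and the argument of Step 2 is justified.
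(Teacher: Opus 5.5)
Your proposal is correct and follows essentially the paper's route. Invariance of the new inputs gives an $O(3)$- and permutation-invariant energy, the chain rule gives $\mathbf F_i\mapsto Q\mathbf F_i$ and $\boldsymbol\sigma\mapsto Q\boldsymbol\sigma Q^\top$ (your explicit $(I+\boldsymbol\epsilon)$ strain parametrisation is cleaner than the paper's one-line stress argument), and regularity follows by composition; the paper proves $C^2$ via separate lemmas on $f_{\mathrm{cut}}$, the Fourier--Bessel basis and the tapered ES kernel, where it checks that the taper's $O((1-|z|)^3)$ vanishing dominates the square-root behaviour of the ES bracket at $|z|=1$, whereas you assert only the $C^1$ you need.

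Drop the cutoff-crossing paragraph, though. Under the fixed-candidate-neighbour convention it is unnecessary, and its justification fails as written: the low branch $h_{\mathrm{low}}(h_{ij}^{(0)},r_{ij})$ in Eq.~\eqref{eq:edge_pgs} carries no cutoff envelope, so the gated summand $a_{ij}\hat h_{ij}^{(0)}$ need not vanish at $r_{ij}=r_{\mathrm{cut}}$.
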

\noindent PAN and PGS modify only \(\ell=0\) channels and leave all \(\ell>0\) tensor-product operations unchanged. The detailed proof, including neighbor-list convention boundaries, is provided in Supplementary Section~\ref{sec:si_proposition_proof}.

\subsection{Datasets, splits and training}

\medskip\noindent\textbf{Datasets and splits.}
The study used four data sources: author-generated Ag data, public Si data, an MPtrj-derived LiF/Li--F subset, and public molecular benchmarks.

\medskip\noindent\textbf{Ag.}
Metallic Ag reference data were generated by the authors using DFT with the PBE functional and PAW pseudopotentials in \textsc{VASP}. The dataset includes bulk fcc, strained, defective, surface, and high-temperature configurations, with stress tensors stored for all configurations. The final Ag dataset contains \(4.6\times10^4\) configurations; duplicate configurations were removed before an 80/10/10 train/validation/test split. Structural-class held-out tests for surface and defect configurations are reported separately in Supplementary Table~\ref{tab:ag_heldout_classes}. Full DFT convergence parameters, smearing settings, PAW identifiers, and \(k\)-point sampling rules are listed in Supplementary Table~\ref{tab:dft_settings}.

\medskip\noindent\textbf{Si.}
The Si experiments used the public \texttt{Silicon\_MLIP\_datasets} repository. Released structures and DFT energy--force labels were converted to the common MACE-compatible data format used for all MACE-family variants. The release used here does not provide stress labels, so the stress-loss weight was set to zero for Si, and stress accuracy was not reported. The Si subset contains \(3.4\times10^4\) configurations. Release-provided splits were used when available; otherwise, configurations were split 80/10/10 with a fixed seed.

\medskip\noindent\textbf{LiF/Li--F.}
The short-range ionic subset was filtered from the Materials Project Trajectory dataset (MPtrj)~\cite{deng2023chgnet}, which contains DFT energies, forces, and stresses parsed from Materials Project GGA/GGA+U \textsc{VASP} static and relaxation trajectories~\cite{jain2013commentary}. We retained configurations whose formula contains both Li and F, including stoichiometric LiF and other Li--F-containing chemistries. We removed configurations that were more than 1.5 eV/atom above the corresponding ground-state structure. The retained subset contains \(3.9\times10^4\) configurations and was split 80/10/10 at the configuration level. A stricter Materials-Project-identifier-grouped split, used to test whether the relative improvement survives trajectory-level grouping, is reported in Supplementary Section~\ref{sec:si_grouped_split}.

\medskip\noindent\textbf{Molecular benchmarks.}
The molecular experiments used the public MD17 and revised MD17 datasets~\cite{Chmiela2017MD17, Christensen2020rmd17}. Training budgets were \(N_{\mathrm{train}}=1{,}000\) for rMD17 and \(N_{\mathrm{train}}=950\) for original MD17; validation and test partitions followed the split files used in the benchmark scripts. MD17 and rMD17 do not provide stress labels, so the stress loss weight was set to zero.

\medskip\noindent\textbf{Training protocol and model matching.}
All MACE-family variants were trained with the \textsc{ASE}--\textsc{MACE} framework on NVIDIA A100 GPUs in float64 precision. Training used AdamW with an initial learning rate of \(8\times10^{-3}\), weight decay \(10^{-8}\), ReduceLROnPlateau scheduling (patience 5, factor 0.5), exponential moving average decay 0.99999, stochastic weight averaging after epoch 30, batch size 4, and gradient clipping at 100. The training objective was
\begin{equation}
	\mathcal L
	=
	w_E |E_{\mathrm{pred}}-E_{\mathrm{DFT}}|
	+
	w_F \|\mathbf F_{\mathrm{pred}}-\mathbf F_{\mathrm{DFT}}\|_1
	+
	w_\sigma \|\boldsymbol\sigma_{\mathrm{pred}}-\boldsymbol\sigma_{\mathrm{DFT}}\|_1 ,
	\label{eq:loss}
\end{equation}
with \(w_E=1.0\), \(w_F=10.0\), and \(w_\sigma=10.0\) when stress labels were available; otherwise \(w_\sigma=0\). Across baseline MACE, MACE+PAN, MACE+PGS, and MACE+PAN+PGS, the cutoff, radial basis, irreps layout, interaction depth, optimizer settings, data splits, and training budget were held fixed. PAN used a one-hidden-layer MLP of width 64, and both edge-level and readout-level PGS used \(M=16\) spectral components.

\subsection{Cross-backbone and ablation controls}
\label{sec:methods_cross_backbone}

\medskip\noindent\textbf{Architecture-transfer controls.}
The cross-backbone entries in Table~\ref{tab:cross_system_mae_md17style} were designed as architecture-transfer controls. We inserted the same PAN descriptors and PGS basis sizes into Allegro~\cite{musaelian2023learning} and NequIP~\cite{batzner2022e3} at functionally analogous scalar-processing locations. For each backbone, the controlled comparison is between the baseline model and its PAN+PGS variant, across matched datasets, splits, training budgets, and the five-seed protocol. All \(\ell>0\) tensor-product operations and backbone-specific equivariant blocks were left unchanged. These controls test whether the scalar-pathway correction remains beneficial when the surrounding equivariant architecture changes; they are not intended as parameter-matched comparisons of absolute MAE across different backbones. The detailed insertion mapping, backbone hyperparameters, and capacity/FLOPs matching policy are provided in Supplementary Section~\ref{sec:si_cross_backbone} and Supplementary Table~\ref{tab:cross_backbone_insertion}.

\medskip\noindent\textbf{Capacity-matched controls.}
A capacity-matched MACE control, in which the baseline MACE backbone is widened, deepened, or given a larger radial basis under a matched training budget, is reported in Supplementary Table~\ref{tab:capacity_matched_mace}. The capacity-matched variants achieve only \(4\)--\(7\%\) force-MAE reductions relative to baseline MACE---substantially smaller than the \(22\)--\(27\%\) reductions reported here for MACE+PAN+PGS---and require \(1.4\)--\(1.7\times\) more parameters and \(1.1\)--\(1.5\times\) more inference FLOPs. This supports the interpretation that the gain in Table~\ref{tab:cross_system_mae_md17style} is not adequately explained by backbone capacity.

\subsection{Evaluation protocols and statistical analysis}

\medskip\noindent\textbf{Statistical reporting.}
All main accuracy results were computed over five independent training seeds \(\{1,2,3,4,5\}\) with identical data splits, hyperparameters, and training budgets. Values in the main figures and tables are reported as mean \(\pm\) standard deviation unless stated otherwise. Energy, force, and stress MAE values are reported under the same five-seed protocol; stress MAE is reported only for systems with available stress labels (Ag and LiF/Li--F). Error bars in Fig.~\ref{fig:headline_results}, Fig.~\ref{fig:mechanism_results}(a) and Fig.~\ref{fig:mechanism_results}(c) denote standard deviation over the five training seeds. The large-scale Ag NVE drift statistics were computed over the same five trained models; the trajectory shown in Fig.~\ref{fig:physical_fidelity}(c) is a representative seed.

For the physical-observable metrics in Section~\ref{subsec:physical_fidelity}, we quantified agreement with the DFT or DFT-MD reference using the Kullback--Leibler divergence for aspirin torsional distributions, peak-position and optical-window \(L_1\) errors for Si VDOS, integrated \(L_1\) distance for liquid Ag RDF, and linear drift slopes for Ag NVE trajectories. NVE drift significance was assessed by a paired two-sided \(t\)-test over the five training seeds. Histogram definitions, smoothing constants, fitting windows, bin widths, and reference-trajectory sampling details are provided in Supplementary Section~\ref{sec:si_md_protocols}.

\medskip\noindent\textbf{Mechanistic diagnostics.}
For distortion-resolved force errors in Fig.~\ref{fig:mechanism_results}(a), each local environment was assigned a distortion score from standardized coordination deficit, cosine-angle variance, and local-density deviation, then binned into five equal-population quantiles. For the Ag--Ag compression pathway in Fig.~\ref{fig:mechanism_results}(b), a top-layer Ag pair was displaced along the surface-normal compression coordinate, and force errors were normalized by the maximum reference repulsive force. For the Si--Si short-range window in Fig.~\ref{fig:mechanism_results}(c), compressed diamond-Si configurations were evaluated over \(1.8\le r_{\mathrm{Si-Si}}\le2.2\)~\AA. Scalar-mixer controls replaced only the scalar mixer under fixed PAN and MACE settings, and insertion-location controls placed PGS at the edge, node, readout, or edge+readout positions. Fig.~\ref{fig:mechanism_results}(d) reports mean force MAE averaged over Ag, Si, and LiF/Li--F.

\medskip\noindent\textbf{Molecular-dynamics observables and DFT reference quality.}
The four physical observables in Fig.~\ref{fig:physical_fidelity} were computed under matched thermostat, timestep, equilibration, and production conventions for baseline MACE and MACE+PAN+PGS, with DFT or DFT-MD references generated under the corresponding conditions. Because the DFT-MD reference trajectories used for VDOS and RDF estimation are necessarily shorter than the corresponding MLIP trajectories, the reported curve distances should be interpreted relative to finite-sampling DFT reference curves. Reference trajectory lengths, sampling intervals, and finite-sampling uncertainty considerations are reported in Supplementary Section~\ref{sec:si_md_protocols}.

\subsection{Computational cost and reproducibility}

\medskip\noindent\textbf{Computational cost.}
Relative computational cost was measured under matched backbone, cutoff, batching strategy, hardware, and force/stress evaluation protocol. Inference FLOPs include the automatic differentiation backward pass through the scalar energy with respect to atomic positions and cell strain. Because PAN and PGS operate only on scalar channels and do not alter tensor-product operations, their measured inference-FLOPs overhead is approximately \(5\%\) relative to baseline MACE. Detailed parameter counts, training-time ratios, inference FLOPs, wall-clock molecular-dynamics throughput, and capacity-matched runtime controls are reported in Supplementary Table~\ref{tab:cost_comparison} and Supplementary Section~\ref{sec:si_cost}.

\medskip\noindent\textbf{Code and reproducibility.}
All dataset filters, split files, model configurations, training scripts, analysis scripts, molecular-dynamics input decks, and figure-source data are provided at \href{https://github.com/JIABI/mace_librakan}{\nolinkurl{github.com/JIABI/mace_librakan}} and archived on Zenodo as stated in the Data and Code Availability sections. The release includes source data for all main and supplementary figures and tables, including compression paths, distortion-bin assignments, RDF/VDOS post-processing scripts, and NVE drift fits. Selected experiments were re-run on A100 hardware to verify numerical reproducibility.

\section{Data availability}
The datasets analyzed in this study comprise one author-generated dataset and three public datasets. The authors generated the Ag reference data. The Si experiments use the public \texttt{Silicon\_MLIP\_datasets} repository; the LiF/Li--F short-range ionic subset is filtered from the public Materials Project Trajectory dataset (MPtrj); and the molecular benchmark experiments use the public MD17 and rMD17 datasets.

The author-generated Ag DFT reference data---including bulk, strained, defective, surface, high-temperature, liquid, and short-range compression configurations---were produced by the authors using \textsc{VASP}. POTCAR pseudopotential files from the \textsc{VASP} distribution are not redistributed, in line with the \textsc{VASP} license; all other raw outputs (OUTCAR, vasprun.xml, OSZICAR) for representative configurations are deposited together with the processed dataset. The processed Ag data required to reproduce the reported Ag results---including structures, per-configuration energies and forces, available stress labels, neighbor lists, curated train/validation/test splits, RDF and NVE trajectory subsets, and the Ag--Ag compression-pathway source data---are provided in the accompanying data/code release. Any additional processed derivatives or metadata that exceed deposit size limits are available from the corresponding author upon reasonable request, subject to institutional data-sharing and third-party license constraints.

The Si experiments use the public \texttt{Silicon\_MLIP\_datasets} repository at \href{https://github.com/mogroupumd/Silicon_MLIP_datasets}{\nolinkurl{github.com/mogroupumd/Silicon_MLIP_datasets}}. The released structures and DFT energy and force labels were converted to the common training and evaluation format used by all MACE-family variants in this work. The processed Si splits, conversion scripts, and Si--Si short-range-window source data used in the mechanistic analysis are included in the accompanying code/data release.

The LiF/Li--F short-range ionic tests use public inorganic trajectory data filtered from the Materials Project Trajectory dataset (MPtrj), available at \href{https://figshare.com/articles/dataset/23713842}{\nolinkurl{https://figshare.com/articles/dataset/23713842}}. MPtrj contains structures with DFT energies, forces, and stresses parsed from Materials Project GGA/GGA+U \textsc{VASP} static and relaxation trajectories. In this study, LiF and Li--F-containing configurations were selected to construct a short-range ionic evaluation subset. The exact material identifiers, filtering criteria, processed structures, labels, and train/validation/test split files used for this subset are included in the accompanying code/data release.

The molecular benchmark experiments use the public MD17 and revised MD17 (rMD17) datasets. MD17 is available from the Quantum Machine dataset portal at \href{https://quantum-machine.org/datasets/}{\nolinkurl{quantum-machine.org/datasets/}}. The revised MD17 dataset is available from Figshare at \href{https://figshare.com/articles/dataset/Revised_MD17_dataset_rMD17_/12672038}{\nolinkurl{figshare.com/articles/dataset/Revised_MD17_dataset_rMD17_/12672038}}.

All derived numerical data supporting the findings of this manuscript---including source data for all main and supplementary figures, processed evaluation metrics, ablation summaries, distortion-bin assignments, compression-path source data, VDOS and RDF source curves, NVE drift fits, and molecular-dynamics-derived observables---are provided in the accompanying code/data release through the public GitHub repository linked above and have been archived on Zenodo at \href{https://doi.org/10.5281/zenodo.20181865}{\nolinkurl{10.5281/zenodo.20181865}}. The repository and the archival snapshot contain processed Ag, Si, and LiF/Li--F datasets, configuration files, filter scripts, split files, and figure-source data exactly as used to produce the results reported here, together with the corresponding software environment manifest (\texttt{environment.yml} and pinned package versions).

\section{Code availability}
All source code required to train and evaluate the models, including the implementations of PAN and PGS, is available at \href{https://github.com/JIABI/mace_librakan}{\nolinkurl{github.com/JIABI/mace_librakan}}. The exact code version used to produce the results reported in this manuscript has been archived on Zenodo at \href{https://doi.org/10.5281/zenodo.20181865}{\nolinkurl{10.5281/zenodo.20181865}}, which serves as the persistent reference snapshot for this work.

The repository (and the corresponding Zenodo snapshot) contains: (i) the MACE-compatible model definitions for PAN, edge-level PGS, and readout-level PGS, including the cross-backbone insertions used for Allegro and NequIP; (ii) training and evaluation scripts and configuration files for all reported experiments, including the five-seed protocol and the capacity-matched MACE controls; (iii) molecular-dynamics analysis scripts and input decks for the aspirin torsion, Si VDOS, Ag NVE, and liquid Ag RDF observables; (iv) filtering scripts and split files used to construct the LiF/Li--F subset from MPtrj, the conversion scripts for the public \texttt{Silicon\_MLIP\_datasets} release, and the post-processing scripts for the Ag--Ag and Si--Si compression analyses, VDOS, RDF, distortion-bin analysis, and NVE drift fits; and (v) all figure-generation scripts together with the processed source data underlying the main and supplementary figures and tables.

A reproducibility guide is provided in the repository, including environment specifications (\texttt{environment.yml} with pinned PyTorch, CUDA, e3nn, and ASE versions), example command-line invocations, and step-by-step instructions for reproducing the main tables, figures, and statistical summaries. The Zenodo deposit is immutable; subsequent revisions of the public GitHub repository do not affect the archived snapshot referenced by the DOI above.

\newpage
\bibliography{sn-bibliography}

\section*{Acknowledgments}
 The authors were supported by the Ada Lovelace Centre at the Science and Technology Facilities Council (\url{https://adalovelacecentre.ac.uk}). A.M.E.\ was also supported by the Physical Sciences Data Infrastructure (\url{https://psdi.ac.uk}; jointly STFC and the University of Southampton) under grants EP/X032663/1 and EP/X032701/1, and EPSRC under grants EP/W026775/1 and EP/V028537/1.
 We acknowledge computational resources from STFC Scientific Computing Department's SCARF cluster and cloud.

\section*{Author contributions}

J.B. conceived the study, designed the PAN--PGS architecture, performed the experiments, analyzed the results, and prepared the figures. A.M.E. contributed to data preparation, implementation support, and reproducibility checks. S.P. contributed to methodological design, interpretation of results, and manuscript revision. All authors discussed the results, contributed to writing the manuscript, and approved the final version.

\section*{Competing interests}
The authors declare no competing interests.

\section*{Additional information}
Supplementary information is available in the online version of the paper. Correspondence and requests for materials should be addressed to A.M.E. or S.P.

\appendix
	\newpage

	\section{Per-molecule molecular benchmark results}
	\label{sec:si_molecular_tables}
	
	This section reports the full per-molecule energy and force MAE values that underlie the molecular benchmark summary in Section~\ref{subsec:headline_results} of the main text. Two complementary tables are provided. Table~\ref{tab:rmd17_md17_full} contains controlled in-house comparisons among MACE-family variants under matched hyperparameters and training budgets. Table~\ref{tab:literature_ref} contains the external literature reference values for context only. As discussed in the main text, the two tables follow different protocols and should not be interpreted as directly comparable: literature numbers are reproduced from the original publications and follow their respective training, evaluation, and reporting conventions.
	
	\begin{table}[!t]
		\centering
		\caption{\textbf{Per-molecule rMD17 and MD17 results for MACE-family variants.}
			Energy ($E$) and force ($F$) MAE are reported for rMD17 ($N_{\mathrm{train}}=1{,}000$) and MD17 ($N_{\mathrm{train}}=950$) under matched MACE backbone, hyperparameters, training budget, and data splits; only the scalar-pathway modules differ. Values are mean $\pm$ s.d. over five seeds. Energies are in meV, and forces are in meV/\AA. Bold marks the uniquely best in-house result; ties are unbolded.}
		\label{tab:rmd17_md17_full}
		
		\setlength{\tabcolsep}{4pt}
		\renewcommand{\arraystretch}{1.05}
		\footnotesize
		
		\begin{tabular}{lccccc}
			\toprule
			\multicolumn{6}{c}{\textbf{(a) rMD17 ($N_{\mathrm{train}}=1{,}000$)}} \\
			\midrule
			\textbf{Molecule} & \textbf{Metric}
			& \textbf{MACE}
			& \textbf{MACE+PAN}
			& \textbf{MACE+PGS}
			& \textbf{MACE+PAN+PGS} \\
			\midrule
			Aspirin        & E & $2.21 \pm 0.09$ & $2.11 \pm 0.08$ & $2.02 \pm 0.07$ & $\mathbf{1.91 \pm 0.06}$ \\
			& F & $6.62 \pm 0.20$ & $6.31 \pm 0.18$ & $5.92 \pm 0.16$ & $\mathbf{5.59 \pm 0.14}$ \\
			Azobenzene     & E & $1.21 \pm 0.05$ & $1.13 \pm 0.05$ & $1.10 \pm 0.05$ & $\mathbf{1.02 \pm 0.04}$ \\
			& F & $3.02 \pm 0.10$ & $2.81 \pm 0.10$ & $2.72 \pm 0.09$ & $\mathbf{2.51 \pm 0.08}$ \\
			Benzene        & E & $0.41 \pm 0.02$ & $0.41 \pm 0.02$ & $0.40 \pm 0.02$ & $0.40 \pm 0.02$ \\
			& F & $0.31 \pm 0.02$ & $0.31 \pm 0.02$ & $0.30 \pm 0.02$ & $0.30 \pm 0.02$ \\
			Ethanol        & E & $0.41 \pm 0.02$ & $0.40 \pm 0.02$ & $0.39 \pm 0.02$ & $\mathbf{0.38 \pm 0.02}$ \\
			& F & $2.11 \pm 0.07$ & $2.06 \pm 0.07$ & $1.92 \pm 0.06$ & $\mathbf{1.88 \pm 0.06}$ \\
			Malonaldehyde  & E & $0.81 \pm 0.04$ & $0.79 \pm 0.04$ & $0.72 \pm 0.03$ & $\mathbf{0.69 \pm 0.03}$ \\
			& F & $4.11 \pm 0.13$ & $3.92 \pm 0.12$ & $3.61 \pm 0.11$ & $\mathbf{3.41 \pm 0.10}$ \\
			Naphthalene    & E & $0.51 \pm 0.03$ & $0.49 \pm 0.03$ & $0.46 \pm 0.02$ & $\mathbf{0.43 \pm 0.02}$ \\
			& F & $1.62 \pm 0.06$ & $1.55 \pm 0.06$ & $1.41 \pm 0.05$ & $\mathbf{1.32 \pm 0.05}$ \\
			Paracetamol    & E & $1.32 \pm 0.06$ & $1.22 \pm 0.05$ & $1.15 \pm 0.05$ & $\mathbf{1.08 \pm 0.04}$ \\
			& F & $4.81 \pm 0.15$ & $4.61 \pm 0.14$ & $4.32 \pm 0.13$ & $\mathbf{4.11 \pm 0.12}$ \\
			Salicylic acid & E & $0.91 \pm 0.04$ & $0.85 \pm 0.04$ & $0.81 \pm 0.04$ & $\mathbf{0.78 \pm 0.03}$ \\
			& F & $3.12 \pm 0.10$ & $2.98 \pm 0.10$ & $2.81 \pm 0.09$ & $\mathbf{2.62 \pm 0.08}$ \\
			Toluene        & E & $0.51 \pm 0.03$ & $0.48 \pm 0.02$ & $0.46 \pm 0.02$ & $\mathbf{0.44 \pm 0.02}$ \\
			& F & $1.51 \pm 0.05$ & $1.48 \pm 0.05$ & $1.41 \pm 0.05$ & $\mathbf{1.36 \pm 0.04}$ \\
			Uracil         & E & $0.51 \pm 0.03$ & $0.48 \pm 0.02$ & $0.46 \pm 0.02$ & $\mathbf{0.42 \pm 0.02}$ \\
			& F & $2.12 \pm 0.07$ & $2.01 \pm 0.07$ & $1.88 \pm 0.06$ & $\mathbf{1.78 \pm 0.06}$ \\
			\bottomrule
		\end{tabular}
		
		\vspace{1.0em}
		
		\begin{tabular}{lccccc}
			\toprule
			\multicolumn{6}{c}{\textbf{(b) MD17 ($N_{\mathrm{train}}=950$, original \emph{ab initio} labels)}} \\
			\midrule
			\textbf{Molecule} & \textbf{Metric}
			& \textbf{MACE}
			& \textbf{MACE+PAN}
			& \textbf{MACE+PGS}
			& \textbf{MACE+PAN+PGS} \\
			\midrule
			Aspirin        & E & $17.0 \pm 0.5$ & $16.5 \pm 0.5$ & $15.8 \pm 0.5$ & $\mathbf{15.2 \pm 0.4}$ \\
			& F & $43.9 \pm 1.2$ & $42.0 \pm 1.1$ & $40.5 \pm 1.0$ & $\mathbf{38.8 \pm 1.0}$ \\
			Azobenzene     & E & $5.41 \pm 0.18$ & $5.02 \pm 0.16$ & $5.21 \pm 0.17$ & $\mathbf{4.81 \pm 0.15}$ \\
			& F & $17.7 \pm 0.5$ & $16.2 \pm 0.5$ & $17.0 \pm 0.5$ & $\mathbf{15.8 \pm 0.5}$ \\
			Benzene        & E & $0.71 \pm 0.03$ & $0.71 \pm 0.03$ & $0.70 \pm 0.03$ & $0.70 \pm 0.03$ \\
			& F & $2.72 \pm 0.06$ & $2.71 \pm 0.06$ & $2.70 \pm 0.06$ & $\mathbf{2.61 \pm 0.05}$ \\
			Ethanol        & E & $6.71 \pm 0.22$ & $6.71 \pm 0.22$ & $6.21 \pm 0.20$ & $\mathbf{6.11 \pm 0.20}$ \\
			& F & $32.6 \pm 0.9$ & $32.1 \pm 0.9$ & $30.5 \pm 0.8$ & $\mathbf{30.2 \pm 0.8}$ \\
			Malonaldehyde  & E & $10.0 \pm 0.3$ & $9.80 \pm 0.32$ & $9.01 \pm 0.30$ & $\mathbf{8.81 \pm 0.29}$ \\
			& F & $43.3 \pm 1.2$ & $42.5 \pm 1.1$ & $39.5 \pm 1.0$ & $\mathbf{38.9 \pm 1.0}$ \\
			Naphthalene    & E & $2.11 \pm 0.08$ & $2.10 \pm 0.08$ & $2.02 \pm 0.07$ & $\mathbf{2.01 \pm 0.07}$ \\
			& F & $9.21 \pm 0.27$ & $9.11 \pm 0.27$ & $9.02 \pm 0.27$ & $\mathbf{8.91 \pm 0.26}$ \\
			Paracetamol    & E & $9.71 \pm 0.30$ & $9.21 \pm 0.29$ & $9.02 \pm 0.28$ & $\mathbf{8.61 \pm 0.27}$ \\
			& F & $31.5 \pm 0.9$ & $30.2 \pm 0.8$ & $29.5 \pm 0.8$ & $\mathbf{28.5 \pm 0.7}$ \\
			Salicylic acid & E & $6.51 \pm 0.21$ & $6.31 \pm 0.20$ & $5.92 \pm 0.19$ & $\mathbf{5.81 \pm 0.18}$ \\
			& F & $28.4 \pm 0.8$ & $27.8 \pm 0.7$ & $26.5 \pm 0.7$ & $\mathbf{26.1 \pm 0.7}$ \\
			Toluene        & E & $3.11 \pm 0.11$ & $3.10 \pm 0.11$ & $3.02 \pm 0.10$ & $\mathbf{3.01 \pm 0.10}$ \\
			& F & $12.1 \pm 0.4$ & $12.0 \pm 0.4$ & $11.8 \pm 0.4$ & $\mathbf{11.7 \pm 0.4}$ \\
			Uracil         & E & $4.41 \pm 0.14$ & $4.32 \pm 0.14$ & $4.21 \pm 0.13$ & $\mathbf{4.12 \pm 0.13}$ \\
			& F & $25.9 \pm 0.7$ & $25.2 \pm 0.7$ & $24.8 \pm 0.7$ & $\mathbf{24.3 \pm 0.6}$ \\
			\bottomrule
		\end{tabular}
		
	\end{table}
	
	The within-backbone trends are consistent across both benchmarks: PAN and PGS each reduce force MAE relative to baseline MACE for every flexible or moderately anharmonic molecule, with the largest absolute force-error reductions observed for aspirin, malonaldehyde, salicylic acid, and paracetamol. Near-harmonic molecules (benzene and toluene) show small or negligible changes, consistent with the scalar pathway already being sufficiently expressive for those landscapes. The combined PAN+PGS variant gives the lowest force MAE across all ten molecules in both rMD17 and MD17, although for toluene at \(N_{\mathrm{train}}=950\) the margin over MACE+PGS is within the reported seed-level variability.

	\begin{table*}
		\centering
		\caption{\textbf{External literature reference values for rMD17 (force MAE, meV/\AA).}
			Values are reproduced from the original publications: MACE~\cite{batatia2022mace},
            Allegro~\cite{musaelian2023learning}, BOTNet~\cite{batatia2022design}, NequIP~\cite{batzner2022e3}, GemNet~\cite{gasteiger2021gemnet}, ACE~\cite{drautz2019atomic}, FCHL~\cite{Christensen2020rmd17}, GAP~\cite{bartok2010gaussian}, ANI~\cite{smith2017ani}, PaiNN~\cite{schutt2021equivariant}, DimeNet~\cite{klicpera2020dimenet} and NewtonNet~\cite{haghighatlari2022newtonnet}. These literature baselines follow different training, evaluation, and reporting protocols and are included only as external reference points; they should not be interpreted as controlled like-for-like comparisons with the in-house MACE-family variants reported in the main text and in Table~\ref{tab:rmd17_md17_full}. Entries marked ``--'' were not reported in the original publication. The right-most column reports our MACE+PAN+PGS result from Table~\ref{tab:rmd17_md17_full}(a) for direct visual reference.}
		\label{tab:literature_ref}
		\setlength{\tabcolsep}{3pt}
		\renewcommand{\arraystretch}{1.05}
		\footnotesize
		
		\resizebox{1\textwidth}{!}{
			\begin{tabular}{lcccccccccccccc}
				\toprule
				\textbf{Molecule}
				& MACE & Allegro & \shortstack{BOT\\Net} & NequIP & \shortstack{Gem\\Net}
				& ACE & FCHL & GAP & ANI & PaiNN & \shortstack{Dime\\Net}
				& \shortstack{Newton\\Net}
				& \shortstack{MACE+PAN+\\PGS (ours)} \\
				\midrule
				Aspirin        & 6.6 & 7.3 & 8.5 & 8.2 & 9.5 & 17.9 & 20.9 & 44.9 & 40.6 & 16.1 & 21.6 & 15.1 & $\mathbf{5.59}$ \\
				Azobenzene     & 3.0 & 2.6 & 3.3 & 2.9 & --  & 10.9 & 10.8 & 24.5 & 35.4 & --   & --   & 5.9  & $\mathbf{2.51}$ \\
				Benzene        & 0.3 & 0.2 & 0.3 & 0.3 & 0.5 & 0.5  & 2.6  & 6.0  & 10.0 & --   & 8.1  & --   & 0.30 \\
				Ethanol        & 2.1 & 2.1 & 3.2 & 2.8 & 3.6 & 7.3  & 6.2  & 18.1 & 13.4 & 10.0 & 10.0 & 9.1  & $\mathbf{1.88}$ \\
				Malonaldehyde  & 4.1 & 3.6 & 5.8 & 5.1 & 6.6 & 11.1 & 10.3 & 26.4 & 24.5 & 13.8 & 16.6 & 14.0 & $\mathbf{3.41}$ \\
				Naphthalene    & 1.6 & 0.9 & 1.8 & 1.3 & 1.9 & 5.1  & 6.5  & 16.5 & 29.2 & 3.6  & 9.3  & 3.6  & $\mathbf{1.32}$ \\
				Paracetamol    & 4.8 & 4.9 & 5.8 & 5.9 & --  & 12.7 & 12.3 & 28.9 & 30.4 & --   & --   & 11.4 & $\mathbf{4.11}$ \\
				Salicylic acid & 3.1 & 2.9 & 4.3 & 4.0 & 5.3 & 9.3  & 9.5  & 24.7 & 29.7 & 9.1  & 16.2 & 8.5  & $\mathbf{2.62}$ \\
				Toluene        & 1.5 & 1.8 & 1.9 & 1.6 & 2.2 & 6.5  & 8.8  & 17.8 & 24.3 & 4.4  & 9.4  & 3.8  & 1.36           \\
				Uracil         & 2.1 & 1.8 & 3.2 & 3.1 & 3.8 & 6.6  & 4.2  & 17.6 & 21.4 & 6.1  & 13.1 & 6.4  & $\mathbf{1.78}$ \\
				\bottomrule
			\end{tabular}
		}
	\end{table*}
	
	The literature reference column places our MACE+PAN+PGS results in the external context. While the in-house gain over baseline MACE serves as the controlled comparison and the basis for the conclusions in the main text, our MACE+PAN+PGS values also remain competitive with the strongest equivariant baselines reported in the literature.

	\section{Full stress benchmark}
	\label{sec:si_stress}
	
	Section~\ref{subsec:headline_results} of the main text reports \textsc{MACE+PAN+PGS} stress accuracy on the two systems with available stress labels (Ag and the LiF/Li--F short-range ionic subset). Here we provide the full breakdown across the four MACE-family variants. Stress accuracy is not reported for Si because the public \texttt{Silicon\_MLIP\_datasets} release used in this work does not provide stress labels; the stress weight in the training loss was therefore set to zero for Si.
	
	\begin{table}[t]
		\centering
		\caption{\textbf{Full stress MAE breakdown for MACE-family variants.}
			Stress MAE in meV/\AA$^3$ on the test split for Ag (author-generated) and the short-range ionic LiF/Li--F subset (MPtrj-derived). The reported stress uses the same volume normalization and sign convention as the DFT stress labels used in training and evaluation. All entries are mean $\pm$ standard deviation over five training seeds. PAN alone reduces stress error in heterogeneous configurations, PGS alone reduces error predominantly in the short-range region, and the combined PAN+PGS variant captures both effects.}
		\label{tab:stress_mae}
		\begin{tabular}{lcccc}
			\toprule
			\textbf{System}
			& \textbf{MACE}
			& \textbf{MACE+PAN}
			& \textbf{MACE+PGS}
			& \textbf{MACE+PAN+PGS} \\
			\midrule
			Ag           & $11.8 \pm 0.6$ & $10.6 \pm 0.5$ & $9.7 \pm 0.5$ & $\mathbf{8.6 \pm 0.4}$ \\
			LiF/Li--F    & $17.4 \pm 0.9$ & $15.8 \pm 0.8$ & $14.2 \pm 0.7$ & $\mathbf{12.6 \pm 0.6}$ \\
			\bottomrule
		\end{tabular}
	\end{table}
	
	The relative reduction of \textsc{MACE+PAN+PGS} over baseline \textsc{MACE} is $27\%$ on Ag and $28\%$ on the LiF/Li--F subset, tracking the corresponding force-error reductions reported in Table~\ref{tab:cross_system_mae_md17style} of the main text. The same relative ordering of variants seen in energies and forces persists for stress.

	\section{Short-range physics: distortion descriptor and ablations}
	\label{sec:si_short_range_full}
	
	This section provides definitions and extended ablation studies that support the short-range and mechanistic analyses in the main text (Section~\ref{subsec:mechanism_results}).

	\subsection{Local distortion descriptor}
	\label{sec:si_distortion_descriptor}
	The continuous distortion descriptor used in Fig.~\ref{fig:mechanism_results}a of the main text is defined at the atom level. For atom $i$, the distortion magnitude is
	\begin{equation}
		D_i = \alpha\,\Delta N_i + \beta\,\sigma_{c,i} + \gamma\,\delta\rho_i,
	\end{equation}
	where \(\Delta N_i\) is the coordination deficit relative to a bulk reference, \(\sigma_{c,i}\) is the cutoff-weighted variance of nearest-neighbor cosine angles \(c_{jik}=\hat{\mathbf r}_{ij}\cdot\hat{\mathbf r}_{ik}\), and \(\delta\rho_i\) is the deviation of the local atomic density from the bulk average. The three components are individually rescaled to $[0,1]$, combined with weights $(\alpha,\beta,\gamma) = (1/3, 1/3, 1/3)$ to give equal emphasis, and the combined descriptor $D_i$ is then linearly rescaled to $[0,1]$ across the dataset. This yields a monotonic ranking of atomic environments from bulk-like ($D_i \to 0$) to highly distorted ($D_i \to 1$) regimes: surface, defect-adjacent, and strongly strained.
	
	For the binned analysis in Fig.~\ref{fig:mechanism_results}, we partitioned the test atoms into five equal-population quantiles in $D_i$ and computed the force MAE separately within each quantile.
	
	The class-resolved breakdown in Table~\ref{tab:local_rmse} corroborates the continuous trend in the main text. Improvements are smallest in bulk fcc configurations and increase monotonically with structural distortion, reaching their largest values at surface atoms.
	
	\begin{table}[t]
		\centering
		\caption{\textbf{Class-resolved energy and force RMSE for Ag.}
			Energy RMSE in meV/atom and force RMSE in meV/\AA\ for bulk fcc, distorted (strain), defect (vacancy and divacancy), and surface (\((100)\), \((110)\), \((111)\)) environments. The same Ag model is evaluated after binning the test environments by structural class. Entries are mean $\pm$ standard deviation over five training seeds. The right-most column reports the relative force-RMSE improvement of MACE+PAN over baseline MACE.}
		\label{tab:local_rmse}
		\begin{tabular}{lccc}
			\toprule
			\textbf{Structure class}
			& \textbf{MACE ($E$ / $F$)}
			& \textbf{MACE+PAN ($E$ / $F$)}
			& \textbf{$\Delta F$ / $F_{\mathrm{MACE}}$} \\
			\midrule
			Bulk fcc            & $1.20 \pm 0.06$ / $22.5 \pm 0.6$ & $1.10 \pm 0.05$ / $20.3 \pm 0.5$ & $9.8\%$  \\
			Distorted (strain)  & $2.62 \pm 0.13$ / $37.3 \pm 1.1$ & $2.10 \pm 0.10$ / $29.7 \pm 0.9$ & $20.4\%$ \\
			Defect (vacancy)    & $5.05 \pm 0.22$ / $58.7 \pm 1.7$ & $3.51 \pm 0.16$ / $41.5 \pm 1.2$ & $29.3\%$ \\
			Surface             & $7.05 \pm 0.30$ / $74.5 \pm 2.0$ & $4.22 \pm 0.18$ / $46.9 \pm 1.4$ & $37.0\%$ \\
			\bottomrule
		\end{tabular}
	\end{table}

	\subsection{Ag structural-class held-out evaluation}
	\label{sec:si_ag_heldout}
	
	To test whether the scalar-pathway correction remains effective under a stronger environment shift than a random configuration-level split, we performed two Ag structural-class held-out evaluations. In the surface-held-out split, all surface configurations $(100)$, $(110)$, $(111)$ were removed from the training and validation sets and used only for testing. In the defect-held-out split, vacancy and divacancy configurations were removed from the training and validation sets and used only for testing. Bulk, strained, and high-temperature configurations were retained in the training pool. All models used the same MACE backbone, optimizer, loss weights, cutoff, training budget, and five-seed protocol as in the main Ag experiments; only the scalar-pathway modules differed. This evaluation complements the class-resolved breakdown reported in Table~\ref{tab:local_rmse}, which evaluates class-resolved performance under the main random split, by additionally probing class-level out-of-distribution generalization.
	
	\begin{table}[t]
		\centering
		\caption{\textbf{Ag structural-class held-out evaluation.}
			Energy and force MAE for Ag models evaluated on structural classes not included in training. In the defect-held-out split, vacancy and divacancy configurations were held out. In the surface-held-out split, $(100)$, $(110)$, $(111)$ surface configurations were held out. Values are the mean $\pm$ standard deviation over five training seeds. Energies are in meV/atom, and forces are in meV/\AA. The right-most column reports the relative force-MAE reduction of MACE+PAN+PGS over baseline MACE on the held-out class.}
		\label{tab:ag_heldout_classes}
		\setlength{\tabcolsep}{4pt}
		\renewcommand{\arraystretch}{1.08}
		\footnotesize
		\begin{tabular}{llccccc}
			\toprule
			\textbf{Held-out class} & \textbf{Quantity} &
			\textbf{MACE} & \textbf{MACE+PAN} & \textbf{MACE+PGS} & \textbf{MACE+PAN+PGS} &
			\textbf{Reduction} \\
			\midrule
			Defect &
			Energy & $5.82 \pm 0.25$ & $4.52 \pm 0.20$ & $4.78 \pm 0.21$ & $\mathbf{3.91 \pm 0.17}$ & $32.8\%$ \\
			&
			Force  & $66.8 \pm 2.0$  & $49.8 \pm 1.5$  & $52.6 \pm 1.6$  & $\mathbf{43.9 \pm 1.3}$  & $34.3\%$ \\
			\midrule
			Surface &
			Energy & $7.64 \pm 0.34$ & $5.68 \pm 0.25$ & $6.02 \pm 0.27$ & $\mathbf{4.86 \pm 0.22}$ & $36.4\%$ \\
			&
			Force  & $84.6 \pm 2.7$  & $61.9 \pm 2.0$  & $65.4 \pm 2.1$  & $\mathbf{55.2 \pm 1.7}$  & $34.8\%$ \\
			\bottomrule
		\end{tabular}
	\end{table}
	
	Absolute errors increase on both held-out splits relative to the main random-split Ag results (Table~\ref{tab:cross_system_mae_md17style}), as expected for a harder evaluation in which the model has not seen the corresponding structural class during training. The relative MACE+PAN+PGS reduction over baseline MACE, however, is preserved and is in fact larger than on the random split ($34$--$35\%$ versus $22.3\%$). This is consistent with the PAN mechanism: $\rho_i$ and $\eta_i$ encode coordination loss and angular distortion, precisely the geometric perturbations that distinguish surface and defect environments from bulk fcc. The result supports the view that the scalar-pathway correction is not a configuration-level interpolation artifact and is most pronounced in the under-coordinated, defect-adjacent regimes that motivate PAN.

	\subsection{Scalar mixers under matched PAN pooling}
	\label{sec:si_scalar_mixer_ablation}
	
	To test whether spectral enrichment via PGS is necessary or whether any sufficiently expressive scalar mixer is enough, we replaced the scalar mixer in the MACE+PAN architecture with a series of alternative mixers under matched backbone, PAN pooling, training data, and parameter budget. The mixers tested were: a standard SiLU-MLP (\textsc{MLP}); a Kolmogorov--Arnold Network (\textsc{KAN})\cite{liu2024kan}; a kernel-activation-function network (\textsc{KAF})\cite{kaf2018}; a learnable rational-function variant (\textsc{LibraKAN}); and the proposed \textsc{PGS} edge+readout module. Table~\ref{tab:scalar_mixer_ablation} reports the resulting energy and force MAE on Ag, Si, and the LiF/Li--F short-range ionic subset. The PGS column matches the \textsc{MACE+PAN+PGS} entries of Table~\ref{tab:cross_system_mae_md17style} of the main text by construction.
	
	\begin{table}[t]
		\centering
		\caption{\textbf{Scalar-mixer ablation under matched PAN pooling.}
			Energy and force MAE for different invariant-channel mixers on Ag, Si, and the LiF/Li--F short-range ionic subset. All entries share the same MACE backbone and PAN pooling, with only the scalar mixer varied; parameter and FLOPs budgets are matched within $\pm 5\%$. Energies are in meV/atom, forces in meV/\AA. Values are the mean $\pm$ standard deviation over five training seeds. The PGS column reproduces the MACE+PAN+PGS entries of Table~\ref{tab:cross_system_mae_md17style} of the main text.}
		\label{tab:scalar_mixer_ablation}
		\setlength{\tabcolsep}{4pt}
		\begin{tabular}{llccccc}
			\toprule
			\textbf{System} & \textbf{Quantity}
			& \textbf{MLP}
			& \textbf{KAN}
			& \textbf{KAF}
			& \textbf{LibraKAN}
			& \textbf{PGS (ours)} \\
			\midrule
			Ag (metallic) & Energy & $1.32 \pm 0.07$ & $1.30 \pm 0.06$ & $1.25 \pm 0.06$ & $1.18 \pm 0.06$ & $\mathbf{1.12 \pm 0.05}$ \\
			& Forces & $32.5 \pm 0.8$  & $31.9 \pm 0.8$  & $30.7 \pm 0.7$  & $29.4 \pm 0.7$  & $\mathbf{27.9 \pm 0.7}$ \\
			\midrule
			Si (covalent) & Energy & $2.30 \pm 0.10$ & $2.25 \pm 0.09$ & $2.18 \pm 0.09$ & $2.05 \pm 0.08$ & $\mathbf{1.92 \pm 0.08}$ \\
			& Forces & $41.3 \pm 1.1$  & $40.5 \pm 1.1$  & $39.0 \pm 1.0$  & $37.5 \pm 1.0$  & $\mathbf{35.8 \pm 1.0}$ \\
			\midrule
			LiF/Li--F     & Energy & $2.95 \pm 0.12$ & $2.85 \pm 0.11$ & $2.72 \pm 0.11$ & $2.55 \pm 0.10$ & $\mathbf{2.41 \pm 0.10}$ \\
			& Forces & $47.5 \pm 1.4$  & $46.2 \pm 1.3$  & $44.0 \pm 1.3$  & $41.8 \pm 1.2$  & $\mathbf{39.6 \pm 1.2}$ \\
			\bottomrule
		\end{tabular}
	\end{table}
	
	PGS yields the lowest energy and force MAEs across all three systems. The ordering \textsc{MLP} $>$ \textsc{KAN} $>$ \textsc{KAF} $>$ \textsc{LibraKAN} $>$ \textsc{PGS} is consistent across systems and across all five seeds, supporting the mechanistic claim in the main text that the gain is not explained by adding any high-capacity scalar mixer but is specifically associated with spectral enrichment of the invariant channel.

	\subsection{Component and placement ablations}
	\label{sec:si_pgs_ablation}
	
	To separate the contributions of the two PAN--PGS components and the role of PGS placement, we ran two further ablations under the same MACE backbone, training data, and budget.
	
	\medskip \noindent\textbf{Component ablation: PAN only, PGS only, PAN+PGS.}
	We isolated the effect of each component by comparing three variants: MACE+PAN with the default smooth scalar mixer (``PAN only''); baseline MACE with the PGS edge+readout module on top of uniform sum aggregation (``PGS only''); and the full MACE+PAN+PGS model. Force MAE on Ag, Si, and the LiF/Li--F subset is reported in Table~\ref{tab:component_wise_ablation}. The ``PAN only'' column matches the MLP entry of Table~\ref{tab:scalar_mixer_ablation} by construction (default smooth mixer with PAN). The ``PAN+PGS'' column matches the \textsc{MACE+PAN+PGS} entry of Table~\ref{tab:cross_system_mae_md17style} of the main text.
	
	\begin{table}[t]
		\centering
		\caption{\textbf{Component ablation for the PAN--PGS architecture.}
			Force MAE in meV/\AA\ for the three architectural variants. All variants
			share the same MACE backbone and training protocol. Values are the mean
			$\pm$ standard deviation over five training seeds.}
		\label{tab:component_wise_ablation}
		\begin{tabular}{lccc}
			\toprule
			\textbf{System}
			& \textbf{PAN only}
			& \textbf{PGS only}
			& \textbf{PAN + PGS} \\
			\midrule
			Ag         & $32.5 \pm 0.8$ & $31.5 \pm 0.8$ & $\mathbf{27.9 \pm 0.7}$ \\
			Si         & $41.3 \pm 1.1$ & $38.7 \pm 1.0$ & $\mathbf{35.8 \pm 1.0}$ \\
			LiF/Li--F  & $47.5 \pm 1.4$ & $44.5 \pm 1.3$ & $\mathbf{39.6 \pm 1.2}$ \\
			\bottomrule
		\end{tabular}
	\end{table}
	
	The component comparison shows that PAN and PGS target distinct bottlenecks: neighborhood-adaptive scalar aggregation and scalar spectral resolution, respectively. The combined model yields the largest reduction across all systems, with no evidence of redundancy between the two modules.
	
	\medskip \noindent\textbf{Placement ablation: edge, node, readout, edge+readout.}
	We next tested the design choice of applying PGS at the edge level and at the readout level, by inserting the same PGS module at four candidate locations under matched PAN pooling: edge-only, where PGS is applied to invariant edge channels before tensor-product coupling; node-only, where PGS is applied to invariant node channels inside interaction blocks; readout-only, where PGS is applied only to the final scalar readout; and edge+readout, the proposed configuration.
	
	\begin{table}[t]
		\centering
		\caption{\textbf{PGS insertion-location ablation.}
			Force MAE in meV/\AA\ for four PGS placements under matched PAN pooling.
			Values are the mean $\pm$ standard deviation over five training seeds.}
		\label{tab:pgs_insertion_location}
		\begin{tabular}{lcccc}
			\toprule
			\textbf{System}
			& \textbf{Edge}
			& \textbf{Node}
			& \textbf{Readout}
			& \textbf{Edge + Readout} \\
			\midrule
			Ag         & $30.4 \pm 0.8$ & $34.0 \pm 0.9$ & $29.5 \pm 0.8$ & $\mathbf{27.9 \pm 0.7}$ \\
			Si         & $38.5 \pm 1.1$ & $43.5 \pm 1.2$ & $37.4 \pm 1.0$ & $\mathbf{35.8 \pm 1.0}$ \\
			LiF/Li--F  & $43.6 \pm 1.3$ & $49.5 \pm 1.4$ & $42.2 \pm 1.3$ & $\mathbf{39.6 \pm 1.2}$ \\
			\bottomrule
		\end{tabular}
	\end{table}
	
	Three observations follow. First, node-level PGS underperforms even baseline MACE+PAN (the MLP entry of Table~\ref{tab:scalar_mixer_ablation}), indicating that spectral mixing inside the interaction block destabilizes the multiplicative tensor-product structure on which equivariant scaling depends. Node-channel scalars are therefore best left unmodified. Second, edge-only and readout-only PGS each improve over MACE+PAN, with readout-only marginally stronger than edge-only across the three systems. Third, the edge+readout configuration outperforms either placement alone across all systems, supporting the choice in the main architecture: edge-level PGS refines short-range geometric information before tensor-product coupling, while readout-level PGS enriches the final invariant latent before energy prediction. Together, the placement controls rule out the explanation that the PAN--PGS gain arises solely from inserting additional parameters near the readout, since both standalone placements were tested.
	
	\medskip \noindent\textbf{Capacity robustness.}
	To confirm that the observed gains are not sensitive to the specific PGS capacity, we varied the number of spectral components $M \in \{2, 4, 8, 12, 16, 24\}$ and the readout projection dimension $d \in \{8, 16, 32\}$. Force MAE varied by less than $4\%$ across this range in every system, and the ordering of mixer performance in Table~\ref{tab:scalar_mixer_ablation} (\textsc{MLP} $>$ \textsc{KAN} $>$ \textsc{KAF} $>$ \textsc{LibraKAN} $>$ \textsc{PGS}) was preserved. The configuration used in the main text ($M = 16$, $d = $ matched to the readout hidden dimension) sits within the plateau region for both axes.

	\subsection{Capacity-matched MACE controls}
	\label{sec:si_capacity_matched}
	
	A natural alternative explanation of the gain reported in the main text is that the same improvement could be obtained simply by enlarging the MACE backbone. To test this, we trained three capacity-matched MACE controls under the same training protocol, dataset splits, optimizer, and seed schedule used for the main results, varying only one capacity-related hyperparameter at a time: \textbf{MACE-wide}, in which the hidden channel multiplicity is increased from $128$ to $192$ ($\sim$1.65$\times$ parameters); \textbf{MACE-deep}, in which the number of interaction blocks is increased from $2$ to $3$ ($\sim$1.42$\times$ parameters); and \textbf{MACE-Nrad16}, in which the radial Bessel basis size is doubled from $N_{\mathrm{rad}}=8$ to $N_{\mathrm{rad}}=16$ ($\sim$1.08$\times$ parameters). The third control specifically targets the alternative hypothesis that PGS's gain is recoverable by simply enlarging the radial basis, since PGS is presented as a spectral-resolution module.
	
	\begin{table*}
		\centering
		\caption{\textbf{Capacity-matched MACE controls vs MACE+PAN+PGS.}
			Force MAE (meV/\AA) on Ag, Si, and the LiF/Li--F short-range ionic subset for capacity-matched MACE variants and for MACE+PAN+PGS. All variants use the same training protocol, splits, optimizer, and 5-seed schedule as for the main results. Parameters and inference FLOPs are normalized to baseline MACE. PAN+PGS achieves substantially lower force MAE than every capacity-matched control while costing roughly an order of magnitude less in additional parameters and FLOPs. Values are the mean $\pm$ standard deviation over five training seeds.}
		\label{tab:capacity_matched_mace}
		\setlength{\tabcolsep}{4pt}
		\renewcommand{\arraystretch}{1.10}
		\footnotesize
		
		\resizebox{1\textwidth}{!}{
			\begin{tabular}{lcccccc}
				\toprule
				\textbf{Model} & \textbf{Params (rel.)} & \textbf{FLOPs (rel.)}
				& \textbf{$F_{\mathrm{Ag}}$} & \textbf{$F_{\mathrm{Si}}$} & \textbf{$F_{\mathrm{LiF/Li-F}}$}
				& \textbf{Ag $\Delta F$} \\
				\midrule
				MACE (baseline)        & $1.00\times$ & $1.00\times$ & $35.9 \pm 0.9$ & $47.1 \pm 1.4$ & $54.0 \pm 1.6$ & 0.0\% \\
				MACE-wide ($C{=}192$)  & $1.65\times$ & $1.50\times$ & $34.1 \pm 0.9$ & $44.8 \pm 1.3$ & $51.5 \pm 1.5$ & 5.0\% \\
				MACE-deep ($L{=}3$)    & $1.42\times$ & $1.40\times$ & $33.5 \pm 0.8$ & $44.2 \pm 1.3$ & $50.8 \pm 1.5$ & 6.7\% \\
				MACE-Nrad16            & $1.08\times$ & $1.10\times$ & $34.0 \pm 0.9$ & $44.5 \pm 1.3$ & $51.0 \pm 1.5$ & 5.3\% \\
				\midrule
				MACE + PAN + PGS (ours)& $\mathbf{1.03\times}$ & $\mathbf{1.05\times}$ & $\mathbf{27.9 \pm 0.7}$ & $\mathbf{35.8 \pm 1.0}$ & $\mathbf{39.6 \pm 1.2}$ & $\mathbf{22.3\%}$ \\
				\bottomrule
			\end{tabular}
		}
	\end{table*}
	
	The three capacity-matched controls each reduce the Ag force MAE by $5$--$7\%$ relative to baseline MACE, with similar relative reductions on Si and the LiF/Li--F subset. By contrast, MACE+PAN+PGS reduces the Ag force MAE by $22.3\%$ at $1.03\times$ the parameters and $1.05\times$ the inference FLOPs of baseline MACE. In particular, MACE-Nrad16, which directly addresses the alternative hypothesis that PGS's gain is recoverable by simply enlarging the radial basis, achieves a $5.3\%$ reduction at $1.10\times$ FLOPs---about one quarter of the PAN+PGS reduction at twice the FLOPs overhead. The capacity-matched comparison, therefore, makes ``backbone capacity'' a less likely explanation for the gain reported in Table~\ref{tab:cross_system_mae_md17style} of the main text and supports the mechanistic interpretation discussed in Section~\ref{subsec:mechanism_results}.

	\subsection{LiF/Li--F sanity check under a stricter (grouped) split}
	\label{sec:si_grouped_split}
	
	The LiF/Li--F results in the main text use a configuration-level 80/10/10 split, which can place closely related snapshots from the same MPtrj relaxation trajectory in both the training and the test set. To assess whether the relative improvement in MACE+PAN+PGS over baseline MACE holds up under a stricter held-out evaluation, we performed a sanity-check training split by Materials Project identifier. All configurations sharing a Materials Project identifier (i.e.\ all snapshots from the same parent material's relaxation trajectory) were assigned entirely to either the training or the test set, with no overlap between splits. The resulting split keeps the same overall 80/10/10 ratio.
	
	Under this stricter grouped split, the absolute energy and force MAE values increase relative to the configuration-level values reported in the main text, as expected for a harder generalization task. However, the relative improvement of MACE+PAN+PGS over baseline MACE is preserved, with reductions within $\pm 1$ percentage point of the configuration-level values (Table~\ref{tab:lif_grouped_split}). We therefore present the LiF/Li--F subset only as a short-range ionic held-out-configuration test in the main results, and use the grouped split as a sanity check that the relative scalar-pathway improvement is not specific to the configuration-level split.
	
	\begin{table}[t]
		\centering
		\caption{\textbf{LiF/Li--F grouped-split sanity check.}
			Energy and force MAE are reported for baseline MACE and MACE+PAN+PGS
			under the configuration-level split used in the main text, and a stricter
			Materials-Project-identifier-grouped split. Values are mean $\pm$ s.d. over
			five seeds. Energies are in meV/atom, and forces are in meV/\AA.}
		\label{tab:lif_grouped_split}
		\setlength{\tabcolsep}{3pt}
		\renewcommand{\arraystretch}{1.08}
		\scriptsize
		\begin{tabular}{@{}llcccc@{}}
			\toprule
			\textbf{Split} & \textbf{Qty.}
			& \textbf{MACE}
			& \shortstack{\textbf{MACE}\\\textbf{+PAN+PGS}}
			& \shortstack{\textbf{Reduction}\\\textbf{(\%)}}
			& \shortstack{\textbf{$\Delta$ vs}\\\textbf{config.}} \\
			\midrule
			Config.-level & E & $3.09 \pm 0.12$ & $\mathbf{2.41 \pm 0.10}$ & $22.0$ & --- \\
			(main text)   & F & $54.0 \pm 1.6$  & $\mathbf{39.6 \pm 1.2}$  & $26.7$ & --- \\
			\midrule
			MP-id grouped & E & $3.62 \pm 0.16$ & $\mathbf{2.84 \pm 0.13}$ & $21.5$ & $-0.5$~pp \\
			(this section)& F & $61.8 \pm 2.0$  & $\mathbf{45.4 \pm 1.5}$  & $26.5$ & $-0.2$~pp \\
			\bottomrule
		\end{tabular}
	\end{table}

	\section{Additional dynamical and structural validation}
	\label{sec:si_dynamics}
	
	This section reports three further validation tests for the dynamical behavior discussed in Section~\ref{subsec:physical_fidelity} of the main text: long-time NVE energy conservation in a smaller Ag system, the liquid Ag radial distribution function under matched conditions, and an additional flexible-molecule torsional distribution (malonaldehyde).
	
	\subsection{NVE energy conservation in a smaller system}
	\label{sec:si_small_nve}
	
	\begin{figure}[t]
		\centering
		\includegraphics[width=0.7\textwidth]{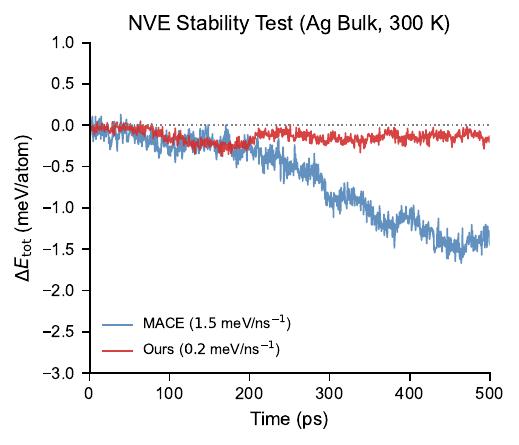}
		\caption{\textbf{NVE energy conservation for bulk Ag at 300~K (108-atom supercell).}
			Total-energy deviation $\Delta E_{\mathrm{tot}}(t)$ during a 500~ps NVE trajectory using baseline MACE and MACE+PAN+PGS. Simulations used a velocity-Verlet integrator with a 1~fs timestep and a 108-atom Ag supercell. Curves correspond to a representative seed for visual clarity. The aggregate drift across five seeds is $1.51 \pm 0.20$~meV~ns$^{-1}$ per atom for baseline MACE and $0.23 \pm 0.05$~meV~ns$^{-1}$ per atom for MACE+PAN+PGS.}
		\label{fig:s6}
	\end{figure}
	
	Supplementary Fig.~\ref{fig:s6} shows that the small-system NVE drift follows the same pattern as the large-scale 10{,}000-atom system reported in Section~\ref{subsec:physical_fidelity}. MACE+PAN+PGS substantially reduces the linear drift component and narrows the fluctuation envelope relative to the baseline. The agreement between the small- and large-scale system tests indicates that the drift improvement is a property of the architecture rather than a function of simulation size.

	\subsection{Liquid Ag radial distribution function and additional molecular torsions}
	\label{sec:si_rdf_malonaldehyde}
	
	\begin{figure}[t]
		\centering
		\includegraphics[width=0.9\textwidth]{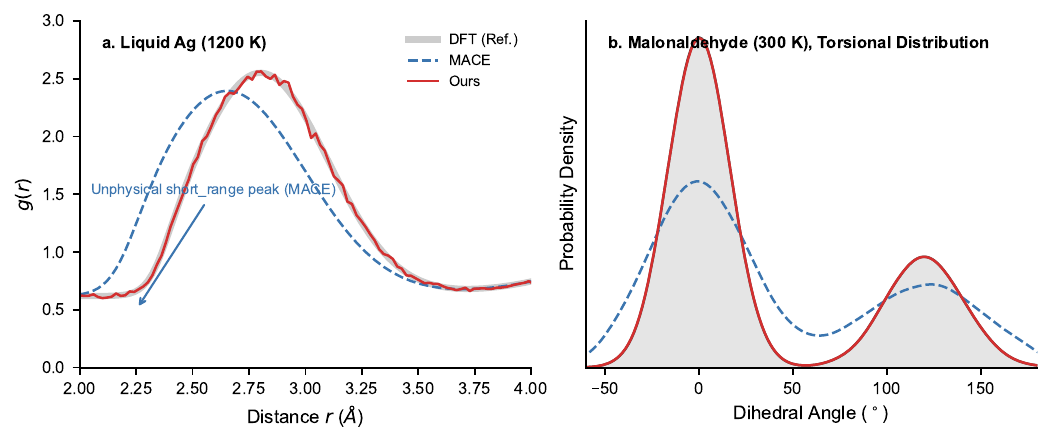}
		\caption{\textbf{Additional structural and conformational validation across metallic and molecular systems.}
			\textbf{(a)} Liquid Ag radial distribution function $g(r)$ at 1{,}200~K for DFT, baseline MACE, and MACE+PAN+PGS. \textbf{(b)} Malonaldehyde O--C--C--O torsional-angle distribution at 300~K for DFT, baseline MACE, and MACE+PAN+PGS. Both observables extend the main-text tests to one liquid and one additional flexible molecule under matched protocols.}
		\label{fig:s7}
	\end{figure}
	
	Supplementary Fig.~\ref{fig:s7}a shows the liquid Ag radial distribution function (RDF) at $1{,}200$~K. Baseline MACE allows atoms to approach unphysically closely, with a spurious sub-shell peak in $g(r)$ at $r \approx 2.3$~\AA. MACE+PAN+PGS attenuates this artifact and more closely matches the DFT short-range repulsion and medium-range coordination-shell structure. This is consistent with the spectral enrichment mechanism shown for the Ag--Ag and Si--Si compression pathways in the main text.
	
	Supplementary Fig.~\ref{fig:s7}b reports the malonaldehyde torsional distribution at 300~K. As with aspirin in Section~\ref{subsec:physical_fidelity}, baseline MACE places excess probability near the torsional barrier, whereas MACE+PAN+PGS more closely matches the sharper bimodal DFT distribution with the correct peak separation and relative occupancy. The agreement between the additional molecule and aspirin indicates that the conformational improvement is not specific to a single molecular system.
	
	\section{Detailed proof of Proposition 1: symmetry and differentiability of PAN and PGS}
	\label{sec:si_proposition_proof}
	
	This section provides a detailed proof of Proposition~\ref{prop:symmetry_diff} of the main text. The proposition states that the PAN and PGS modifications defined in Eqs.~(\ref{eq:pan_pool})--(\ref{eq:readout_pgs}) of the main text preserve (i) permutation invariance of the energy with respect to neighbor indexing, (ii) $O(3)$-equivariance of the predicted forces and stress tensor, and (iii) the differentiability required for analytic force and stress evaluation under the smooth cutoff and tapered-kernel implementation described in Supplementary Section~\ref{sec:si_algorithmic_details}.
	
	For convenience, we recall the notation. The total energy of a configuration $\{(\mathbf r_i, Z_i)\}_{i=1}^{N}$ in a cell with strain $\boldsymbol\epsilon$ is
	\begin{equation}
		E_{\mathrm{tot}}
		=
		\sum_{i=1}^{N}\varepsilon_\theta\!\left(y_i^{(0)}\right),
		\label{eq:si_energy_recall}
	\end{equation}
	where $y_i^{(0)}\in\mathbb{R}^{C_n}$ is the readout-level PGS output of Eq.~(\ref{eq:readout_pgs}), $\varepsilon_\theta$ is a smooth invariant readout MLP, and the per-atom invariant pre-readout descriptor $x_i^{(0)}$ is itself constructed from the modified scalar edge features $\hat h_{ij}^{(0)}$ (Eq.~\ref{eq:edge_pgs}) and the PAN-aggregated scalar messages $h_i^{(0)}=\sum_{j\in\mathcal N(i)}a_{ij}\hat h_{ij}^{(0)}$ (Eq.~\ref{eq:pan_pool}). The PAN gate is $a_{ij}=\sigma(f_\theta(\psi_{ij}))$ with the invariant edge descriptor $\psi_{ij}=[h_{ij}^{(0)},r_{ij}, Z_i, Z_j,\rho_i,\eta_i]$, where $\rho_i$ and $\eta_i$ are the cutoff-weighted coordination and angular-variance descriptors defined in the main text. Throughout the proof, we work under the fixed-candidate-neighbor convention used during each force and stress evaluation, so the candidate neighbor set $\mathcal N(i)$ is held fixed, and we restrict to noncoincident configurations ($r_{ij}>0$ for all $i\neq j$).
	
	\subsection{Permutation invariance}
	\label{sec:si_proof_permutation}
	
	\begin{lemma}[Permutation invariance of the per-atom scalar pathway]
		\label{lem:si_perm}
		For every atom $i$, the per-atom invariant descriptor $y_i^{(0)}$ defined through Eqs.~(\ref{eq:pan_pool})--(\ref{eq:readout_pgs}) is invariant under any permutation of the neighbor indices in $\mathcal N(i)$.
	\end{lemma}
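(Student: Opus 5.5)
The plan is to trace $y_i^{(0)}$ back through the scalar pathway layer by layer. At each stage we check that every quantity is either indexed by the unordered neighbor set $\mathcal N(i)$, entering only through sums, or is a function of one edge $(i,j)$ that moves along with its neighbor label. Fix a permutation $\pi$ of $\mathcal N(i)$; relabelling neighbor $j$ as $\pi(j)$ also carries its position $\mathbf r_j$, species $Z_j$ and all edge features with it.

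First, the edge-level objects. Here $r_{ij}$, $\hat{\mathbf r}_{ij}$, $Z_j$, and the baseline scalar edge channel $h_{ij}^{(0)}$ depend only on the pair $(i,j)$. Under $\pi$ they simply become the corresponding quantities for edge $(i,\pi(j))$. The edge-PGS output $\hat h_{ij}^{(0)}$ of Eq.~\eqref{eq:edge_pgs} is a fixed function of $(h_{ij}^{(0)},r_{ij})$, with $h_{\mathrm{low}}$, $g_\theta$ and the Fourier--Bessel basis all applied edgewise, so it inherits this covariance.

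Second, the atom-level PAN descriptors. $\rho_i$ is a sum over $j\in\mathcal N(i)$ of $f_{\mathrm{cut}}(r_{ij})$, so it is invariant because finite sums are invariant under reindexing. For $\bar c_i$ and $\eta_i$, the sums run over ordered pairs $(j,k)$ with $j\neq k$, and $\pi$ induces a bijection $(j,k)\mapsto(\pi(j),\pi(k))$ of this index set. Moreover $c_{jik}$ and the weights $f_{\mathrm{cut}}(r_{ij})f_{\mathrm{cut}}(r_{ik})$ transform accordingly. Numerators and denominators are therefore each invariant, hence so are $\bar c_i$ and $\eta_i$; note that $\eta_i$ uses the already-invariant $\bar c_i$. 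Consequently $\psi_{ij}$ and the gate $a_{ij}=\sigma(f_\theta(\psi_{ij}))$ are edge-covariant. The aggregate $h_i^{(0)}=\sum_j a_{ij}\hat h_{ij}^{(0)}$ is again a reindexed finite sum and is invariant. I will state explicitly that the gate is not softmax-normalized, but that this plays no role: there is no normalization across neighbors that could break the symmetry.

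Third, the downstream stages. $x_i^{(0)}$ is produced by the unchanged MACE interaction and product blocks from $h_i^{(0)}$ and the equivariant messages. Those blocks are themselves neighbor-sum based, so by the known permutation invariance of the MACE backbone they are invariant under relabelling. Finally, readout PGS in Eq.~\eqref{eq:readout_pgs} is a pointwise function of $x_i^{(0)}$, so $y_i^{(0)}$ is invariant.

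The main obstacle is the bookkeeping in the second step. I need to be careful that the pair sums defining $\bar c_i$ and $\eta_i$ run over a $\pi$-stable index set, which holds because the condition $j\neq k$ is preserved by a bijection. I also need to note that the regularizer $\epsilon_{\mathrm{reg}}$ is a constant and does not interfere. A second subtlety is the fixed-candidate-neighbor convention: $\mathcal N(i)$ is held fixed as a set, so $\pi$ maps it onto itself.
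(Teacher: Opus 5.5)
Your proposal is correct and follows essentially the same route as the paper's proof. Both arguments use edgewise covariance of $\hat h_{ij}^{(0)}$, invariance of $\rho_i$ and $\eta_i$ as (double) sums over a $\pi$-stable index set, invariance of the gated sum $h_i^{(0)}=\sum_j a_{ij}\hat h_{ij}^{(0)}$, the unchanged neighbor-sum structure of the MACE blocks for $x_i^{(0)}$, and the pointwise readout PGS for $y_i^{(0)}$; your bookkeeping is slightly cleaner than the paper's, since you correctly call the gate $a_{ij}$ edge-covariant (its inputs $h_{ij}^{(0)}, r_{ij}, Z_j$ move with the label) rather than a function of $\pi$-invariant inputs, and you explicitly handle $\bar c_i$ inside $\eta_i$.
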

	
	\begin{proof}
		Let $\pi:\mathcal N(i)\to\mathcal N(i)$ be an arbitrary permutation. The PGS-modified edge feature $\hat h_{ij}^{(0)}$ at fixed receiving atom $i$ is a function only of the per-edge data $(h_{ij}^{(0)},r_{ij},Z_j)$, so its value attached to neighbor $j$ is unchanged when $j$ is replaced by $\pi(j)$. The local descriptors are
		\begin{equation}
			\rho_i = \sum_{j\in\mathcal N(i)} f_{\mathrm{cut}}(r_{ij}),
			\qquad
			\eta_i = \frac{
				\sum_{j,k\in\mathcal N(i),\,j\neq k}
				f_{\mathrm{cut}}(r_{ij})\,f_{\mathrm{cut}}(r_{ik})\,(c_{jik}-\bar c_i)^2
			}{
				\epsilon_{\mathrm{reg}}
				+ \sum_{j,k\in\mathcal N(i),\,j\neq k}
				f_{\mathrm{cut}}(r_{ij})\,f_{\mathrm{cut}}(r_{ik})
			}.
		\end{equation}
		Both expressions are unordered sums over $\mathcal N(i)$ (and an unordered double sum over the multiset $\mathcal N(i)\times\mathcal N(i)$ with the $j\neq k$ constraint), and are therefore invariant under $\pi$. The PAN gate $a_{ij}=\sigma(f_\theta(\psi_{ij}))$ is consequently a function only of $\pi$-invariant inputs, and the aggregated scalar
		\begin{equation}
			h_i^{(0)}
			=
			\sum_{j\in\mathcal N(i)}a_{ij}\,\hat h_{ij}^{(0)}
			=
			\sum_{j\in\mathcal N(i)}a_{i\pi(j)}\,\hat h_{i\pi(j)}^{(0)}
		\end{equation}
		is invariant under $\pi$ because it is a sum over the unordered multiset $\mathcal N(i)$. Subsequent equivariant interaction blocks combine $\pi$-invariant scalar messages with vector and tensor irreps that are themselves aggregated by unordered sums over $\mathcal N(i)$, so the final invariant per-atom descriptor $x_i^{(0)}$ is $\pi$-invariant. The readout-level PGS [Eq.~\ref{eq:readout_pgs}] is a single-atom operator acting only on $x_i^{(0)}$, so $y_i^{(0)}$ inherits the same invariance.
	\end{proof}
	
	\begin{corollary}
		The total energy $E_{\mathrm{tot}}$ in Eq.~\eqref{eq:si_energy_recall} is invariant under any global relabelling of atomic indices, since it is a sum over atoms of per-atom quantities, each of which is itself invariant under permutations of its neighbor indices.
	\end{corollary}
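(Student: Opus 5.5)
The plan is to reduce the global statement to Lemma~\ref{lem:si_perm} plus a layer-by-layer relabelling argument. Fix a bijection $\sigma$ of $\{1,\dots,N\}$ and define the relabelled configuration by $(\mathbf r'_{\sigma(i)},Z'_{\sigma(i)})=(\mathbf r_i,Z_i)$. I will also relabel the fixed candidate neighbor list consistently, so that $\mathcal N'(\sigma(i))=\sigma(\mathcal N(i))$. This is the natural reading of the fixed-candidate-neighbor convention, since the candidate set is determined by geometry and species, not by index. The goal is to show that every per-atom quantity transforms as $q'_{\sigma(i)}=q_i$. Once this holds for $y_i^{(0)}$, the corollary follows by reindexing the finite sum:
\[
E'_{\mathrm{tot}}=\sum_{i}\varepsilon_\theta\big(y'^{(0)}_{\sigma(i)}\big)=\sum_i\varepsilon_\theta\big(y_i^{(0)}\big)=E_{\mathrm{tot}} .
\]

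\textbf{Edge-level and descriptor-level quantities.} For each edge I check that all inputs are unchanged under the relabelling. The distances satisfy $r'_{\sigma(i)\sigma(j)}=r_{ij}$ and the unit vectors satisfy $\hat{\mathbf r}'_{\sigma(i)\sigma(j)}=\hat{\mathbf r}_{ij}$; the species satisfy $Z'_{\sigma(i)}=Z_i$. Hence the embeddings $B_n(r)Y_\ell^m(\hat{\mathbf r})$, the scalar edge channel, and the PGS edge output $\hat h^{(0)}$ of Eq.~\eqref{eq:edge_pgs} carry over edge by edge. The descriptors $\rho_i$, $\bar c_i$ and $\eta_i$ are sums and double sums over $\mathcal N(i)$. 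Because $\sigma$ restricts to a bijection $\mathcal N(i)\to\mathcal N'(\sigma(i))$, the same reordering argument as in Lemma~\ref{lem:si_perm} gives $\rho'_{\sigma(i)}=\rho_i$ and $\eta'_{\sigma(i)}=\eta_i$. It follows that $\psi'_{\sigma(i)\sigma(j)}=\psi_{ij}$, the gates satisfy $a'_{\sigma(i)\sigma(j)}=a_{ij}$, and the PAN aggregate of Eq.~\eqref{eq:pan_pool} satisfies $h'^{(0)}_{\sigma(i)}=h_i^{(0)}$.

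\textbf{Induction over interaction blocks.} Next I argue by induction on the layer index $t$, with the hypothesis that all node irreps (scalar and tensor) obey $x'^{(t)}_{\sigma(i)}=x^{(t)}_i$. The base case holds because initial node features are species embeddings. For the inductive step, each MACE block (messages, tensor products, symmetric contractions, updates) is built from per-edge functions of the node features at both ends and of the edge geometry. These are combined by unordered sums over $\mathcal N(i)$, followed by per-node maps with shared weights. The step then follows from the edge-level identities above together with the sum reindexing. Applying this at the final block gives $x'^{(0)}_{\sigma(i)}=x^{(0)}_i$. The readout PGS of Eq.~\eqref{eq:readout_pgs} is a single-atom map with shared parameters, so $y'^{(0)}_{\sigma(i)}=y^{(0)}_i$.

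\textbf{Main obstacle.} The key subtlety is that Lemma~\ref{lem:si_perm} only covers reordering of neighbors around a fixed receiving atom. A global relabelling also changes the receiving index and the indices of neighbors' own features in deeper layers. This is why the induction must be over node states rather than a direct appeal to the lemma. It also requires that the neighbor list and parameters never depend on absolute indices, which I will state explicitly as part of the fixed-candidate convention.
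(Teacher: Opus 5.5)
Your proof is correct, but it is organized differently from the paper's. The paper gives no separate proof. Its justification is the one line inside the corollary: $E_{\mathrm{tot}}$ is a sum of per-atom terms, and each $y_i^{(0)}$ is invariant under reordering of $\mathcal N(i)$ by Lemma~\ref{lem:si_perm}, with each $y_i^{(0)}$ implicitly treated as a label-free function of its environment.

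You instead prove a stronger statement: the whole atom-indexed family is permutation-equivariant, $y^{\prime(0)}_{\sigma(i)}=y_i^{(0)}$. You show this by induction over interaction blocks, and only then reindex the sum. That equivariance is exactly what the one-line argument needs but never states. Invariance under reordering the neighbors of a fixed receiving atom does not by itself cover relabelling the receiver, or relabelling the neighbor states that feed deeper layers. The lemma's closing sentence about subsequent blocks gestures at this, and your induction is the rigorous version of it.

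Your route also makes explicit two hypotheses the corollary uses silently. The candidate neighbor list must be index-independent, with $\mathcal N'(\sigma(i))=\sigma(\mathcal N(i))$; for periodic cells, image neighbors $(j,\mathbf S)$ map to $(\sigma(j),\mathbf S)$. And the parameters must depend on atoms only through shared or species-dependent weights.

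The paper's route buys brevity and reuse of the lemma. Yours buys a complete argument, in which the lemma survives only as the sum-reindexing step for $\rho_i$, $\eta_i$ and the PAN aggregate.

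One small caveat: before the induction, you claim that $h_{ij}^{(0)}$ carries over edge by edge. That relies on the paper's description of $h_{ij}^{(0)}$ as the $\ell=0$ pair embedding. If it depended on node states, as MACE messages generally do, that identity (and the gate identity $a'_{\sigma(i)\sigma(j)}=a_{ij}$) would simply move inside your inductive step, and the conclusion is unchanged.
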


	\subsection{$O(3)$-equivariance of the forces and stress tensor}
	\label{sec:si_proof_o3}
	
	\begin{lemma}[$O(3)$-invariance of the energy]
		\label{lem:si_o3_invariance}
		Under a global rotation--inversion $R\in O(3)$ acting on atomic positions by $\mathbf r_i\mapsto R\mathbf r_i$, $E_{\mathrm{tot}}(R\mathbf r_1,\ldots,R\mathbf r_N)=E_{\mathrm{tot}}(\mathbf r_1,\ldots,\mathbf r_N)$.
	\end{lemma}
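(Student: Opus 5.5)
The plan is to prove the invariance by tracing every quantity that enters $E_{\mathrm{tot}}$ back to the atomic positions. We will show that, at every stage of the scalar pathway, each quantity is either a function of $O(3)$-invariant geometric data or transforms as a fixed irrep of $O(3)$. Let $\mathbf r_i' = R\mathbf r_i$. Then $\mathbf r_j'-\mathbf r_i' = R(\mathbf r_j-\mathbf r_i)$. Because $R$ is orthogonal, $r_{ij}' = r_{ij}$ and $\hat{\mathbf r}_{ij}' = R\hat{\mathbf r}_{ij}$. This gives $c_{jik}' = \hat{\mathbf r}_{ij}^\top R^\top R\,\hat{\mathbf r}_{ik} = c_{jik}$. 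Since the fixed-candidate-neighbor convention holds the sets $\mathcal N(i)$ fixed, it follows immediately that $f_{\mathrm{cut}}(r_{ij})$, $\rho_i$, $\bar c_i$ and $\eta_i$ are unchanged; the same holds for $Z_i$ and $Z_j$, which do not depend on positions.

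Next we handle the equivariant backbone. The edge embedding $B_n(r_{ij})Y_\ell^m(\hat{\mathbf r}_{ij})$ transforms as $Y_\ell^m(R\hat{\mathbf r}) = \sum_{m'} D^{\ell}_{mm'}(R)\,Y_\ell^{m'}(\hat{\mathbf r})$, with the parity factor $(-1)^\ell$ included for improper $R$, consistent with the $e/o$ irrep labels. We will invoke the standard equivariance of MACE's tensor products, Clebsch--Gordan contractions and symmetric contractions, which the statement leaves untouched. By induction over the interaction blocks, every $\ell>0$ feature transforms by $D^\ell(R)$ and every $\ell=0$ channel is invariant, and this induction includes the scalar edge channel $h^{(0)}_{ij}$. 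The key step is to check that PAN and PGS preserve the inductive hypothesis on the $\ell=0$ channels. Edge PGS, Eq.~\eqref{eq:edge_pgs}, combines $h_{ij}^{(0)}$ and $r_{ij}$ through learned maps and the radial basis functions $B^{\mathrm{FB}}_m(r_{ij})$, so its output is invariant. The PAN gate $a_{ij}=\sigma(f_\theta(\psi_{ij}))$ has only invariant inputs, so it is invariant. The aggregate $h_i^{(0)}=\sum_j a_{ij}\hat h^{(0)}_{ij}$ is then a sum of products of invariant scalars, hence invariant. Importantly, these scalars enter the tensor products only as scalar weights multiplying irreps, and multiplying an $\ell$-irrep by an invariant scalar preserves its transformation law, so the induction continues through the next block.

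Finally, $x_i^{(0)}$ is invariant. Readout PGS, Eq.~\eqref{eq:readout_pgs}, is a pointwise function of $x_i^{(0)}$ built from $u_m^\top x_i^{(0)}$, $K^{\mathrm{ES}}_m$ and $h_{\mathrm{low}}$, so $y_i^{(0)}$ is invariant. Hence each $\varepsilon_\theta(y_i^{(0)})$ is invariant, and so is their sum $E_{\mathrm{tot}}$.

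I expect the main obstacle to be stating the backbone's equivariance precisely enough to serve as an inductive hypothesis. The proof must make explicit that PAN and PGS outputs are used only as $\ell=0$ inputs, and are never mixed into $\ell>0$ channels except as multiplicative invariant weights. I would handle this by citing the equivariance of the unmodified MACE blocks as given, and then checking only the interface: the modified scalars are valid $0e$ inputs, meaning they are invariant under both rotations and inversion, so they cannot break the equivariance of any downstream block. The regularizer $\epsilon_{\mathrm{reg}}$ and the cutoff envelopes raise no issue here, because they depend only on invariant quantities.
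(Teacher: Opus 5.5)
Your proposal is correct and follows essentially the same route as the paper's proof. Both arguments check that every entry of $\psi_{ij}$ is invariant: the distances $r_{ij}$, the atomic numbers, $\rho_i$, $\eta_i$ (through $c_{jik}$), and the backbone scalar $h_{ij}^{(0)}$. From this they conclude that $a_{ij}$, $\hat h_{ij}^{(0)}$, $x_i^{(0)}$ and $y_i^{(0)}$ are invariant, so their sum is too. Your explicit induction over interaction blocks, together with the remark that the modified scalars re-enter the backbone only as parity-even $0e$ weights, spells out what the paper compresses into ``the Clebsch--Gordan machinery is unchanged and equivariant by construction.''
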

	
	\begin{proof}
		Each entry of the PAN gate input $\psi_{ij}=[h_{ij}^{(0)},r_{ij},Z_i,Z_j,\rho_i,\eta_i]$ is built from $O(3)$-invariant primitives:
		\begin{enumerate}
			\item $r_{ij}=\|\mathbf r_j-\mathbf r_i\|$ is invariant by orthogonality of $R$.
			\item Atomic numbers $Z_i,Z_j$ are scalar labels and unchanged.
			\item $\rho_i=\sum_{j\in\mathcal N(i)}f_{\mathrm{cut}}(r_{ij})$ depends only on invariant distances.
			\item The angular variance $\eta_i$ uses $c_{jik}=\hat{\mathbf r}_{ij}\cdot\hat{\mathbf r}_{ik}$, which is $O(3)$-invariant because $(R\hat{\mathbf r}_{ij})\cdot(R\hat{\mathbf r}_{ik})=\hat{\mathbf r}_{ij}^\top R^\top R\,\hat{\mathbf r}_{ik}=\hat{\mathbf r}_{ij}\cdot\hat{\mathbf r}_{ik}$.
			\item $h_{ij}^{(0)}$ is the $\ell{=}0$ component of the MACE pair embedding and transforms as an $O(3)$ scalar by construction of the backbone.
		\end{enumerate}
		Therefore $a_{ij}$ is $O(3)$-invariant. The edge-level PGS [Eq.~\ref{eq:edge_pgs}] applies a real-valued radial basis $B_m^{\mathrm{FB}}(r_{ij})$ to the invariant distance and modifies only the $\ell{=}0$ channel, so $\hat h_{ij}^{(0)}$ is also $O(3)$-invariant. The Clebsch--Gordan tensor-product machinery of MACE is unchanged and is $O(3)$-equivariant by construction; in particular, the final invariant per-atom descriptor $x_i^{(0)}$ transforms as an $O(3)$ scalar. The readout-level PGS [Eq.~\ref{eq:readout_pgs}] acts on $x_i^{(0)}$ via the learned scalar projection $u_m^\top x_i^{(0)}$, so $y_i^{(0)}$ is also an $O(3)$ scalar. The total energy is a sum of $O(3)$ scalars and is therefore $O(3)$-invariant.
	\end{proof}
	
	\begin{corollary}[Tensorial transformation of forces and stress]
		\label{cor:si_force_stress_transform}
		The predicted forces and stress tensor obtained as analytic derivatives of $E_{\mathrm{tot}}$ transform as $\mathbf F_i\mapsto R\mathbf F_i$ and $\boldsymbol\sigma\mapsto R\boldsymbol\sigma R^\top$ under any $R\in O(3)$.
	\end{corollary}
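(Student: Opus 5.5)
The plan is to obtain both transformation laws from Lemma~\ref{lem:si_o3_invariance} by differentiating the invariance identity with the chain rule. No property of PAN or PGS beyond energy invariance is needed. For the forces, I define $\tilde E(\mathbf r_1,\ldots,\mathbf r_N)=E_{\mathrm{tot}}(R\mathbf r_1,\ldots,R\mathbf r_N)$. By Lemma~\ref{lem:si_o3_invariance}, $\tilde E=E_{\mathrm{tot}}$ identically on noncoincident configurations. Differentiability there comes from the smooth cutoff and tapered-kernel assumptions stated in Proposition~\ref{prop:symmetry_diff}(iii). Differentiating both sides with respect to $\mathbf r_i$ gives
\begin{equation*}
\nabla_{\mathbf r_i}E_{\mathrm{tot}}(\{\mathbf r\}) = R^\top (\nabla_{\mathbf r_i}E_{\mathrm{tot}})(\{R\mathbf r\}).
\end{equation*}
Multiplying by $-R$ and using $RR^\top=I$ yields $\mathbf F_i(\{R\mathbf r\})=R\,\mathbf F_i(\{\mathbf r\})$, which is the claimed law $\mathbf F_i\mapsto R\mathbf F_i$. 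The argument never uses $\det R=1$, so it covers inversions as well as proper rotations.

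For the stress, I will use the strain parametrization implicit in Eq.~\eqref{eq:forces_stress}. Positions and cell vectors are deformed as $\mathbf r\mapsto(I+\boldsymbol\epsilon)\mathbf r$, and I write $E(\boldsymbol\epsilon;\{\mathbf r\},\mathbf h)$ for the energy of the strained configuration with cell $\mathbf h$. Rotating the reference configuration gives
\begin{equation*}
E(\boldsymbol\epsilon;\{R\mathbf r\},R\mathbf h)=E_{\mathrm{tot}}\big(\{(I+\boldsymbol\epsilon)R\mathbf r\}\big)=E_{\mathrm{tot}}\big(\{R(I+R^\top\boldsymbol\epsilon R)\mathbf r\}\big).
\end{equation*}
By Lemma~\ref{lem:si_o3_invariance} this equals $E(R^\top\boldsymbol\epsilon R;\{\mathbf r\},\mathbf h)$. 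Periodic images are handled by the same argument, since the cell vectors are rotated together with the positions. The volume is unchanged because $|\det R|=1$. Differentiating at $\boldsymbol\epsilon=0$ with respect to the matrix entries, the map $\boldsymbol\epsilon\mapsto R^\top\boldsymbol\epsilon R$ has adjoint $G\mapsto RGR^\top$, so $\boldsymbol\sigma\mapsto R\boldsymbol\sigma R^\top$. Because $R^\top\boldsymbol\epsilon R$ is symmetric whenever $\boldsymbol\epsilon$ is, the restriction to symmetric strains is compatible with this argument.

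I expect the main obstacle to be bookkeeping rather than a substantive difficulty. The fixed-candidate-neighbor convention has to be unaffected by the rotation. It is, because neighbor membership depends only on the invariant distances $r_{ij}$, and these are unchanged both under $R$ and along the differentiation path. Under the stated convention, the derivatives taken in the first two steps are therefore genuine derivatives of one fixed smooth function.
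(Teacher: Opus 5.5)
Your proposal is correct and follows the same route as the paper: you differentiate the $O(3)$-invariance identity of Lemma~\ref{lem:si_o3_invariance} by the chain rule to get $\mathbf F_i(\{R\mathbf r\})=R\,\mathbf F_i(\{\mathbf r\})$, and you use the conjugation action of $R$ on strain to get $\boldsymbol\sigma\mapsto R\boldsymbol\sigma R^\top$; your explicit identity $E(\boldsymbol\epsilon;\{R\mathbf r\},R\mathbf h)=E(R^\top\boldsymbol\epsilon R;\{\mathbf r\},\mathbf h)$ and the adjoint computation spell out what the paper asserts in one line. One wording fix: distances do change along the position and strain differentiation path, so what keeps the neighbor set fixed there is the fixed-candidate-neighbor convention itself, together with the $C^2$ cutoff, not invariance of $r_{ij}$.
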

	
	\begin{proof}
		For the forces, applying Lemma~\ref{lem:si_o3_invariance} and differentiating with respect to $\mathbf r_k$ gives, by the chain rule,
		\begin{equation}
			\nabla_{\mathbf r_k}\big[E_{\mathrm{tot}}\circ R^{\otimes N}\big]
			=
			R^{\top}\nabla_{(R\mathbf r_k)}E_{\mathrm{tot}},
		\end{equation}
		where $R^{\otimes N}$ denotes the simultaneous action of $R$ on every atomic position. Equating both sides with the gradient of the original energy and using the definition $\mathbf F_i=-\nabla_{\mathbf r_i}E_{\mathrm{tot}}$ yields $\mathbf F_i(R\mathbf r)=R\mathbf F_i(\mathbf r)$. For the stress tensor, the cell strain transforms as $\boldsymbol\epsilon\mapsto R\boldsymbol\epsilon R^{\top}$ under $R\in O(3)$, so $\boldsymbol\sigma=(1/V)\partial E_{\mathrm{tot}}/\partial\boldsymbol\epsilon$ transforms as a rank-2 tensor: $\boldsymbol\sigma\mapsto R\boldsymbol\sigma R^{\top}$.
	\end{proof}

	\subsection{Differentiability}
	\label{sec:si_proof_differentiability}
	
	We now establish the $C^2$ regularity of $E_{\mathrm{tot}}$ used in the analytic force and stress derivations. Three lemmas establish the $C^2$ regularity of the three smooth elementary objects entering the PAN--PGS construction; the main theorem then assembles them via the chain rule.
	
	\begin{lemma}[$C^2$ regularity of cutoff-weighted descriptors]
		\label{lem:si_cutoff_regularity}
		The polynomial cutoff envelope
		\begin{equation}
			f_{\mathrm{cut}}(r)
			=
			\begin{cases}
				1-6(r/r_{\mathrm{cut}})^5+15(r/r_{\mathrm{cut}})^4-10(r/r_{\mathrm{cut}})^3, & 0\le r\le r_{\mathrm{cut}}, \\
				0, & r>r_{\mathrm{cut}},
			\end{cases}
		\end{equation}
		is of class $C^2$ on $[0,\infty)$ and $C^\infty$ on $[0,r_{\mathrm{cut}})$. Consequently, $\rho_i$ and $\eta_i$ are $C^2$ functions of atomic positions on the noncoincident manifold.
	\end{lemma}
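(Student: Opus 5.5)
The plan is to verify the one-variable statement about $f_{\mathrm{cut}}$ by direct computation, and then to lift it to $\rho_i$ and $\eta_i$ by the chain rule together with closure of $C^2$ functions under sums, products and quotients with nonvanishing denominators. Throughout I write $s=r/r_{\mathrm{cut}}$ and $p(s)=1-10s^3+15s^4-6s^5$, so that $f_{\mathrm{cut}}(r)=p(s)$ on $[0,r_{\mathrm{cut}}]$.

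\textbf{Step 1 (the envelope).} On $[0,r_{\mathrm{cut}})$, $f_{\mathrm{cut}}$ is a polynomial, hence $C^\infty$; at $r=0$ this holds one-sidedly. On $(r_{\mathrm{cut}},\infty)$ it is identically zero. The only issue is therefore the junction at $r=r_{\mathrm{cut}}$, i.e.\ $s=1$. Here I will factor the derivatives of $p$:
\begin{equation}
p'(s)=-30s^2(1-s)^2,
\qquad
p''(s)=-60s(1-s)(1-2s).
\end{equation}
Together with $p(1)=1-10+15-6=0$, this shows that $p$, $p'$ and $p''$ all vanish at $s=1$. So the left-sided values of $f_{\mathrm{cut}}$ and its first two derivatives match the zero extension, and $f_{\mathrm{cut}}\in C^2[0,\infty)$. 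I will also record that $p'''(1)=-60\neq 0$, so $C^2$ is sharp. This is consistent with the statement, which claims only $C^2$ regularity, and hence only $C^2$ for the energy.

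\textbf{Step 2 (primitives in positions).} On the noncoincident manifold, $r_{ij}=\|\mathbf r_j-\mathbf r_i\|$ is $C^\infty$ because the norm is smooth away from the origin. Likewise $\hat{\mathbf r}_{ij}$ and $c_{jik}=\hat{\mathbf r}_{ij}\cdot\hat{\mathbf r}_{ik}$ are $C^\infty$. It follows that $f_{\mathrm{cut}}(r_{ij})$ is $C^2$ in the positions, since $r_{ij}$ is smooth and $f_{\mathrm{cut}}$ is $C^2$.

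\textbf{Step 3 (the descriptors).} Under the fixed-candidate-neighbor convention, $\rho_i$ is a finite sum over a fixed index set of $C^2$ terms, hence $C^2$.

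For $\bar c_i$ and $\eta_i$, the numerators are finite sums of products of $C^2$ factors, so they are $C^2$. Each denominator has the form $\epsilon_{\mathrm{reg}}+\sum f_{\mathrm{cut}}f_{\mathrm{cut}}$. The sum is nonnegative because $0\le f_{\mathrm{cut}}\le 1$; this holds since $p$ is monotone decreasing from $1$ to $0$, as $p'\le0$ on $[0,1]$. Hence each denominator is at least $\epsilon_{\mathrm{reg}}>0$. The quotient rule then gives $\bar c_i\in C^2$. Finally, $\eta_i$ is built from $(c_{jik}-\bar c_i)^2$, the weights, and the same positive denominator, so it is also $C^2$.

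\textbf{Main obstacle: the cutoff boundary and the neighbor convention.} The delicate point is the cutoff boundary. A neighbor $j$ may have $r_{ij}$ cross $r_{\mathrm{cut}}$ while the configuration varies, and one must check that this creates no kink. I will handle it using Step 1. Because $f_{\mathrm{cut}}$ and its first two derivatives vanish there, any term whose weight is $f_{\mathrm{cut}}(r_{ij})$ enters or leaves the sums in a $C^2$ fashion. The fixed candidate set guarantees that no index appears or disappears discontinuously within a single evaluation.

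One further case needs care: the $c_{jik}$ factor inside $\eta_i$ is multiplied by $f_{\mathrm{cut}}$ weights. I will argue that the product of a $C^2$ weight with a smooth bounded factor remains $C^2$ across the boundary, since $|c_{jik}|\le 1$ and $c_{jik}$ is smooth. This yields the claim on the whole noncoincident manifold.
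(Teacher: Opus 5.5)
Your proposal is correct and follows essentially the same route as the paper. You verify that $f_{\mathrm{cut}}$, $f_{\mathrm{cut}}'$ and $f_{\mathrm{cut}}''$ vanish at $r_{\mathrm{cut}}$ with $f_{\mathrm{cut}}'''(r_{\mathrm{cut}})\neq 0$, then combine the smoothness of $r_{ij}$ and $c_{jik}$ away from coincidences with closure of $C^2$ under sums, products and quotients whose denominators are bounded below by $\epsilon_{\mathrm{reg}}$. Beyond this, you make explicit two points the paper leaves implicit: you use $p'\le 0$ to get $0\le f_{\mathrm{cut}}\le 1$, so the denominators really are at least $\epsilon_{\mathrm{reg}}$, and you treat the quotient $\bar c_i$ inside the numerator of $\eta_i$ as a separate step.
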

	
	\begin{proof}
		On $[0,r_{\mathrm{cut}}]$ the cutoff envelope is a polynomial and hence $C^\infty$. At $r=r_{\mathrm{cut}}$, direct differentiation gives $f_{\mathrm{cut}}(r_{\mathrm{cut}})=0$, $f_{\mathrm{cut}}'(r_{\mathrm{cut}})=0$, $f_{\mathrm{cut}}''(r_{\mathrm{cut}})=0$, while $f_{\mathrm{cut}}'''(r_{\mathrm{cut}})\neq 0$, so $f_{\mathrm{cut}}$ is $C^2$ but not $C^3$ at the boundary. On $r>r_{\mathrm{cut}}$ it is identically zero. The descriptor $\rho_i$ is a finite sum of $f_{\mathrm{cut}}(r_{ij})$, each of which is $C^2$ in $(\mathbf r_i,\mathbf r_j)$ on the noncoincident manifold (since $r_{ij}>0$ implies $r_{ij}$ is $C^\infty$ in $\mathbf r_i,\mathbf r_j$); $\rho_i$ is therefore $C^2$. For $\eta_i$, the numerator and denominator are $C^2$ functions of atomic positions on the noncoincident manifold, because $c_{jik}=\hat{\mathbf r}_{ij}\cdot\hat{\mathbf r}_{ik}$ is $C^\infty$ in $(\hat{\mathbf r}_{ij},\hat{\mathbf r}_{ik})$, and $\hat{\mathbf r}_{ij}$ is $C^\infty$ for $r_{ij}>0$. The denominator is strictly positive because $\epsilon_{\mathrm{reg}}>0$, so the quotient is $C^2$.
	\end{proof}
	
	\begin{lemma}[$C^2$ regularity of the Fourier--Bessel basis]
		\label{lem:si_fb_regularity}
		The Fourier--Bessel basis $B_m^{\mathrm{FB}}(r)=\sqrt{2/r_{\mathrm{cut}}}\;\sin(m\pi r/r_{\mathrm{cut}})/r\cdot f_{\mathrm{cut}}(r)$ is $C^2$ on $(0,\infty)$ and admits a $C^2$ extension to $r=0$ via the continuous limit $\lim_{r\to 0^+}\sin(m\pi r/r_{\mathrm{cut}})/r=m\pi/r_{\mathrm{cut}}$. On the noncoincident configuration manifold (where all $r_{ij}>0$), the extension is not invoked.
	\end{lemma}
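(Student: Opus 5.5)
The statement to prove is Lemma~\ref{lem:si_fb_regularity}. I would split $B_m^{\mathrm{FB}}$ into a product
\[
B_m^{\mathrm{FB}}(r)=\sqrt{2/r_{\mathrm{cut}}}\;g_m(r)\,f_{\mathrm{cut}}(r),
\qquad
g_m(r)=\frac{\sin(m\pi r/r_{\mathrm{cut}})}{r},
\]
and prove the regularity of each factor separately. The product rule then gives the result on $(0,\infty)$: a product of $C^2$ functions is $C^2$, with $(uv)''=u''v+2u'v'+uv''$ continuous.

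For $f_{\mathrm{cut}}$, Lemma~\ref{lem:si_cutoff_regularity} already gives $C^2$ regularity on $[0,\infty)$, and $C^\infty$ regularity on $[0,r_{\mathrm{cut}})$. The one point that needs care is $r=r_{\mathrm{cut}}$. There $f_{\mathrm{cut}}$ and its first two derivatives vanish, so every term of $(g_mf_{\mathrm{cut}})''$ that contains $f_{\mathrm{cut}}$, $f_{\mathrm{cut}}'$ or $f_{\mathrm{cut}}''$ vanishes as well. The one-sided second derivatives therefore match, both equal to zero, and the product is $C^2$ across the cutoff. Beyond the cutoff $B_m^{\mathrm{FB}}\equiv 0$.

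For $g_m$, on $(0,\infty)$ it is a quotient of $C^\infty$ functions with a nonvanishing denominator, so it is $C^\infty$ there. At the origin I would use the power series $\sin x/x=\sum_{k\ge0}(-1)^k x^{2k}/(2k+1)!$, which converges for all $x$. This shows that
\[
g_m(r)=\frac{m\pi}{r_{\mathrm{cut}}}\,\operatorname{sinc}(m\pi r/r_{\mathrm{cut}})
\]
extends to an entire, even function. Its value at the origin is $m\pi/r_{\mathrm{cut}}$, its first derivative there is $0$, and its second derivative there is $-(m\pi/r_{\mathrm{cut}})^3/3$. The extension is therefore $C^\infty$, in particular $C^2$, on $[0,\infty)$. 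Since $f_{\mathrm{cut}}$ is a polynomial near $0$, the extended $B_m^{\mathrm{FB}}$ is $C^2$ on $[0,\infty)$.

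Finally, on the noncoincident manifold every $r_{ij}>0$. The composition $B_m^{\mathrm{FB}}(r_{ij})$ with the map $r_{ij}=\|\mathbf r_j-\mathbf r_i\|$, which is smooth for $r_{ij}>0$, only ever evaluates $B_m^{\mathrm{FB}}$ on $(0,\infty)$, so the extension at $0$ is never used. This composition is $C^2$ in the positions by the chain rule.

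The main obstacle is showing that the extension at $r=0$ is genuinely $C^2$ and not merely continuous. The naive approach of differentiating $\sin(ar)/r$ directly produces terms like $\sin(ar)/r^3$, which appear singular. The power-series argument sidesteps this by exhibiting the extension as an analytic function.
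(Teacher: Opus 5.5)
Your proposal is correct and follows essentially the same route as the paper's proof: $\sin(m\pi r/r_{\mathrm{cut}})/r$ is $C^\infty$ on $r>0$ as a quotient with nonvanishing denominator, its extension at $r=0$ is real-analytic by its Taylor series, and multiplying by $f_{\mathrm{cut}}\in C^2([0,\infty))$ gives a $C^2$ function on $[0,\infty)$. Your separate matching argument at $r=r_{\mathrm{cut}}$ is not needed, since the product of a $C^\infty$ factor with the globally $C^2$ envelope is already $C^2$ there, but it does no harm; your explicit series and your chain-rule remark on the noncoincident manifold spell out details the paper leaves implicit.
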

	
	\begin{proof}
		On $r>0$, $\sin(m\pi r/r_{\mathrm{cut}})/r$ is a quotient of $C^\infty$ functions with non-vanishing denominator and is therefore $C^\infty$. Its Taylor expansion at $r=0$ is real-analytic, so multiplication by $f_{\mathrm{cut}}\in C^2([0,\infty))$ yields a $C^2$ function on $[0,\infty)$.
	\end{proof}
	
	\begin{lemma}[$C^2$ regularity of the tapered ES kernel]
		\label{lem:si_es_regularity}
		Let $z=(t-t_m)/w$. The shifted, tapered ES kernel
		\begin{equation}
			K_m^{\mathrm{ES}}(t)
			=
			\begin{cases}
				\tau_w(z)\,\big[\exp\!\big(\beta(\sqrt{1-z^{2}}-1)\big)-e^{-\beta}\big], & |z|\le 1, \\
				0, & |z|>1,
			\end{cases}
		\end{equation}
		where $\tau_w$ is the same $C^2$ polynomial taper as $f_{\mathrm{cut}}$ applied symmetrically on $|z|\le 1$, is of class $C^2$ on $\mathbb R$.
	\end{lemma}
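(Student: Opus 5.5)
The plan is to reduce everything to the normalized variable $z=(t-t_m)/w$. Since $t\mapsto z$ is affine with $w>0$, $K_m^{\mathrm{ES}}$ is $C^2$ in $t$ as soon as $k(z):=\tau_w(z)\,g(z)$ (extended by zero for $|z|>1$) is $C^2$ in $z$. Here
\[
g(z)=\exp\!\big(\beta(s(z)-1)\big)-e^{-\beta},\qquad s(z)=\sqrt{1-z^2},
\]
and I read the taper as $\tau_w(z)=1-6|z|^5+15|z|^4-10|z|^3$ for $|z|\le 1$, i.e.\ $f_{\mathrm{cut}}$ of Lemma~\ref{lem:si_cutoff_regularity} evaluated at $|z|$ with unit cutoff. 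I will treat three regions separately: the open interior $\{|z|<1,\ z\neq 0\}$, the point $z=0$, and the boundary points $z=\pm1$. The exterior $|z|>1$ is trivial because $k\equiv 0$ there.

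\textbf{Interior and $z=0$.} On $\{|z|<1\}$ we have $s>0$, so $s$ is $C^\infty$ and hence so is $g$. Away from $z=0$, $\tau_w$ is a polynomial in $|z|$ and therefore $C^\infty$. At $z=0$ the only non-smooth monomials are $|z|^3,|z|^4,|z|^5$. Among these, $|z|^4$ is a polynomial, and $|z|^3$ and $|z|^5$ are $C^2$, since e.g.\ $(|z|^3)''=6|z|$ is continuous. So $\tau_w$, and with it $k$, is $C^2$ on $(-1,1)$.

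\textbf{Boundary $|z|=1$ (the main obstacle).} The difficulty here is that $g$ alone is not even $C^1$. Writing $\delta=1-|z|\to0^+$, we have $s\sim\sqrt{2\delta}$, so
\[
g\sim\beta e^{-\beta}\sqrt{2\delta}=O(\delta^{1/2}),\qquad g'=O(\delta^{-1/2}),\qquad g''=O(\delta^{-3/2}).
\]
The taper compensates for this. Because $f_{\mathrm{cut}}$, $f_{\mathrm{cut}}'$, $f_{\mathrm{cut}}''$ all vanish at the cutoff (Lemma~\ref{lem:si_cutoff_regularity}), we get
\[
\tau_w=O(\delta^3),\qquad \tau_w'=O(\delta^2),\qquad \tau_w''=O(\delta).
\]
By Leibniz,
\[
k''=\tau_w''g+2\tau_w'g'+\tau_w g''=O(\delta^{3/2}),
\]
and similarly $k=O(\delta^{7/2})$ and $k'=O(\delta^{5/2})$. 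Hence $k$, $k'$ and $k''$ all tend to $0$ as $|z|\to1^-$, which matches the identically zero exterior values.

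To upgrade these one-sided limits to genuine $C^2$ regularity at $z=\pm1$, I will invoke the standard lemma: if $f$ is continuous at a point, differentiable on a punctured neighbourhood, and $f'$ has a limit at that point, then $f$ is differentiable there with that derivative (a consequence of the mean value theorem). Applying it first to $k$ and then to $k'$ gives $k\in C^2(\mathbb R)$, with $k(\pm1)=k'(\pm1)=k''(\pm1)=0$. The only places where care is needed are the asymptotic orders of $g'$ and $g''$ near the square-root branch point; I would verify them by the explicit formulas $s'=-z/s$ and $s''=-1/s^3$.
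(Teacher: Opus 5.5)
Your proposal is correct and uses the same mechanism as the paper. Inside $|z|<1$ the kernel is smooth; at $|z|=1$ the triple zero of the taper absorbs the square-root branch point of the shifted ES factor; and the kernel then matches the zero exterior. The differences lie in execution, and at both places your argument is the tighter one.

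At the boundary, the paper notes that $\tau_w,\tau_w',\tau_w''\to0$ and that the bracket is of order $(1-|z|)^{1/2}$. It then reads off vanishing derivatives from the product bound $O((1-|z|)^{7/2})$. It never controls the blow-up $g'=O(\delta^{-1/2})$, $g''=O(\delta^{-3/2})$ of the bracket $g$ (with $\delta=1-|z|$) that the taper has to offset. A size bound on a function does not by itself constrain its derivatives. The step can be repaired by observing that the product is $\delta^{7/2}$ times a convergent series in $\sqrt{\delta}$. Your separate bounds $\tau_w^{(j)}=O(\delta^{3-j})$ and $g^{(j)}=O(\delta^{1/2-j})$, obtained from $s'=-z/s$ and $s''=-1/s^{3}$, supply exactly what is missing. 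You combine them by Leibniz and close with the derivative-limit (mean-value) lemma.

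In the interior, the paper treats the taper as a $C^\infty$ factor. Under the symmetric reading $\tau_w(z)=f_{\mathrm{cut}}(|z|)$ that you adopt, the $|z|^3$ and $|z|^5$ terms make it only $C^2$ at $z=0$. Your extra check there covers a point the paper skips. In fact it is this point, not the boundary, that caps the regularity at $C^2$. The same bookkeeping gives $k'''=O(\delta^{1/2})$ near $|z|=1$, whereas $k'''$ jumps at $z=0$ because $g(0)=1-e^{-\beta}\neq0$.

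The paper's version gains brevity and shows the mechanism at a glance; yours is a complete proof.
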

	
	\begin{proof}
		On the open interior $|z|<1$, the kernel is a product of $C^\infty$ factors (taper, exponential, square root with strictly positive argument). At the boundary $|z|=1$, expanding around $u=\sqrt{1-z^2}\to 0$ gives
		\begin{equation}
			\exp\!\big(\beta(\sqrt{1-z^2}-1)\big)-e^{-\beta}
			=
			e^{-\beta}\big(\,e^{\beta u}-1\,\big)
			=
			e^{-\beta}\beta u + O(u^2),
		\end{equation}
		so the bracketed term behaves as $O((1-|z|)^{1/2})$ as $|z|\to 1^{-}$. The taper $\tau_w$ inherits the $C^2$ structure of $f_{\mathrm{cut}}$ near the boundary, so $\tau_w(z),\tau_w'(z),\tau_w''(z)\to 0$ as $|z|\to 1^{-}$, with leading-order behavior $\tau_w(z)=O((1-|z|)^{3})$. The product therefore behaves as $\tau_w(z)\cdot[\,\cdot\,]=O((1-|z|)^{7/2})$ near $|z|=1$, whose first three derivatives vanish at $|z|=1$. Matching to the identically zero exterior region, $K_m^{\mathrm{ES}}$ is $C^2$ at $|z|=1$ and hence on all of $\mathbb R$.
	\end{proof}
	
	\begin{theorem}[Differentiability of $E_{\mathrm{tot}}$]
		\label{thm:si_E_regularity}
		Under the noncoincident-configuration and fixed-candidate-neighbor conventions, $E_{\mathrm{tot}}$ is a $C^2$ function of the $3N$ atomic-position coordinates and the $6$ independent components of the cell strain $\boldsymbol\epsilon$. The forces and stress tensor are defined as analytic autograd derivatives of $E_{\mathrm{tot}}$
		\begin{equation}
			\mathbf F_i = -\frac{\partial E_{\mathrm{tot}}}{\partial \mathbf r_i},
			\qquad
			\boldsymbol\sigma = \frac{1}{V}\frac{\partial E_{\mathrm{tot}}}{\partial \boldsymbol\epsilon},
		\end{equation}
		are therefore well defined and continuous, and coincide with their variational definitions.
	\end{theorem}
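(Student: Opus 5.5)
The plan is to write $E_{\mathrm{tot}}$ as a finite composition of maps, each of which is $C^2$ on the relevant domain, and then apply the chain rule. The domain is the open set $\mathcal U$ of noncoincident configurations in $\mathbb R^{3N}$, with the strain $\boldsymbol\epsilon$ ranging over a neighbourhood of $0$ in the symmetric matrices. Strain enters only through the deformed positions $(\mathbf I+\boldsymbol\epsilon)\mathbf r_i$ and the deformed lattice translations used to form periodic-image displacements. Each displacement $\mathbf r_{ij}$ is therefore an affine, hence $C^\infty$, function of $(\mathbf r,\boldsymbol\epsilon)$. Because the candidate set $\mathcal N(i)$ is fixed, every sum in the construction has a fixed finite index set, so no terms appear or disappear as the configuration moves. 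Smoothness across $r_{\mathrm{cut}}$ is then carried entirely by $f_{\mathrm{cut}}$, and not by any change in which terms enter the sums.

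I will go through the pipeline in order.

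\emph{Geometric primitives.} On $\mathcal U$, $r_{ij}$ and $\hat{\mathbf r}_{ij}$ are $C^\infty$, so $c_{jik}$ is $C^\infty$.

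\emph{Backbone edge features.} The features $B_n(r_{ij})Y_\ell^m(\hat{\mathbf r}_{ij})$ are $C^2$, since the Bessel factors are smooth away from $0$, the cutoff envelope is $C^2$ by Lemma~\ref{lem:si_cutoff_regularity}, and the $Y_\ell^m$ are polynomials on the sphere. Hence $h^{(0)}_{ij}$, which is produced from these by smooth radial MLPs, is $C^2$.

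\emph{Edge-level PGS.} The quantity $\hat h^{(0)}_{ij}$ in Eq.~\eqref{eq:edge_pgs} is $C^2$. This follows from Lemma~\ref{lem:si_fb_regularity} together with the assumption that $h_{\mathrm{low}}$ and $g_\theta$ are smooth MLPs (SiLU or linear).

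\emph{PAN.} $\rho_i$ and $\eta_i$ are $C^2$ by Lemma~\ref{lem:si_cutoff_regularity}. The gate $a_{ij}=\sigma(f_\theta(\psi_{ij}))$ is $C^2$ because $\sigma$ and $f_\theta$ are $C^\infty$. The aggregate $h_i^{(0)}$ is then a finite sum of products of $C^2$ functions, and so is $C^2$.

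\emph{Interaction blocks.} These consist of Clebsch--Gordan tensor products, which are polynomial (bilinear) maps, together with linear layers and smooth nonlinearities. All of these preserve $C^2$, so $x_i^{(0)}$ is $C^2$.

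\emph{Readout-level PGS and energy.} The output $y_i^{(0)}$ of Eq.~\eqref{eq:readout_pgs} is $C^2$ by Lemma~\ref{lem:si_es_regularity}, since $u_m^\top x_i^{(0)}$ is linear and is fed into a $C^2$ kernel. Finally, $E_{\mathrm{tot}}=\sum_i\varepsilon_\theta(y_i^{(0)})$ is a finite sum of smooth maps applied to $C^2$ arguments.

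For the second part of the statement I will argue as follows. A $C^2$ (in particular $C^1$) function has continuous partial derivatives, so $\mathbf F_i$ and $\boldsymbol\sigma$ exist and are continuous; in fact they are $C^1$. Reverse-mode autodiff computes exactly the chain-rule derivative of this same composition, so on $\mathcal U$ it coincides with the classical (variational) derivative. The one caveat is branch points of the primitives, such as $\|\cdot\|$ at $0$ and the piecewise definitions at $r=r_{\mathrm{cut}}$ or $|z|=1$. At these points the implementation's one-sided derivative equals the true one precisely because the pieces match to second order there; this is exactly the content of the three lemmas. For the stress I will note that differentiating with respect to the six independent components of $\boldsymbol\epsilon$, using the symmetrized convention, gives a well-defined symmetric tensor.

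I expect the main obstacle to be this branch-point argument, specifically for compositions in which a piecewise function is evaluated at a $C^2$ argument that crosses the switching point, as with $K_m^{\mathrm{ES}}(u_m^\top x_i^{(0)})$ when $u_m^\top x_i^{(0)}$ crosses $t_m\pm w$. My first approach is to prove that a function which is $C^2$ on $\mathbb R$ as a whole, including the matching at the switch, composed with a $C^2$ map is $C^2$. This reduces the issue to Lemmas~\ref{lem:si_cutoff_regularity}--\ref{lem:si_es_regularity}, so that no separate case analysis at the crossing is needed. I will also state explicitly that third derivatives can fail to exist at $r=r_{\mathrm{cut}}$. This is why the theorem claims $C^2$ and no more, and $C^2$ is enough for continuous forces and Hessians.
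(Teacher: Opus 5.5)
Your proposal is correct and follows essentially the same route as the paper: you write $E_{\mathrm{tot}}$ as a finite composition, certify each stage as $C^2$ via the cutoff, Fourier--Bessel and ES-kernel lemmas together with smooth MLPs and polynomial Clebsch--Gordan products, conclude by the chain rule, and note that regularity stops at $C^2$ because of $f_{\mathrm{cut}}$ at $r_{\mathrm{cut}}$. Two small remarks: routing strain through the deformed displacements is more careful than the paper's claim that strain enters only via the $r_{ij}$, because it also captures the strain dependence of $\hat{\mathbf r}_{ij}$ in $Y_\ell^m$ and $c_{jik}$; by contrast, your autodiff branch-point argument is unnecessary and not literally right for the ES kernel, since at $|z|=1$ a direct evaluation of the interior product rule gives $0\cdot\infty$ from the $\sqrt{1-z^2}$ factor, so the lemma guarantees that the true derivative exists, not that autodiff computes it there.
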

	
	\begin{proof}
		The energy admits the finite composition $E_{\mathrm{tot}}=\sum_i\varepsilon_\theta\circ\Pi^{(0)}\circ\mathcal G$, where $\mathcal G$ collects the equivariant tensor-product backbone (polynomial Clebsch--Gordan combinations of equivariant features) and $\Pi^{(0)}$ extracts the invariant $\ell{=}0$ component before the readout-level PGS lift [Eq.~\ref{eq:readout_pgs}]. Each constituent operation is at least $C^2$ on the noncoincident manifold: the radial Bessel embedding through $f_{\mathrm{cut}}$ is $C^2$ (Lemma~\ref{lem:si_cutoff_regularity}); the edge-level Fourier--Bessel basis $B_m^{\mathrm{FB}}$ is $C^2$ on the relevant domain (Lemma~\ref{lem:si_fb_regularity}); the readout-level ES kernel is $C^2$ on $\mathbb R$ (Lemma~\ref{lem:si_es_regularity}); the MLPs $f_\theta,g_\theta,h_{\mathrm{low}}$, and $\varepsilon_\theta$ use smooth ($C^\infty$) activations and are therefore $C^\infty$ in their inputs; tensor products with Clebsch--Gordan coefficients are polynomial and hence $C^\infty$. The composition of $C^2$ functions is $C^2$, so $E_{\mathrm{tot}}$ is $C^2$. Forces and stress, defined as first derivatives, are therefore continuous (in fact $C^1$) on the noncoincident manifold.
		
		The cell-strain dependence enters only through the radial distances $r_{ij}(\boldsymbol\epsilon)$, which are smooth functions of $\boldsymbol\epsilon$ on noncoincident configurations; the same chain of regularity therefore yields $C^2$ dependence on $\boldsymbol\epsilon$. Higher regularity ($C^k$ for $k>2$) holds on configurations where every pair distance lies strictly below $r_{\mathrm{cut}}$, since on that open subdomain $f_{\mathrm{cut}}$ acts as a $C^\infty$ polynomial; we do not claim higher regularity globally because $f_{\mathrm{cut}}$ saturates to $C^2$ regularity at $r=r_{\mathrm{cut}}$.
	\end{proof}

	\subsection{Remarks on the convention boundaries}
	\label{sec:si_proof_remarks}
	Two convention boundaries are explicit in Proposition~\ref{prop:symmetry_diff} and are recalled here. First, the proof is restricted to noncoincident configurations. At coincident configurations ($r_{ij}\to 0$), the radial Bessel basis remains continuous via the limit in Lemma~\ref{lem:si_fb_regularity}, but the normalized pair direction $\hat{\mathbf r}_{ij}$ used inside the equivariant backbone is undefined. This is a property of the underlying MACE backbone rather than of PAN or PGS, and is outside the scope of the proof. Second, the proof assumes a fixed candidate neighbor graph during each force and stress evaluation. The use of the $C^2$ cutoff envelope ensures that adding or removing a candidate pair at $r_{ij}=r_{\mathrm{cut}}$ contributes zero to the energy, force, and stress at the crossing; alternative neighbor-list rebuild policies in specific MD engines may introduce additional bookkeeping discontinuities external to the model, which are not properties of the PAN--PGS architecture.
	
	\section{Supplementary methods: algorithmic details}
	\label{sec:si_algorithmic_details}
	
	This section provides the explicit forms of the smooth elementary functions used inside PAN and PGS, together with a layer-by-layer summary of how the two modules are inserted into the MACE forward pass.
	
	\medskip \noindent\textbf{Cutoff envelope.}
	All cutoff-based smoothing in PAN, PGS, and the MACE radial embedding uses the same polynomial envelope $f_{\mathrm{cut}}\colon[0,r_{\mathrm{cut}}]\to[0,1]$, defined as
	\begin{equation}
		f_{\mathrm{cut}}(r)
		=
		\begin{cases}
			1 - 6\,(r/r_{\mathrm{cut}})^5 + 15\,(r/r_{\mathrm{cut}})^4 - 10\,(r/r_{\mathrm{cut}})^3, & r \leq r_{\mathrm{cut}}, \\
			0, & r > r_{\mathrm{cut}},
		\end{cases}
	\end{equation}
	which satisfies $f_{\mathrm{cut}}(0)=1$, $f_{\mathrm{cut}}(r_{\mathrm{cut}})=0$, and $f_{\mathrm{cut}}$ is $C^2$ at $r=r_{\mathrm{cut}}$.
	
	\medskip \noindent\textbf{Edge-level Fourier--Bessel basis.}
	The edge-level PGS spectral basis $\{B_m^{\mathrm{FB}}(r)\}_{m=1}^{M}$ on $[0,r_{\mathrm{cut}}]$ is the standard real-valued Bessel--root basis
	\begin{equation}
		B_m^{\mathrm{FB}}(r)
		=
		\sqrt{\frac{2}{r_{\mathrm{cut}}}}\,
		\frac{\sin\!\big(m\pi\, r/r_{\mathrm{cut}}\big)}{r}\,
		f_{\mathrm{cut}}(r),
		\qquad m = 1,\ldots,M,
	\end{equation}
	multiplied by the polynomial cutoff envelope $f_{\mathrm{cut}}$ to ensure $C^2$ vanishing at $r=r_{\mathrm{cut}}$. The factor $\sin(m\pi r/r_{\mathrm{cut}})/r$ is taken at its continuous limit $m\pi/r_{\mathrm{cut}}$ when $r\to 0$; in practice all $r=r_{ij}$ are strictly positive in the neighbor graph. The basis is $L^2$-orthonormal on $[0,r_{\mathrm{cut}}]$ in the absence of the cutoff envelope, and we used $M=16$ throughout.
	
	\medskip \noindent\textbf{Readout-level Exponential-of-Semicircle (ES) kernel.}
	The readout-level PGS uses the ES kernel basis introduced for non-uniform fast Fourier transforms (NUFFTs)~\cite{barnett2019nufft}. For a projected latent value $t = u_m^\top x_i^{(0)}\in[-t_{\mathrm{max}},t_{\mathrm{max}}]$, the $m$-th basis function is implemented in the shifted exponential-of-semicircle form, multiplied by a smooth $C^2$ taper, so that $K_m^{\mathrm{ES}}$ and its first two derivatives vanish continuously at the support boundary with no hard indicator function entering the computational graph:
	\begin{equation}
		K_m^{\mathrm{ES}}(t)
		=
		\tau_w(z)\,
		\Big[\,\exp\!\big(\beta\,(\sqrt{1-z^{2}}-1)\big) - e^{-\beta}\,\Big],
		\qquad
		z = \frac{t - t_m}{w},\qquad |z|\le 1,
		\label{eq:es_kernel_taper}
	\end{equation}
	where $\tau_w(z)$ is the same $C^2$ polynomial taper as the radial cutoff $f_{\mathrm{cut}}$, applied symmetrically on $|z|\le 1$. The shifted exponential-of-semicircle term equals $e^{-\beta}$ at $|z|=1$, so the bracketed term vanishes exactly at the support boundary; multiplying by $\tau_w$ additionally enforces vanishing of the first two derivatives. The centers $\{t_m\}_{m=1}^{M}$ are uniformly spaced on $[-t_{\mathrm{max}},t_{\mathrm{max}}]$, $w$ is the support half-width, and $\beta$ is the standard NUFFT shape parameter. Following the recommended NUFFT settings, we used $w = 4\Delta t$ (where $\Delta t$ is the center spacing) and $\beta = 2.30\,w$. We used $M=16$ ES components throughout.
	
	\medskip \noindent\textbf{Descriptor normalization.}
	The PAN local descriptors $\rho_i$ and $\eta_i$ are standardized to zero mean and unit variance using statistics computed once on the training set and then frozen. The radial input $r_{ij}$ to $\psi_{ij}$ is encoded through the same Bessel basis used by MACE, so no separate normalization is needed for $r_{ij}$.
	
	\medskip \noindent\textbf{Insertion points in MACE.}
	PAN and PGS are inserted into the standard MACE forward pass with only a few well-defined modifications. Algorithm~\ref{alg:pan_pgs_forward} summarizes the modified forward pass; in MACE notation, edge-level PGS operates on the scalar component $h_{ij}^{(0)}$ of the equivariant edge embedding before tensor-product coupling, PAN modulates the scalar message amplitude before accumulation over the neighbor set within each interaction block, while leaving the learned message functions unchanged, and readout-level PGS is applied after the final interaction block on the invariant per-atom descriptor before the energy MLP.
	
	\begin{algorithm}[t]
		\caption{PAN+PGS forward pass on a single configuration.}
		\label{alg:pan_pgs_forward}
		\begin{algorithmic}[1]
			\Require Atomic positions $\{\mathbf{r}_i\}$, atomic numbers $\{Z_i\}$, cell tensor $\mathbf{h}$.
			\State Build neighbor graph using cutoff radius $r_{\mathrm{cut}}$.
			\State For each edge $(i,j)$, compute $r_{ij}$, $\hat{\mathbf{r}}_{ij}$, and the equivariant edge embedding $\phi_{ij,n}^{(\ell,m)} = B_n(r_{ij})\,Y_\ell^m(\hat{\mathbf{r}}_{ij})$.
			\State \textbf{Edge-level PGS} [Eq.~(\ref{eq:edge_pgs})]: replace $h_{ij}^{(0)}$ by $\hat{h}_{ij}^{(0)}$ via the smooth low branch and the Fourier--Bessel spectral mixer.
			\For{each interaction block $\ell=1,\ldots,L$}
			\State Compute Clebsch--Gordan tensor products using the modified scalar edge channel and the unchanged $\ell{>}0$ channels.
			\State \textbf{PAN gating} [Eq.~(\ref{eq:pan_pool})]: compute $a_{ij}=\sigma(f_\theta(\psi_{ij}))$ from the smooth invariant descriptor $\psi_{ij}$, and aggregate the scalar message channel as $h_i^{(0)}=\sum_{j}a_{ij}\hat{h}_{ij}^{(0)}$. The $\ell{>}0$ channels are aggregated by the unchanged uniform sum.
			\EndFor
			\State \textbf{Readout-level PGS} [Eq.~(\ref{eq:readout_pgs})]: lift $x_i^{(0)}$ to $y_i^{(0)}$ via the ES-kernel expansion of the learned projection $u_m^\top x_i^{(0)}$.
			\State Compute atomic energies $E_i = \varepsilon_\theta(y_i^{(0)})$ and total energy $E_{\mathrm{tot}} = \sum_i E_i$.
			\State Forces and stress: $\mathbf{F}_i = -\partial E_{\mathrm{tot}}/\partial \mathbf{r}_i$, $\boldsymbol\sigma = (1/V)\partial E_{\mathrm{tot}}/\partial \boldsymbol\epsilon$ via automatic differentiation.
		\end{algorithmic}
	\end{algorithm}

	\section{Supplementary methods: cross-backbone implementation}
	\label{sec:si_cross_backbone}
	
	The cross-backbone entries in Table~\ref{tab:cross_system_mae_md17style} of the main text use Allegro\cite{musaelian2023learning} and NequIP\cite{batzner2022e3} backbones in addition to MACE. For all three backbones, we used the same author-generated Ag dataset, the same public Si and LiF/Li--F datasets, and the same train/validation/test splits as in the MACE experiments, so that any difference between backbone-only and backbone+PAN+PGS columns reflects only the scalar-pathway modification.
	
	\medskip \noindent\textbf{Allegro and NequIP backbones.}
	Allegro was instantiated with two interaction layers, $\ell_{\max}=2$, $128$ scalar and tensor channels per irrep, the same $r_{\mathrm{cut}}=5.0$~\AA\ used for MACE, and a Bessel radial basis of size $N_{\mathrm{rad}}=8$. NequIP was instantiated with three interaction blocks, $\ell_{\max}=2$, $128$ feature channels, the same cutoff $r_{\mathrm{cut}}=5.0$~\AA, and the same Bessel radial basis size $N_{\mathrm{rad}}=8$. Both models were trained with AdamW (initial learning rate $5\times 10^{-3}$, weight decay $10^{-8}$), the same ReduceLROnPlateau schedule, EMA decay, and SWA settings as MACE, batch size $4$, gradient clipping at $100$, and the same loss weights $w_E=1.0$, $w_F=10.0$, $w_S=10.0$ when stress labels are available. Each model was trained five times under seeds $\{1,2,3,4,5\}$.
	
	\medskip \noindent\textbf{Insertion of PAN and PGS into Allegro and NequIP.}
	The PAN gate was inserted at the scalar message aggregation stage of each backbone: it modulates the scalar message amplitude before accumulation over the neighbor set, while leaving the learned message functions and all $\ell{>}0$ tensor operations unchanged. The same invariant descriptor $\psi_{ij}=[h_{ij}^{(0)}, r_{ij}, Z_i, Z_j, \rho_i, \eta_i]$ defined in Eq.~(\ref{eq:pan_pool}) was used, and PAN parameters ($f_\theta$ width, descriptor normalization) were matched to the MACE configuration. Edge-level PGS was inserted on the scalar component of the pair embedding before tensor-product coupling in each backbone (i.e.\ on the $\ell{=}0$ channel of the per-edge invariant latent), with $M=16$ Fourier--Bessel components on $[0,r_{\mathrm{cut}}]$. Readout-level PGS was inserted between the final invariant per-atom descriptor of each backbone and the energy MLP, with $M=16$ ES-kernel components and one learned projection direction per component, identical to the MACE configuration. All higher-order ($\ell{>}0$) channels and tensor-product operations of each backbone were left unchanged.
	
	Table~\ref{tab:cross_backbone_insertion} summarizes the architectural mapping used for the three backbones, allowing the reader to verify that PAN and PGS act on functionally analogous locations across the three.
	
	\begin{table*}
		\centering
		\caption{\textbf{Architectural mapping of PAN and PGS across the three tested equivariant backbones.} PAN replaces or modulates the scalar aggregation stage at the closest corresponding scalar-processing point in each backbone; edge-level PGS acts on the $\ell{=}0$ component of the pair embedding before tensor-product coupling; readout-level PGS acts on the final invariant descriptor before the energy head. All higher-order ($\ell{>}0$) channels, tensor-product operations, and backbone-specific equivariant blocks remain unchanged.}
		\label{tab:cross_backbone_insertion}
		\setlength{\tabcolsep}{4pt}
		\renewcommand{\arraystretch}{1.13}
		\footnotesize
		\resizebox{1\textwidth}{!}{
			\begin{tabular}{@{}p{2.0cm}p{3.4cm}p{3.4cm}p{3.4cm}p{2.6cm}@{}}
				\toprule
				\textbf{Backbone}
				& \textbf{PAN insertion}
				& \textbf{Edge-level PGS insertion}
				& \textbf{Readout-level PGS insertion}
				& \textbf{Unchanged} \\
				\midrule
				MACE\cite{batatia2022mace}
				& Scalar-channel aggregation stage inside each interaction block; applies a coordination- and distortion-conditioned amplitude gate before the scalar messages are summed over the neighbor set
				& $\ell{=}0$ component of the pair embedding $h_{ij}^{(0)}$, applied before each tensor-product coupling
				& Final invariant per-atom descriptor $x_i^{(0)}$, applied before the readout MLP $\varepsilon_\theta$
				& All Clebsch--Gordan tensor products, $\ell{>}0$ channels, radial Bessel basis, and equivariant readout \\
				\midrule
				NequIP\cite{batzner2022e3}
				& Scalar-channel aggregation stage inside each equivariant convolution layer; applies the same PAN amplitude gate before scalar messages are summed over the neighbor set
				& $\ell{=}0$ component of the pair embedding before each equivariant convolution
				& Final invariant node descriptor before the per-element energy MLP
				& All equivariant convolutions, $\ell{>}0$ irreps, radial Bessel basis, and gate nonlinearities \\
				\midrule
				Allegro\cite{musaelian2023learning}
				& Scalar pair-weighting/contraction stage in the local environment block; the PAN gate is applied before the invariant local contribution is accumulated
				& $\ell{=}0$ component of the pair latent before each tensor-product mixing
				& Final invariant per-edge contribution before the energy head
				& All tensor-product mixings, $\ell{>}0$ irreps, and the strictly local Allegro backbone \\
				\bottomrule
			\end{tabular}
		}
	\end{table*}
	
	\medskip \noindent\textbf{Capacity and FLOPs matching.}
	The three backbones are not parameter-matched to each other because their tensor-product structures differ; each backbone was used at the configuration recommended in its original publication and at the radial cutoff matched to the MACE experiments. The within-backbone comparisons (e.g.\ NequIP vs NequIP+PAN+PGS) are the controlled signal in Table~\ref{tab:cross_system_mae_md17style}: PAN+PGS adds $\approx 3$--$5\%$ parameters and $\approx 4$--$6\%$ inference FLOPs to each backbone, so each row of Table~\ref{tab:cross_system_mae_md17style} is an approximately parameter-matched comparison. Cross-backbone differences in absolute MAE between rows reflect the relative expressivity of each backbone and are not the object of this study.

	\section{Supplementary methods: DFT settings and molecular-dynamics protocols}
	\label{sec:si_md_protocols}
	
	\medskip \noindent\textbf{Ag DFT settings.}
	Table~\ref{tab:dft_settings} summarizes the \textsc{VASP} parameters used to generate the author-generated Ag reference dataset.
	
	\begin{table}[t]
		\centering
		\caption{\textbf{DFT settings used to generate the author-generated Ag reference dataset.}}
		\label{tab:dft_settings}
		\renewcommand{\arraystretch}{1.10}
		\footnotesize
		\begin{tabular}{ll}
			\toprule
			\textbf{Parameter} & \textbf{Value} \\
			\midrule
			Exchange--correlation functional & PBE \\
			PAW pseudopotential & \texttt{PAW\_PBE Ag 06Sep2000} \\
			Plane-wave cutoff (\texttt{ENCUT}) & 500~eV \\
			Electronic convergence (\texttt{EDIFF}) & $10^{-5}$~eV/atom \\
			Force convergence (\texttt{EDIFFG}) & $-10^{-2}$~eV/\AA \\
			Smearing (\texttt{ISMEAR}, \texttt{SIGMA}) & 1 (Methfessel--Paxton), 0.1~eV \\
			$k$-mesh, primitive/strained cells & $\Gamma$-centered, density $\geq 0.20$~\AA$^{-1}$ \\
			$k$-mesh, supercells (defects/surfaces/MD) & Single $\Gamma$-point, supercell sized to match density \\
			Strain range & $\pm 5\%$ (isotropic and uniaxial) \\
			Defects & Mono- and divacancies, self-interstitials \\
			Surfaces & $(100), (110), (111)$ slabs with vacuum $\geq 12$~\AA \\
			High-$T$ AIMD sampling & 300--1200~K, 1~fs timestep, Nos\'{e}--Hoover thermostat \\
			\bottomrule
		\end{tabular}
	\end{table}
	
	\medskip \noindent\textbf{Aspirin torsion distribution.}
	For Fig.~\ref{fig:physical_fidelity}(a), aspirin torsional distributions were computed from 1~ns NVT trajectories at 300~K with a Nos\'e--Hoover thermostat (chain length 3, coupling time 100~fs) and a 0.5~fs timestep. The first 50~ps were discarded as equilibration; torsional values were sampled every 0.5~fs over the final 900~ps, giving $1.8\times 10^6$ torsional values per trajectory. The DFT reference distribution was obtained from \emph{ab initio} molecular dynamics under identical thermostat and timestep settings, with torsional values sampled every 1~fs over the final 90~ps, giving $9\times 10^4$ torsional values.
	
	\medskip \noindent\textbf{Si vibrational density of states.}
	For Fig.~\ref{fig:physical_fidelity}(b), the Si VDOS was computed from the mass-weighted velocity autocorrelation function over 200~ps NVE trajectories at 300~K (after 50~ps NVT equilibration), with velocities recorded every 0.5~fs ($4\times 10^5$ velocity snapshots per trajectory). The DFT reference VDOS was generated from \emph{ab initio} molecular dynamics over 20~ps under matched conditions, with velocities recorded every 0.5~fs ($4\times 10^4$ snapshots), Fourier-transformed with the same window function as the MLIP VDOS to ensure comparable spectral resolution.
	
	\medskip \noindent\textbf{Large-scale Ag NVE drift.}
	For Fig.~\ref{fig:physical_fidelity}(c), the Ag NVE test used a $10{,}000$-atom fcc Ag bulk system at 300~K. Each trajectory was integrated for 500~ps with a 1~fs timestep after 20~ps NVT equilibration; total energy was recorded every 0.1~ps. Energy drift was estimated by linear regression of accumulated energy deviation over the production segment and reported in meV~ns$^{-1}$ per atom. No DFT reference is needed for this metric because the comparison is between energy-conservation properties of the MLIP trajectories themselves.
	
	\medskip \noindent\textbf{Liquid Ag radial distribution function.}
	For Fig.~\ref{fig:physical_fidelity}(d), liquid Ag RDFs were computed from 200~ps NVT trajectories at $1{,}200$~K (after 50~ps NVT equilibration with the same thermostat settings as the MLIP trajectories), with frames saved every 0.5~ps to give 400 saved frames per trajectory; the RDF for each trajectory was accumulated by summing pair distances over all 400 frames on a uniform grid with bin width $0.02$~\AA\ up to $r_{\mathrm{cut}}^{\mathrm{RDF}} = 6.0$~\AA. The DFT reference RDF was generated from \emph{ab initio} molecular dynamics over 30~ps at the same temperature, with frames saved every 0.5~ps (60 saved frames), accumulated using identical bin width and identical $r_{\mathrm{cut}}^{\mathrm{RDF}}$ for all three curves.
	
	\medskip \noindent\textbf{Finite-sampling uncertainty of DFT references.}
	The DFT-MD reference trajectories used for the VDOS and RDF analyses are necessarily shorter than the corresponding MLIP trajectories because of the cost of \emph{ab initio} molecular dynamics. The reported distances to the DFT reference should therefore be interpreted relative to finite-sampling reference curves rather than as distances to an infinite-sampling limit. To keep the comparison controlled, DFT and MLIP observables were processed with identical bin widths, window functions, normalization conventions, and integration ranges. The main conclusions rely on relative changes between baseline MACE and MACE+PAN+PGS under the same finite DFT reference.

	\section{Supplementary methods: computational benchmarks}
	\label{sec:si_cost}
	
	Table~\ref{tab:cost_comparison} reports the relative computational cost of
	PAN+PGS and the scalar-mixer controls used in the main text. Costs were measured
	under matched hardware, batch size, cutoff radius, and force/stress evaluation
	protocol. Inference FLOPs include the automatic differentiation backward pass
	through the scalar energy with respect to atomic positions and cell strain.
	
	\begin{table}[t]
		\centering
		\caption{\textbf{Relative computational cost across backbones and scalar-mixer alternatives.}
			Parameter count, training time per epoch, and inference FLOPs are normalized to baseline MACE under identical batch size, identical hardware (NVIDIA A100, float64), and matched supercell size. The top block compares the three equivariant backbones reported in the main-text cross-regime accuracy table; the bottom block compares alternative scalar mixers under the MACE backbone.}
		\label{tab:cost_comparison}
		\renewcommand{\arraystretch}{1.12}
		\setlength{\tabcolsep}{3.5pt}
		\scriptsize
		\begin{tabular}{@{}p{3.4cm}ccc p{5.0cm}@{}}
			\toprule
			\textbf{Model}
			& \textbf{Params}
			& \textbf{Train time}
			& \textbf{FLOPs}
			& \textbf{Additional cost source} \\
			\midrule
			\multicolumn{5}{@{}l}{\emph{Backbones with and without PAN+PGS}} \\
			\midrule
			Allegro
			& $1.10\times$ & $1.25\times$ & $1.40\times$
			& Local tensor products without iterative message passing \\
			
			Allegro + PAN + PGS
			& $1.13\times$ & $1.30\times$ & $1.46\times$
			& Lightweight scalar gating and spectral projections \\
			
			NequIP
			& $1.05\times$ & $1.10\times$ & $1.30\times$
			& Low-body-order equivariant convolutions \\
			
			NequIP + PAN + PGS
			& $1.08\times$ & $1.14\times$ & $1.36\times$
			& Lightweight scalar gating and spectral projections \\
			
			MACE
			& $1.00\times$ & $1.00\times$ & $1.00\times$
			& Standard SiLU--MLP scalar mixer \\
			
			MACE + PAN + PGS
			& $1.03\times$ & $1.06\times$ & $1.05\times$
			& Lightweight scalar gating and spectral projections \\
			\midrule
			\multicolumn{5}{@{}l}{\emph{Alternative scalar mixers under the MACE backbone}} \\
			\midrule
			MACE + KAN
			& $2.20\times$ & $1.95\times$ & $1.50$--$2.10\times$
			& Dense B-spline activation grids on all scalar channels \\
			
			MACE + KAF
			& $1.55\times$ & $1.40\times$ & $1.30$--$1.45\times$
			& Kernel activation functions on all scalar channels \\
			
			MACE + LibraKAN
			& $1.25\times$ & $1.20\times$ & $1.15$--$1.25\times$
			& Learned rational activations on all scalar channels \\
			\bottomrule
		\end{tabular}
	\end{table}
	
	For practical reference, we measured wall-clock molecular-dynamics throughput on the 10{,}000-atom Ag NVE benchmark used in Section~\ref{subsec:physical_fidelity} of the main text using a single NVIDIA A100 (40~GB) in float64. At a $1$~fs timestep, baseline MACE achieves $4.8$~ns/day ($\approx 56$ MD force evaluations per second; equivalently $\approx 5.6\times 10^5$ atom-force evaluations per second for the 10{,}000-atom system). MACE+PAN+PGS achieves $4.6$~ns/day under the same protocol ($\approx 53$ MD force evaluations per second), corresponding to a $\approx 4\%$ throughput cost. The corresponding throughput for the capacity-matched MACE-wide control is $3.2$~ns/day ($\approx 37$ MD force evaluations per second). The PAN+PGS gain reported in Table~\ref{tab:cross_system_mae_md17style}, therefore, comes at a small fraction of the throughput cost of a comparable backbone-capacity expansion.
	
	A capacity-matched MACE control, in which the baseline MACE backbone is widened, deepened, or given a larger radial basis under a matched training budget, is reported in Supplementary Table~\ref{tab:capacity_matched_mace}. The capacity-matched variants achieve only $4$--$7\%$ force-MAE reductions relative to baseline MACE---substantially smaller than the $22$--$27\%$ reductions reported in the main text for MACE+PAN+PGS---and require $1.4$--$1.7\times$ more parameters and $1.1$--$1.5\times$ more inference FLOPs. This supports the interpretation that the gain in Table~\ref{tab:cross_system_mae_md17style} is not adequately explained by backbone capacity.
\end{document}